\documentclass{article}

\usepackage{microtype}
\usepackage{graphicx}
\usepackage{subcaption}
\usepackage{booktabs} %

\usepackage{hyperref}

\usepackage[accepted]{icml2026}

\usepackage{amsmath}
\usepackage{amssymb}
\usepackage{mathtools}
\usepackage{amsthm}
\usepackage{enumitem}
\usepackage{rotating}
\usepackage{tabularray}
\usepackage{multirow}
\usepackage{pifont}
\usepackage{xcolor}
\usepackage{titletoc}
\usepackage[table]{xcolor} %
\usepackage{wrapfig}
\usepackage{colortbl}   %
\usepackage{makecell} 
\usepackage[capitalize,noabbrev]{cleveref}

\theoremstyle{plain}
\newtheorem{theorem}{Theorem}[section]

\newtheorem{lemma}{Lemma}
\newtheorem{corollary}{Corollary}
\theoremstyle{definition}
\newtheorem{definition}{Definition}
\newtheorem{assumption}{Assumption}
\theoremstyle{remark}

\icmltitlerunning{UB-SMoE: Universally Balanced Sparse Mixture-of-Experts for Resource-adaptive Federated Fine-tuning of Foundation Models}

\begin{document}

\twocolumn[
  \icmltitle{UB-SMoE: Universally Balanced Sparse Mixture-of-Experts for Resource-adaptive Federated Fine-tuning of Foundation Models}

  \icmlsetsymbol{equal}{*}

  \begin{icmlauthorlist}
    \icmlauthor{Van-Tuan Tran}{tcd}
    \icmlauthor{Hong-Hanh Nguyen-Le}{ucd}
    \icmlauthor{Marco Ruffini}{tcd}
    \icmlauthor{Merim Dzaferagic}{tcd}
  \end{icmlauthorlist}

  \icmlaffiliation{tcd}{School of Computer Science and Statistics, Trinity College Dublin, Ireland}
  \icmlaffiliation{ucd}{School of Computer Science, University College Dublin, Ireland}

  \icmlcorrespondingauthor{Merim Dzaferagic}{merim.dzaferagic@tcd.ie}

  \icmlkeywords{Federated Learning, Foundation Models, PEFT, LoRA}

  \vskip 0.3in
]

\printAffiliationsAndNotice{}  %

\begin{abstract}
  Heterogeneous LoRA-rank methods address system heterogeneity in federated fine-tuning of foundation models by assigning client-specific ranks based on computational capabilities. However, these methods achieve only marginal computational savings, as dense feed-forward computations dominate. Sparse Mixture-of-Experts (SMoE) provides a promising alternative through conditional computation, yet we identify that its naive application to heterogeneous federated settings introduces two critical discordances: (i) expert utilization imbalance and (ii) non-differentiability of Top-K routing. Our convergence analysis demonstrates that these discordances lead to degraded convergence, particularly for resource-constrained clients. To address these challenges, we propose Universally Balanced Sparse Mixture-of-Experts (UB-SMoE), which introduces Dynamic Modulated Routing (DMR) to rebalance expert utilization, and Universal Pseudo-Gradient (PG) to reconstruct learning signals for non-activated experts. These mechanisms form a self-reinforcing cycle that maintains expert viability across heterogeneous clients. Experiments on benchmarks show that UB-SMoE achieves up to $45.0\%$ computational reduction on low-resource clients while improving their performance by $8.7 \times$ compared to existing heterogeneous LoRA-rank methods.
\end{abstract}

\section{Introduction}
\label{sec:introduction}

Low-Rank Adaptation (LoRA) \cite{hulora} has emerged as a predominant paradigm for federated fine-tuning of large-scale Foundation Models (FMs) \cite{bian2025survey}. Specifically, LoRA freezes the pre-trained weights $W_0 \in \mathbb{R}^{d \times l}$ and injects low-rank trainable decomposition matrices $B \in \mathbb{R}^{d \times r}$ and $A \in \mathbb{R}^{r \times l}$. By optimizing only the update $\Delta W = \frac{\alpha}{r}BA$ (where rank $r \ll \min(d,l)$ and $\alpha$ is a scaling factor), LoRA significantly reduces the trainable parameters, making adaptation of large-scale models computationally feasible \cite{bai2024federated, cho2024heterogeneous}. 

While combining LoRA with FL is promising, it is fundamentally challenged by the unique characteristic of distributed networks: \textbf{system heterogeneity}. Real-world distributed networks comprise devices with vastly different computational budgets (i.e., low-constrained vs. high-rich devices), implying that \emph{a single model configuration for all devices is ineffective}. We cannot leverage the full computational power of high-end clients if the global model is restricted to a small size enough for edge devices. Conversely, a large, high-capacity model cannot be deployed on low-resource clients due to computational constraints. We need resource-adaptive fine-tuning, where models scale their capacity to match client capabilities. 

Existing heterogeneous LoRA-rank methods attempt to address the heterogeneous challenge by assigning client-specific ranks $r_c$ based on computational capabilities \cite{cho2024heterogeneous, bai2024federated, wangflora}. However, these methods provide only marginal computational savings for resource-constrained clients. The fundamental limitation stems from the architectural mismatch between LoRA adaptation and the dominant computational bottleneck. As shown in Tab. \ref{tab:complexity}, LoRA injects low-rank updates where the adaptation cost scales as $\mathcal{O}(r_c(d+l))$ per layer. However, since $r_c \ll (l \cdot d)$, this cost is negligible compared to the FFN computation of $\mathcal{O}(d \cdot l)$, which remains unchanged regardless of the client's ranks. Consequently, heterogeneous LoRA-rank methods provide only $\sim 5\%$ computation reduction for low-resource clients (Fig. \ref{fig:SMoE_flops} in Appendix) even at the lowest budget level. Furthermore, the merged weights $W_0 + \Delta W$ remain dense during inference, leaving deployment latency unchanged across all client tiers.

\begin{table*}[ht]
\centering
\resizebox{\textwidth}{!}{%
\begin{tabular}{l|cc|cccc}
\toprule
\multirow{2}{*}{\textbf{Aspect}} & \multicolumn{2}{c|}{\textbf{Heterogeneous Sparsity}} & \multicolumn{4}{c}{\textbf{Heterogeneous LoRA-rank}} \\
\cmidrule(lr){2-3} \cmidrule(lr){4-7}
 & \textbf{UB-SMoE (ours)} & \textbf{A$^3$SMoE} & \textbf{FLoRA} & \textbf{HetLoRA} & \textbf{FlexLoRA} & \textbf{FLoRIST} \\
\midrule
\multirow{2}{*}{Client-side computation} 
& $\mathcal{O}\big(\Gamma \cdot \mathcal{B} \cdot L(M \cdot d + K_c \cdot d \cdot l)\big)$ & \multirow{2}{*}{$\mathcal{O}\big(\Gamma \cdot \mathcal{B} \cdot L(M \cdot d + K_c \cdot d \cdot l)\big)$} & \multirow{2}{*}{$\mathcal{O}\big(\Gamma \cdot \mathcal{B} \cdot L(d \cdot l + r_c(d+l))\big)$} & $\mathcal{O}\big(\Gamma \cdot \mathcal{B} \cdot L(d \cdot l + r_c(d+l))$ & \multirow{2}{*}{$\mathcal{O}\big(\Gamma \cdot \mathcal{B} \cdot L(d \cdot l + r_c(d+l))\big)$} & \multirow{2}{*}{$\mathcal{O}\big(\Gamma \cdot \mathcal{B} \cdot L(d \cdot l + r_c(d+l))\big)$} \\
& $+ \Gamma \cdot L \cdot M \cdot r(d+l)$ & & & $+ L \cdot r_c(d+l)\big)$ & & \\
\midrule
Server-side computation 
& $\mathcal{O}(C \cdot L \cdot M \cdot r(d+l) + L \cdot M)$ 
& $\mathcal{O}(C \cdot L \cdot M \cdot r(d+l))$ 
& $\mathcal{O}(C \cdot L \cdot r_{\text{max}}(d+l))$ 
& $\mathcal{O}(C \cdot L \cdot r_{\text{max}} \cdot d \cdot l)$ 
& $\mathcal{O}(C \cdot L(d^2 \cdot l + r_{\text{max}} \cdot d \cdot l))$ 
& $\mathcal{O}(L \cdot R^2(d+l+R) + L \cdot p \cdot R(d+l))$ \\
\midrule
Upload per client 
& $\mathcal{O}(L \cdot M \cdot r(d+l) + L (M+1))$ 
& $\mathcal{O}(L \cdot M \cdot r(d+l) + L \cdot M)$ 
& $\mathcal{O}(L \cdot r_c(d+l))$ 
& $\mathcal{O}(L \cdot r_c(d+l))$ 
& $\mathcal{O}(L \cdot r_c(d+l))$ 
& $\mathcal{O}(L \cdot r_c(d+l))$ \\
\midrule
Download per client 
& $\mathcal{O}(2 \cdot L \cdot M \cdot r(d+l) + L \cdot M)$ 
& $\mathcal{O}(L \cdot M \cdot r(d+l))$ 
& $\mathcal{O}(L \cdot r_c(d+l))$ 
& $\mathcal{O}(L \cdot r_c(d+l))$ 
& $\mathcal{O}(L \cdot r_c(d+l))$ 
& $\mathcal{O}(L \cdot p(d+l))$ \\
\midrule
\midrule
Low-resource effective comp. 
& \textcolor{green}{\ding{51}\ding{51}\ding{51}} 
& \textcolor{green}{\ding{51}\ding{51}\ding{51}} 
& \textcolor{red}{\ding{55}}  
& \textcolor{red}{\ding{55}}  
& \textcolor{red}{\ding{55}}  
& \textcolor{red}{\ding{55}} \\
\midrule
Inference adaptability 
& \textcolor{green}{\ding{51}}  
& \textcolor{green}{\ding{51}}  
& \textcolor{red}{\ding{55}}  
& \textcolor{red}{\ding{55}}  
& \textcolor{red}{\ding{55}}  
& \textcolor{red}{\ding{55}}  \\
\midrule
Low-resource performance ($\beta_1$) 
& \textbf{0.3936} 
& 0.3629 
& 0.0094 
& 0.0079 
& 0.0456 
& 0.0112 \\
\midrule
High-resource performance ($\beta_4$) 
& \textbf{0.5240} 
& 0.3410 
& 0.2996 
& 0.4580 
& 0.4563 
& 0.2724 \\
\midrule
Average performance 
& \textbf{0.4267} 
& 0.3861 
& 0.1517 
& 0.1874 
& 0.3303 
& 0.1480 \\
\bottomrule
\end{tabular}%
}
\caption{Computation and communication complexity analysis between UB-SMoE and other heterogeneous methods. Here $M$ denotes the number of experts, $L$ the number of SMoE layers, $r$ the fixed LoRA rank for SMoE methods, $r_c$ the client-specific rank, $r_{\text{avg}}$ the average rank, $d$ the hidden dimension, $l$ the feed-forward network (FFN) intermediate dimension, $K_c$ the client sparsity, $\Gamma$ local iterations, $\mathcal{B}$ batch size, and $C$ number of clients. For FLoRIST, $R = \sum_c r_c$ and $p$ is the truncated rank. Detailed analysis is provided in Appendix.}
\label{tab:complexity}
\end{table*}

Recently, A$^3$SMoE \cite{tran2025revisiting} integrated Sparse Mixture of Experts (SMoE) \cite{shazeer2017outrageously} into federated fine-tuning of FMs, offering a \emph{natural mechanism for resource adaptability}: high-resource clients activate more experts ($K_{\text{high}}$) while low-resource clients activate fewer ($K_{\text{low}}$) to meet computational budgets. However, naively applying SMoE to heterogeneous federated settings introduces critical optimization discordances: 
\begin{enumerate}
    \item \textbf{Expert utilization imbalance}: Experts activated by high-resource clients receive frequent updates and become over-specialized, while those relevant to low-resource clients remain severely under-utilized. This causes a \textit{"rich-get-richer"} dynamic phenomenon.
    \item \textbf{Non-differentiability of Top-K routing}: Non-activated experts receive zero gradients because the gating function $\gamma_i(x)$ is typically non-zero only for the selected experts. Consequently, low-resource clients with high sparsity leave most experts without learning signals. 
\end{enumerate}
Our convergence analysis (Theorem \ref{theorem:1} in Sec. \ref{sec:convergence-analysis}) shows that these discordances introduce a \emph{bias} in the stochastic gradient estimator that creates an \textbf{irreducible error floor} in the global objective. This error floor scales inversely with client computational budgets, explaining why resource-constrained clients systematically underperform in federated SMoE systems.

To address these discordances, we propose \textbf{Universally Balanced Sparse Mixture-of-Experts (UB-SMoE)}, which reformulates the expert routing paradigm for federated fine-tuning FMs through two novel mechanisms. (i) A \textbf{Dynamic Modulated Routing (DMR)} mechanism is introduced to adjust expert selection probabilities based on global utilization statistics. DMR promotes under-utilized experts while preserving top-performing experts for resource-constrained clients, preventing expert collapse. (ii) A \textbf{Universal Pseudo-gradient (PG)} mechanism is designed to approximately reconstruct gradients for non-activated experts, ensuring \emph{every expert receives meaningful updates in every round} regardless of client sparsity. These mechanisms create a \textbf{self-reinforcing cycle}: PGs maintain expert viability, enabling DMR to effectively balance utilization, which in turn generates real gradients that further refine expert parameters. 

In summary, our contributions include:
\begin{itemize}
	\item We identify two critical discordances in federated SMoE fine-tuning and provide a theoretical convergence analysis of their impact on the global convergence. This analysis proves that expert utilization imbalance and gradient sparsity induce an irreducible error floor scaling inversely with client budgets. 
	\item We propose UB-SMoE, a novel method where DMR and PG form a self-reinforcing cycle: PG maintains expert viability through approximate gradients, enabling DMR to effectively balance utilization, which generates real gradients that further refine all experts.
	\item We validate UB-SMoE on the commonsense reasoning task with 8 datasets \cite{hu2023llm} and the telecommunication domain \cite{holm2021bidirectional} to show that our method outperforms other federated fine-tuning methods, especially in heterogeneous settings.
\end{itemize}

\begin{figure*}[ht]
	\centering
	\includegraphics[width=1\linewidth]{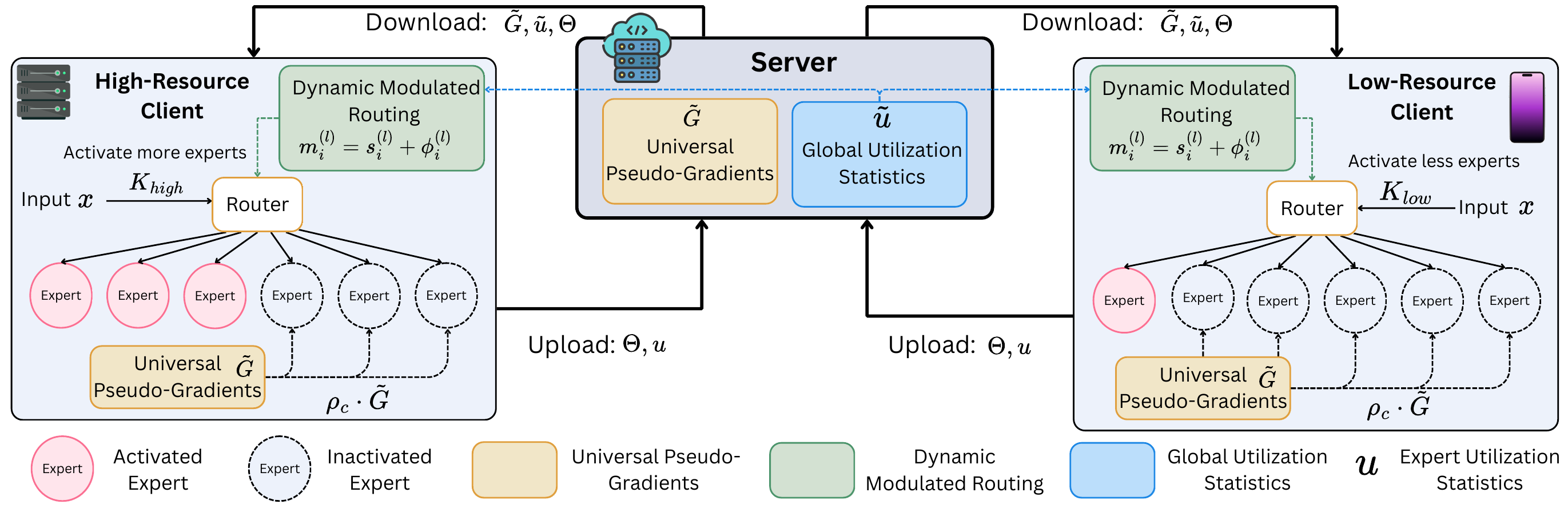}
	\caption{Overview of the UB-SMoE architecture for resource-adaptive federated fine-tuning. The system operates in five steps: (1) The server computes global utilization statistics $\tilde{u}$ tracking expert usage across clients; (2) Each client's router applies Dynamic Modulated Routing, modulating logits via $m^{(l)}_i = s^{(l)}_i + \phi^{(l)}_i$ to promote under-utilized but relevant experts; (3) Clients activate a budget-specific number of experts $K_c$; (4) During local training, non-activated experts receive pseudo-gradients scaled by $\rho_c$ to compensate for limited coverage in resource-constrained clients; (5) Clients send parameter updates and utilization statistics back to the server for aggregation.}
	\label{fig:UB-SMoE}
\end{figure*}

\section{Preliminaries}
In this section, we provide formal definitions of Sparse Mixture-of-Experts and heterogeneous federated fine-tuning. For foundational definitions of LoRA and homogeneous federated fine-tuning, we refer readers to Appendix \ref{sec:definition}. 

\begin{definition} \textbf{(Federated Fine-tuning with Heterogeneous LoRA)} Let $\beta_c \in [0,1]$ denote the computation budget for each client $c$. In a heterogeneous setting, clients utilize client-specific ranks $r_c$, determined by their capability. The rank is defined as $r_c = \lfloor r_{max}\beta_{c}\rfloor$ ($r_{\min} \leq r_c \leq r_{\max}$). This results in client-specific trainable parameters $\Theta_c = \{B_c, A_c\}$, where $B_{c}\in\mathbb{R}^{d\times r_{c}}$ and $A_{c}\in\mathbb{R}^{r_{c}\times l}$. The heterogeneous LoRA-rank optimization problem is defined:
	\begin{equation}
	    \min_{{\Theta_c}} F({\Theta_c}) = \sum_{c=1}^{C}p_{c}F_{c}(\Theta_c)
	\end{equation}
\end{definition}

\begin{definition} (\textbf{Sparse Mixture-of-Experts (SMoE) Layer}). An SMoE layer maps an input $x \in \mathbb{R}^d$ to an output $y \in \mathbb{R}^d$ and is composed of three components: a collection of $M$ expert functions $\{\mathcal{E}_i(\cdot; W_i^{(e)})\}_{i=1}^M$ where each expert $\mathcal{E}_i: \mathbb{R}^d \rightarrow \mathbb{R}^d$ is a learnable transformation
    parameterized by weights $W_i^{(e)}$; a routing function $\mathcal{R}(\cdot; W^{(r)}): \mathbb{R}^d \rightarrow \mathbb{R}^M$ producing affinity scores (logits) $s = \mathcal{R}(x; W^{(r)}) = W^{(r)}x$; and a sparsity parameter $K$ ($1 \le K \ll M$) specifying the number of experts activated per input.
    \label{def:SMoE}
\end{definition}
We define the activation set $\mathcal{A}(x) \subset [M]$ as the indices corresponding to the $K$ largest elements of $s$. The layer output $y$ is computed as: $y=\sum_{i=1}^{M}\gamma_{i}(s)\cdot\mathcal{E}_{i}(x;W_{i}^{(e)})$,
where the gating weight $\gamma_{i}(s)$ is defined by the softmax function restricted to the activated logits:
\begin{equation}
    \gamma_{i}(s) =
        \begin{cases}
        \frac{\exp(s_{i})}{\sum_{j\in\mathcal{A}(x)}\exp(s_{j})} & \text{if } i\in\mathcal{A}(x) \\
        0 & \text{otherwise}.
        \end{cases}
    \label{eq:gating-weight}
\end{equation}

\begin{definition}(\textbf{Federated Fine-tuning with SMoE-LoRA}) In this paradigm, the SMoE parameters (expert weights ${W^{(e)}_i}$) and router weights $W^{(r)}$) are adapted using LoRA (Definition \ref{def:fine-tune-LoRA}). Let $\Theta$ denote the collection of all globally shared trainable LoRA parameters for the SMoE layers. Clients have heterogeneous computational capabilities $\{\beta_c\}_{c=1}^C$. Let $K_{max}$ be the maximum allowable sparsity. Each client $c$ activates a client-specific number of experts, $K_{c}=\lfloor K_{max}\beta_{c}\rfloor$. In this heterogeneous setting, the federated optimization problem is formulated as:
    \begin{equation}
        \min_{\Theta} F(\Theta)=\sum_{c=1}^{C}p_{c}F_{c}(\Theta;K_{c})
        \label{eq:SMoE-LoRA}
    \end{equation}
	where $F_{c}(\Theta;K_{c})$ denotes the local objective function evaluated using the shared parameters $\Theta$ under the client-specific sparsity constraint $K_c$.
\end{definition}

\section{SMoE in Heterogeneous Federated Fine-Tuning}
\label{sec:discordance}
While SMoE architectures have demonstrated remarkable success in centralized settings \cite{sun2024improving, zoph2022st}, their direct application to heterogeneous federated fine-tuning presents fundamental challenges. In this section, we identify two critical discordances that contribute to performance degradation.

\subsection{Discordance 1: Expert Utilization Imbalance} \label{subsec:discordance1}

The expert utilization imbalance in federated fine-tuning manifests at two distinct levels: local imbalance within individual clients and global imbalance across the federated system.

\begin{definition}(\textbf{Per-Round Expert Utilization Rates}) For a federated system with $C$ clients and an SMoE layer with experts $\mathcal{E} = \{1, \dots, M\}$:
	\begin{itemize}
		\item \textbf{Local utilization rate}: $U_{c,i}^{local}=\frac{\sum_{x\in\mathcal{D}_{c}}\mathbb{I}(i\in\mathcal{A}_{c}(x))}{|\mathcal{D}_{c}|\cdot K_{c}}$,
		\item \textbf{Global utilization rate}: $U_{i}^{global}= \sum_{c=1}^{C} p_c \cdot U_{c,i}^{local}$,
	\end{itemize}
    where $\mathcal{A}_c(x)$ denotes the $K_c$ activated experts for input $x$ at client $c$, $K_{c}=\lfloor K_{\max}\beta_{c}\rfloor$ is the number of experts client $c$ can activate based on its budget $\beta_c$. 
\end{definition}

Different from the centralized setting \cite{zoph2022st,qiu2025demons}, expert utilization imbalance in federated fine-tuning manifests at two levels. \textbf{Local imbalance} occurs when routers preferentially select certain experts, leading to parameter redundancy. \textbf{Global imbalance} emerges from heterogeneous computational budgets: high-resource clients (larger $\beta_c$, thus higher $K_c$) develop well-trained experts, while low-resource clients maintain under-trained experts with limited coverage. These effects compound, making the utilization imbalance significantly more severe in federated settings than in centralized training.

\subsection{Discordance 2: Non-Differentiability of Top-K Routing}\label{subsec:discordance2}
The Top-K routing mechanism introduces a fundamental optimization challenge through its discrete selection process \cite{liu2023sparse,panda2024dense}. In heterogeneous settings where $|\mathcal{A}_c(x)| = K_c$, the gradient with respect to router parameters $W^{(r)}$ is:
\begin{equation}
\nabla_{W^{(r)}} \mathcal{L}_c = \sum_{i \in \mathcal{A}_c(x)} \frac{\partial \mathcal{L}_c}{\partial \gamma_i} \nabla_{W^{(r)}} \gamma_i.
\end{equation}
Since non-activated experts have $\gamma_i = 0$ (Definition \ref{def:SMoE}), they receive zero gradients. Only $K_c$ out of $M$ experts provide learning signals, creating an \textbf{exploration-exploitation imbalance}: without gradient signals from non-activated experts, the router exploits currently preferred experts rather than exploring optimal alternatives. It is even exacerbated with low-resource clients, where high expert coverage is unavailable due to computational constraints.

The non-differentiability of the Top-K routing function becomes particularly severe in federated environments where clients activate different numbers of experts based on their computational capabilities. For a client $c$ with computational budget $\beta_c$ activating $K_c = \lfloor K_{\max} \beta_c \rfloor$ experts per input, the expected number of gradient updates per expert is proportional to $\frac{K_c}{M} \Gamma_c$, where $\Gamma_c$ denotes the local training iterations. Since low-resource clients activate fewer experts ($K_c^{\text{low}} < K_c^{\text{high}}$), their experts receive disproportionately fewer gradient updates, resulting in slower convergence compared to high-resource clients.

\section{Convergence Analysis}
\label{sec:convergence-analysis}

We analyze client-specific convergence in heterogeneous SMoE settings, demonstrating how the two discordances: expert utilization imbalance and non-differentiability of Top-K routing, lead to degraded convergence for resource-constrained clients.

Our analysis builds upon the established field of optimization with biased gradient estimators. While standard Stochastic Gradient Descent (SGD) typically relies on unbiased gradient estimates \cite{bottou2018optimization}, there are scenarios that often introduce bias, such as compression, quantization \cite{ajalloeian2020convergence, hu2020biased, karimi2019non}, or, as in this case, \textbf{conditional computation} in SMoE. In the context of federated SMoE, the optimization process faces two primary challenges that affect the gradient estimation: high variance due to sparse expert activation and bias introduced by the discrete routing mechanism. We first introduce the crucial definitions to characterize the gradient estimation process under sparse activation.

\subsection{Gradient Bias in Sparse MoE}

Let $\Theta$ denote the collection of all trainable LoRA parameters. We decompose $\Theta$ into parameters specific to the experts and the router: $\Theta = {\Theta_i^{(e)}}_{i=1}^M \cup \Theta^{(r)}$, where $\Theta_i^{(e)}$ are the LoRA parameters adapting the expert weights $W_i^{(e)}$, and $\Theta^{(r)}$ adapts the router weights $W^{(r)}$. The optimization objective is $F(\Theta)$ (Eq. \ref{eq:SMoE-LoRA}).

\begin{definition} \textbf{(Sparse Stochastic Gradient)}. Consider an SMoE layer at client $c$. For an input $x$, the stochastic gradient with respect to the parameters $\Theta_i^{(e)}$ of expert $i$ under the Top-$K_c$ routing mechanism is: 
\begin{equation}
    g_{c,i}(\Theta; x) = \mathbb{I}(i \in \mathcal{A}_c(x)) \cdot \nabla_{\Theta_i^{(e)}} \mathcal{L}_c(\Theta; x),
\end{equation}
where $\mathcal{L}_c$ is the local loss, and $\mathcal{A}_c(x)$ is the set of $K_c$ activated experts. Non-activated experts receive zero gradients.
    \label{def:sparse-gradient}
\end{definition}

\begin{definition} \textbf{(Expert Activation Probability and Conditional Gradient)}. Let $p_{c,i}(\Theta) = P_{x\sim\mathcal{D}_c}(i \in \mathcal{A}_c(x))$ be the probability that expert $i$ is activated by client $c$. We define the conditional expected gradient for $\Theta_i^{(e)}$ as:
\begin{equation}
    \nabla F_{c|i}(\Theta) = \mathbb{E}_{x\sim\mathcal{D}_c}[\nabla_{\Theta_i^{(e)}} \mathcal{L}_c(\Theta; x) | i \in \mathcal{A}_c(x)],
\end{equation}
where $F_{c|i}(\Theta)$ is the average local loss calculated only over the data $x$ from client $c$ that activates expert $i$. The expected sparse gradient estimator is $\mathbb{E}[g_{c,i}(\Theta)] = p_{c,i}(\Theta) \cdot \nabla F_{c|i}(\Theta)$.
    \label{def:activa-probability}
\end{definition}

The difference between this expected sparse gradient and the true local gradient characterizes the optimization bias introduced by the routing mechanism.

\begin{definition} \textbf{(Gradient Estimation Bias)}. The bias of the sparse gradient estimator with respect to $\Theta_{i}^{(e)}$ at client $c$ is: $B_{c,i}(\Theta)=\nabla_{\Theta_{i}^{(e)}}F_{c}(\Theta)-\mathbb{E}[g_{c,i}(\Theta)]$. The aggregate bias across the federated system is $B^e(\Theta) = \sum_{c=1}^{C}p_{c}B_{c}^{(e)}(\Theta)$.
    \label{def:gradient-bias}
\end{definition}

The bias arises because the sparse gradient only reflects the loss landscape conditional on the expert being active. If an expert is rarely activated (low $p_{c,i}(\Theta)$), the estimator deviates significantly from the true gradient.

\subsection{Convergence Analysis}
We rely on standard assumptions used in the convergence analysis of FL \cite{karimireddy2020scaffold, li2019convergence, li2023convergence, koloskova2020unified} and biased optimization \cite{hu2020biased, demidovich2023guide, ajalloeian2020convergence}, adapted to the LoRA parameters space $\Theta$. 

\begin{assumption} (\textbf{$L$-Smoothness}). The local objective functions $F_c(\Theta)$ are $L$-smooth for all $\Theta_1, \Theta_2$, $||\nabla F_c(\Theta_1) - \nabla F_c(\Theta_2)|| \le L||\Theta_1 - \Theta_2||$. 
	\label{assum:smoothness}
\end{assumption}

This assumption is standard in FL convergence analysis \cite{liconvergence, li2023convergence}. 

\begin{assumption}(\textbf{PL Condition}) The global objective function $F(\Theta)$ satisfies the Polyak-Łojasiewicz condition with constant $\mu>0$. That is, $\frac{1}{2}||\nabla F(\Theta)||^{2}\ge \mu(F(\Theta)-F(\Theta^{*}))$, where $\Theta^*$ is the global optimum.
    \label{assum:LP}
\end{assumption}

The PL condition is a standard local regularity assumption in non-convex FL analyses, enabling convergence-rate characterization without strong convexity \citep{ying2026exact, liu2025analysis, bao2025efskip}. In our setting, the full foundation model is frozen, and optimization is restricted to lightweight LoRA adapters, so the trainable search space is substantially lower-dimensional than full-model training. Therefore, we assume PL regularity only for the adapter optimization dynamics, not for the full transformer loss. 

\begin{assumption}\textbf{(Bounded Variance and Gradients \cite{bottou2018optimization, gower2019sgd})}. The variance of the stochastic sparse gradients (Definition \ref{def:sparse-gradient}) is bounded by $\sigma^2$. The expected squared norm of the stochastic gradients is bounded: $\mathbb{E}[\|g_c(\Theta)\|^2] \leq G^2$.
    \label{assum:bounded-variance-gradient}
\end{assumption} 

The bounded-gradient assumption is widely used in convergence analyses of heterogeneous and model-adaptive FL \citep{zhou2023every, han2024convergence, lan2024asynchronous, zong2025convergence}.  We use it to isolate the additional bias introduced by sparse Top-$K$ routing, which is the phenomenon studied in Theorem \ref{theorem:1}. 

\begin{assumption}\textbf{(Bounded Gradient Divergence)}. We assume bounded divergence between the conditional expected gradient and the true local gradient: $\|\nabla F_{c|i}(\Theta) - \nabla_{\Theta_i^{(e)}} F_c(\Theta)\|^2 \leq \delta^2$.
    \label{assum:bounded-divergence}
\end{assumption}

The assumption \ref{assum:bounded-divergence} characterizes the degree of utilization of the experts and the distribution of the data routed to them. It is similar to bounding data heterogeneity in standard FL analysis \cite{li2019convergence}, constraining how much the data distribution activating a specific expert differs from the overall local distribution $\mathcal{D}_c$. We can quantify its impact on the bias. Using the triangle inequality: $||B_{c,i}(\Theta)|| \le (1-p_{c,i}(\Theta))||\nabla_{\Theta_{i}^{(e)}}F_{c}(\Theta)|| + p_{c,i}(\Theta)\delta$. If $\delta^2$ is small, the magnitude of the bias is primarily driven by the activation probability: $\|B_{c,i}(\Theta)\| \approx (1-p_{c,i}(\Theta))\|\nabla_{\Theta_i^e} F_c(\Theta)\|$.

\begin{assumption}\textbf{(Lipschitz Continuity of Bias)}. The aggregate bias function $B(\Theta) = \nabla F(\Theta) - \sum_c p_c \mathbb{E}[g_c(\Theta)]$ is $L_B$-Lipschitz. That is, $\|B(\Theta_1) - B(\Theta_2)\| \leq L_B \|\Theta_1 - \Theta_2\|$. Let $L_{eff} = L + L_B$.
	\label{assum:bias}
\end{assumption}

Assumption \ref{assum:bias} is crucial for the convergence analysis of SGD with biased gradients \cite{hu2020biased, ajalloeian2020convergence}. In the context of SMoE, although the routing function (Top-K) is discontinuous, the bias depends on the \textit{expected sparse gradient}, which is determined by the activation probabilities $p_{c,i}(\Theta)$. This assumption implies that the expectation over the data distribution smooths out the discontinuities of the Top-K operation. Furthermore, we require the curvature to dominate the bias sensitivity, specifically $\mu^{2}>4L_{B}^{2}$ \cite{ajalloeian2020convergence}. 

We now present Theorem \ref{theorem:1}, demonstrating that the use of biased sparse gradients leads to an error floor.

\begin{theorem}\textbf{(Global Convergence Rate under Heterogeneous Sparsity)}. Under Assumptions \ref{assum:smoothness}-\ref{assum:bias}, consider the federated SMoE fine-tuning process where clients perform $\Gamma$ local updates using the biased sparse gradient estimator with a decaying learning rate $\eta_t = \frac{1}{\sqrt{\Gamma(t+1)}}$. Let $\Theta^*$ be the global optimum of $F(\Theta)$. The expected optimality gap after $T$ communication rounds satisfies:
    \begin{align}
        \begin{split}
             \mathbb{E}[F(\Theta^T)] - F(\Theta^*) &\leq \underbrace{\mathcal{O}\left( \frac{G^2(L + L^2_{eff})}{\mu'\sqrt{T\Gamma}} \right)}_{\text{Optimization Error}} \\
             &+ \underbrace{B_{SMoE}}_{\text{Bias Error}},
        \end{split}
    \end{align}
    where the bias error term is given by $B_{SMoE} = \frac{2||B(\Theta^{*})||^{2}}{\mu^{\prime}}$, and $\mu' = \mu - \frac{4L_B^2}{\mu} > 0$ is the bias-penalized curvature.
    \label{theorem:1}
\end{theorem}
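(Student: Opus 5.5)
The plan is to treat the whole federated process as biased SGD on the global objective $F$ and run a one-step descent argument, following \cite{ajalloeian2020convergence}. I will first write the virtual averaged iterate $\bar\Theta^{t,k}=\sum_c p_c\Theta_c^{t,k}$ for local step $k\in\{0,\dots,\Gamma-1\}$ of round $t$. Its update is $\bar\Theta^{t,k+1}=\bar\Theta^{t,k}-\eta_t\sum_c p_c g_c(\Theta_c^{t,k})$. The expected direction at a common point $\Theta$ decomposes as
\[
\textstyle\sum_c p_c\mathbb{E}[g_c(\Theta)] = \nabla F(\Theta)-B(\Theta),
\]
using Definition \ref{def:gradient-bias} and Assumption \ref{assum:bias}.

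\textbf{Descent step.} I will apply $L$-smoothness (Assumption \ref{assum:smoothness}) to get
\[
\mathbb{E}F(\bar\Theta^{t,k+1})\le F(\bar\Theta^{t,k})-\eta_t\langle\nabla F(\bar\Theta^{t,k}),\mathbb{E}\textstyle\sum_c p_c g_c\rangle+\tfrac{L\eta_t^2}{2}G^2 .
\]
The second-moment term is controlled by Assumption \ref{assum:bounded-variance-gradient}. I will then evaluate the gradients at the local iterates $\Theta_c^{t,k}$ rather than at $\bar\Theta^{t,k}$. The resulting mismatch in $\nabla F-B$ is Lipschitz with constant $L_{eff}=L+L_B$, and it is bounded by $L_{eff}\|\Theta_c^{t,k}-\bar\Theta^{t,k}\|$. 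The standard client-drift bound gives $\mathbb{E}\|\Theta_c^{t,k}-\bar\Theta^{t,k}\|^2\le \eta_t^2\Gamma^2G^2$, again from bounded gradients. Young's inequality then turns this mismatch into an $\mathcal{O}(\eta_t^2\Gamma L_{eff}^2G^2)$ term plus a small multiple of $\|\nabla F\|^2$.

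\textbf{Handling the bias.} For the cross term, I will write $\langle\nabla F,B(\Theta)\rangle$ and split
\[
B(\Theta)=B(\Theta^*)+\bigl(B(\Theta)-B(\Theta^*)\bigr).
\]
By Assumption \ref{assum:bias}, $\|B(\Theta)-B(\Theta^*)\|\le L_B\|\Theta-\Theta^*\|$. The PL condition (Assumption \ref{assum:LP}) implies quadratic growth, $\|\Theta-\Theta^*\|^2\le \tfrac{2}{\mu}(F(\Theta)-F^*)$, for the projection onto the optimal set. Applying Young's inequality with weights chosen to match the constants gives
\[
\langle\nabla F,B\rangle\le \tfrac14\|\nabla F\|^2+\|B(\Theta^*)\|^2+\tfrac{2L_B^2}{\mu}\cdot\tfrac{2}{\mu}\cdot\tfrac{\mu}{2}\,(F-F^*)\cdot(\text{const}).
\]
Combining this with $\tfrac12\|\nabla F\|^2\ge\mu(F-F^*)$ should yield a contraction
\[
\Delta_{t,k+1}\le(1-\eta_t\mu'/2)\,\Delta_{t,k}+\eta_t\|B(\Theta^*)\|^2+\eta_t^2 C_0,
\]
where $\Delta=\mathbb{E}F-F^*$, $\mu'=\mu-4L_B^2/\mu$, and $C_0=\mathcal{O}(G^2(L+L_{eff}^2))$. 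This is where the condition $\mu^2>4L_B^2$ is needed.

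\textbf{Unrolling.} I will unroll over the $\Gamma$ local steps and the $T$ rounds with $\eta_t=1/\sqrt{\Gamma(t+1)}$. The fixed point of the recursion gives the floor $2\|B(\Theta^*)\|^2/\mu'=B_{SMoE}$. The accumulated $\eta_t^2C_0$ terms, compared against the $\eta_t\mu'$ contraction, give a residual of order $C_0\eta_T/\mu'=\mathcal{O}\bigl(G^2(L+L_{eff}^2)/(\mu'\sqrt{T\Gamma})\bigr)$. To justify this I will use the standard lemma for recursions $a_{t+1}\le(1-c\eta_t)a_t+\eta_t^2 D$ with $\eta_t\propto t^{-1/2}$, which shows that $a_t$ tracks $D\eta_t/c$ up to a decaying transient.

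\textbf{Main obstacle.} I expect the hardest step to be the bias cross-term bookkeeping, that is, getting exactly $\mu'=\mu-4L_B^2/\mu$ and the factor $2/\mu'$. This requires choosing the Young weights so that the $L_B^2$ penalty and the $\|\nabla F\|^2$ fractions are consistent with the PL step. Some constants may end up absorbed into the $\mathcal{O}(\cdot)$. A secondary concern is that the rate is only $1/\sqrt{T}$ despite PL, which is loose. I will simply bound the transient by the $\eta_T$ term rather than optimize the rate.
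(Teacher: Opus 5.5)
\textbf{Gap: the $\Gamma$-dependence, caused by your step-size convention.} You run $\Gamma$ local steps, each of size $\eta_t$. Then $\mathbb{E}\|\Theta_c^{t,k}-\bar\Theta^{t,k}\|^2$ is of order $\eta_t^2\Gamma^2G^2$, and the drift mismatch costs $\eta_t\cdot L_{eff}^2\eta_t^2\Gamma^2G^2$ per step. Your $\mathcal{O}(\eta_t^2\Gamma L_{eff}^2G^2)$ already requires $\eta_t\Gamma\le 1$. You then set $C_0=\mathcal{O}(G^2(L+L_{eff}^2))$, which silently drops the factor $\Gamma$. Carried through your unrolling, the $L_{eff}^2$ part of the residual is of order $\Gamma L_{eff}^2G^2/(\mu' T)$, or $\sqrt{\Gamma}\,L_{eff}^2G^2/(\mu'\sqrt{T})$ with your cruder per-step bound. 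Neither is $\mathcal{O}\bigl(L_{eff}^2G^2/(\mu'\sqrt{T\Gamma})\bigr)$ unless $T\gtrsim\Gamma^3$ or $\Gamma$ is frozen as a constant. The statement's wording admits your reading, but under it the claimed $\Gamma$-dependence does not follow.

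\textbf{How the paper avoids it.} The paper takes $\eta_t$ as the effective per-round rate: $\Theta^{t+1}=\Theta^t-\eta_t U^t$ with local rate $\eta_{l,t}=\eta_t/\Gamma$. Then $\mathbb{E}\|\Theta_c^{t,k}-\Theta^t\|^2\le (k\eta_t/\Gamma)^2G^2$, and the drift constant $C_D=L_{eff}^2(\Gamma-1)(2\Gamma-1)/(6\Gamma^2)\le L_{eff}^2/3$ is uniform in $\Gamma$ (Lemma~\ref{lemma:1}). If you adopt this convention, your per-step argument goes through. Summed over a round, the mismatch costs at most $C_D\eta_t^3G^2$, and the smoothness term even improves to $L\eta_t^2G^2/(2\Gamma)$.

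\textbf{Comparison of routes.} Apart from this, your route is a finer-grained version of the paper's. The paper makes one descent step per round and folds the local steps into the mean of the round update, $G^t=\nabla F(\Theta^t)-B(\Theta^t)+E^t_{drift}$. You instead track the virtual average $\bar\Theta^{t,k}$ step by step. Your bias treatment (split at $\Theta^*$, Lipschitz bias, quadratic growth from PL) is exactly Lemma~\ref{lemma:2}.

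\textbf{Your ``main obstacle'' is not one.} Two choices both work and both have fixed point $B_{SMoE}=2\|B(\Theta^*)\|^2/\mu'$:
\begin{itemize}
\item The paper applies Young with weight $1/2$ to both the bias and drift cross terms, then Lemma~\ref{lemma:2}, then PL on the leftover $\tfrac12\|\nabla F\|^2$. This gives $(1-\eta_t\mu')\Delta_t+2\eta_t\|B(\Theta^*)\|^2$.
\item Using $-\langle a,a-b\rangle\le-\tfrac12\|a\|^2+\tfrac12\|b\|^2$, with $\tfrac14\|\nabla F\|^2$ reserved for drift, gives your $(1-\eta_t\mu'/2)\Delta+\eta_t\|B(\Theta^*)\|^2$.
\end{itemize}

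\textbf{A tacit assumption you share with the paper.} Your mismatch step compares $\mathbb{E}[g_c]$ at different points for each client, as does the paper's Lemma~\ref{lemma:1}. It therefore needs each client bias $B_c$ to be $L_B$-Lipschitz, not just the aggregate $B$ of Assumption~\ref{assum:bias}.
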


The proof can be found in Appendix \ref{sec:proof}.
Theorem \ref{theorem:1} formally shows that naive application of SMoE in federated settings leads to convergence to only to a neighborhood of the optimum. In this analysis, we decompose the error into two components: (i) Optimization Error, which depends on the gradient bound $G^2$ and client drift (captured by $L_{eff}^2$), decaying at a rate of $\mathcal{O}(1/\sqrt{T})$; and (ii) Bias Error $B_{SMoE}$, which depends on the magnitude of the aggregate bias at the optimum. This bias stems directly from the gradient sparsity introduced by the Top-K mechanism (Discordance 2).

\begin{corollary} \textbf{(Impact of Client Sparsity and Utilization Imbalance on Convergence Error)}. The contribution of client $c$ to the aggregate bias error floor $B_{SMoE}$ is inversely related to its computational budget $K_c$ and is exacerbated by expert utilization imbalance. Under Assumption \ref{assum:bounded-divergence} ($\delta \approx 0$), the squared norm of the local bias $\|B^{(e)}_c (\Theta^*)\|^2$ is approximately:
\begin{equation}
    \|B^{(e)}_c (\Theta^*)\|^2 \approx \sum^M_{i=1} (1 - p_{c,i}(\Theta^*))^2 \|\nabla_{\Theta^{(e)}_i}F_c(\Theta^*) \|^2.
    \label{eq:convex-sum-bias}
\end{equation}
Let $G^2_{c, \min}$ and $G^2_{c, \max}$ be the lower and upper bounds, respectively, on the individual expert gradient norms at the optimum, such that $G^2_{c, \min} \leq \|\nabla_{\Theta_i^{(e)}} F_c(\Theta^*)\|^2 \leq G^2_{c, \max}$ for all $i$. Under the constraint $\sum_i p_{c,i} = K_c$ and $p_{c,i} \in [0, 1]$, the bias is bounded as follows:
\begin{equation}
    G_{c,min}^{2} \cdot M\left(1-\frac{K_{c}}{M}\right)^{2} \le ||B_{c}^{(e)}(\Theta^{*})||^{2} \le G_{c,max}^{2} \cdot (M-K_{c}).
\end{equation}
\end{corollary}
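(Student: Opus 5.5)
The plan has two parts: first derive the approximation \eqref{eq:convex-sum-bias}, then optimize over activation profiles. For the first part, we start from Definition~\ref{def:gradient-bias} and Definition~\ref{def:activa-probability}. For each expert $i$,
\[
B_{c,i}(\Theta)=\nabla_{\Theta_i^{(e)}}F_c(\Theta)-p_{c,i}(\Theta)\nabla F_{c|i}(\Theta).
\]
Add and subtract $p_{c,i}\nabla_{\Theta_i^{(e)}}F_c$ to obtain
\[
B_{c,i}=(1-p_{c,i})\nabla_{\Theta_i^{(e)}}F_c+p_{c,i}\bigl(\nabla_{\Theta_i^{(e)}}F_c-\nabla F_{c|i}\bigr).
\]
By Assumption~\ref{assum:bounded-divergence}, the second term has norm at most $p_{c,i}\delta\le\delta$, so it vanishes in the regime $\delta\approx 0$. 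This leaves $B_{c,i}(\Theta^*)\approx(1-p_{c,i}(\Theta^*))\nabla_{\Theta_i^{(e)}}F_c(\Theta^*)$. The blocks $\Theta_i^{(e)}$ are disjoint coordinates of $\Theta$, so the squared norm of the stacked expert bias $B_c^{(e)}$ is the sum of the blockwise squared norms, which gives \eqref{eq:convex-sum-bias}. An exact version would carry additive error terms of order $\delta\,G_{c,\max}$ and $\delta^2$. I would mention these but drop them under the stated approximation.

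For the two-sided bound, write $q_i=1-p_{c,i}\in[0,1]$. The constraint $\sum_i p_{c,i}=K_c$, which holds because exactly $K_c$ experts are activated per input so that expectations sum to $K_c$, becomes $\sum_i q_i=M-K_c$. Substituting the gradient-norm bounds termwise gives
\[
G_{c,\min}^2\sum_i q_i^2\;\le\;\|B_c^{(e)}(\Theta^*)\|^2\;\le\;G_{c,\max}^2\sum_i q_i^2 .
\]
It then remains to bound $S=\sum_i q_i^2$ over the polytope $\{q\in[0,1]^M:\sum_i q_i=M-K_c\}$. For the lower bound, apply Cauchy--Schwarz (equivalently, convexity of the square or Jensen): $S\ge(\sum_i q_i)^2/M=(M-K_c)^2/M=M(1-K_c/M)^2$. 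Equality holds at uniform utilization $p_{c,i}=K_c/M$, which is the interpretation I would emphasize: balanced routing minimizes the floor. For the upper bound, use $q_i^2\le q_i$ for $q_i\in[0,1]$, which gives $S\le\sum_i q_i=M-K_c$. Equality holds at a vertex where $K_c$ experts are always active and the remaining $M-K_c$ are never active, which is maximal imbalance.

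Finally I would draw the qualitative conclusions. Both bounds decrease as $K_c$ grows, so the bias is inversely related to the budget. The gap between the two bounds is exactly the range spanned by utilization imbalance. Because the global bias $B(\Theta^*)$ is the $p_c$-weighted sum of these local terms, both facts propagate to the error floor $B_{SMoE}$ of Theorem~\ref{theorem:1}.

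The step needing most care is the first one: justifying the approximate identity and the constraint $\sum_i p_{c,i}=K_c$. Two points have to be pinned down. First, the squared-norm decomposition must range only over the expert blocks, with router parameters excluded, which matches the notation $B_c^{(e)}$. Second, the $\delta$ cross terms must be explicitly declared negligible rather than bounded. The extremal calculation itself is routine.
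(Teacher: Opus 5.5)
Your proposal is correct and follows essentially the same route as the paper. Both use the same add-and-subtract (triangle-inequality) step, dropping the $p_{c,i}\delta$ term to reach $\sum_i (1-p_{c,i})^2\|\nabla_{\Theta_i^{(e)}}F_c(\Theta^*)\|^2$, and then identify the two bounds as the extremes of $\sum_i(1-p_{c,i})^2$ over $\{(p_{c,i})_i\in[0,1]^M:\sum_i p_{c,i}=K_c\}$, attained at uniform utilization and at a $0/1$ vertex. The paper only asserts these extremes by appeal to convexity, whereas your Cauchy--Schwarz bound, the inequality $q_i^2\le q_i$, and the derivation of $\sum_i p_{c,i}=K_c$ from $|\mathcal{A}_c(x)|=K_c$ make that argument explicit.
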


The lower bound is achieved under perfect utilization balance ($p_{c,i} = K_c/M$), minimizing the convex sum in Eq. \ref{eq:convex-sum-bias}. The upper bound is achieved under maximal imbalance (where $M-K_c$ experts are never utilized, $p_{c,i}=0$), maximizing the convex sum. When $K_c$ is small (i.e., resource-constrained clients), the potential bias is large, demonstrating that resource-constrained clients inherently introduce significant bias into the global optimization process, especially when utilization is imbalanced (Discordance 1).

\section{Methodology: UB-SMoE}
This section details \textbf{Universally Balanced Sparse Mixture-of-Experts (UB-SMoE)}, which is designed to resolve the optimization discordances in heterogeneous federated fine-tuning (Fig. \ref{fig:UB-SMoE}). UB-SMoE introduces two mechanisms: (i) \textbf{Dynamic Modulated Routing (DMR)} addresses expert utilization imbalance; and (ii) \textbf{Universal Pseudo-Gradient (PG)} reconstructs learning signals for non-activated experts. 

\subsection{Dynamic Modulated Routing (DMR)} \label{sec:learnable-routing}
DMR regulates expert selection using global utilization statistics through learnable modulation parameters.

\textbf{Modulated Logit Computation}. For an SMoE layer $l \in \{1, \dots, L\}$ with $M$ experts, we introduce a learnable modulation vector $\boldsymbol{\phi}^{(l)} = [\phi_1^{(l)}, \dots, \phi_M^{(l)}] \in \mathbb{R}^M$, initialized to zero. To prevent utilization biases from overriding semantic relevance, our router integrates two distinct control signals: the \textbf{input signal} (data condition) and the \textbf{utilization signal} (load balancing). 
Instead of directly applying Top-K selection on raw affinity scores $\mathbf{s}^{(l)} = W^{(r)}x$, we first identify a candidate set $\mathcal{T}^{(l)}$ comprising the top $N_p$ experts with highest raw scores ($K_{max} \le N_p \ll M$). We set $N_p = 2$ (see ablation in Tab.~\ref{tab:preserved_experts} in Appendix). Then, we apply the learnable modulation parameters $\boldsymbol{\phi}^{(l)}$ only to the experts in $\mathcal{T}^{(l)}$. The modulated logits $m^{(l)}$ are computed as:
\begin{equation}
	m_i^{(l)} =
    \begin{cases}
s_i^{(l)} + \phi_i^{(l)} & \text{if } i \in \mathcal{T}^{(l)} \\
s_i^{(l)} & \text{otherwise}
\end{cases},
\end{equation}
The client selects active experts $\mathcal{A}_c(\mathbf{x}) = \text{Top-K}(p^{(l)}, K_c)$ where $p^{(l)} = \text{softmax}(m^{(l)})$.

\textbf{Regularization.} To prevent the modulation parameters from diverging, we apply $L_2$ regularization during client training to constrain $\phi^{(l)}$ within the range $[\phi_{min}, \phi_{max}]$:

\begin{equation}
\resizebox{0.48\textwidth}{!}{$
    \mathcal{L}_{reg} = \lambda \left( ||\text{ReLU}(\phi_{min} - \boldsymbol{\phi}^{(l)})||_2^2 + ||\text{ReLU}(\boldsymbol{\phi}^{(l)} - \phi_{max})||_2^2 \right)
$}  
    \label{eq:regularization-loss}
\end{equation}

\textbf{Utilization-aware Global Update.} 
The server computes the global utilization rate $\tilde{u}_i^{(l)}$ for each expert $i$:
\begin{equation}
	\tilde{u}_{i}^{(l)}=\sum_{c=1}^{C}p_{c}\frac{a_{c,i}^{(l)}}{n_{c}^{(l)}},
    \label{eq:global-utilization}
\end{equation}
where $a^{(l)}_{c,i}$ is the activation count, $n_{c}^{(l)}$ is the total tokens processed, and $p_c = |D_c|/\sum_{c'}|D_{c'}|$ is the client weight. We define a target uniform utilization $u^* = \bar{K}/M$, where $\bar{K} = \sum p_c K_c$ is the average system sparsity. The update rule penalizes over-utilized experts and boosts under-utilized ones:
\begin{equation}
	\tilde{\phi}_i^{(l)} = \tanh\left(\frac{u^*}{\tilde{u}_i^{(l)} + \epsilon} - 1\right)
    \label{eq:over-utilized}
\end{equation}
\begin{equation}
	\phi_i^{(l), (t+1)} = (1-\zeta)\tilde{\phi}_i^{(l)} + \zeta \phi_i^{(l), (t)}
    \label{eq:under-utilized}
\end{equation}
where $\zeta \in [0,1]$ is a momentum factor and the $\tanh$ function ensures bounded modulation values. 

\definecolor{HummingBird}{rgb}{0.792,0.941,0.972}
\begin{table*}[ht]
    \centering
    \fontsize{6pt}{6pt}\selectfont
    \caption{\textbf{Performance comparison on Commonsense-15K benchmark using  OLMoE-1B-7B}. Performance averaged over all computation budgets. Higher values indicate better performance. Bold values highlight the best performance.}
    \label{tab:smoe-backbone}
    \resizebox{\textwidth}{!}{%
    \begin{tabular}{l cccc cc >{\columncolor{HummingBird}}c}
    \toprule
    \multirow{2}{*}{\textbf{Dataset}} & \multicolumn{4}{c}{\textbf{Heterogeneous LoRA-rank methods}} & \multicolumn{3}{c}{\textbf{Heterogeneous sparsity methods}} \\
    \cmidrule(lr){2-5} \cmidrule(lr){6-8}
     & HetLoRA & FlexLoRA & FLoRA & FLoRIST & SMoE-LLB & A$^3$SMoE & UB-SMoE \\
    \midrule
    ARC-Challenge & 0.1284 & 0.3012 & 0.0868 & 0.0960 & 0.3080 & 0.3347 & \textbf{0.3611} \\
    ARC-Easy      & 0.1582 & 0.4136 & 0.1004 & 0.1097 & 0.4213 & 0.4724 & \textbf{0.5017} \\
    BoolQ         & 0.3709 & 0.4573 & 0.3472 & 0.3407 & \textbf{0.5122} & 0.4301 & 0.4952 \\
    HellaSwag     & 0.0714 & 0.1607 & 0.0674 & 0.0605 & 0.2096 & \textbf{0.2448} & 0.2258 \\
    OpenBookQA    & 0.1200 & 0.3225 & 0.1160 & 0.1380 & 0.3360 & 0.3925 & \textbf{0.4030} \\
    PIQA          & 0.2278 & 0.3327 & 0.2285 & 0.2319 & 0.4244 & 0.4219 & \textbf{0.5118} \\
    Social IQa    & 0.1452 & 0.3099 & 0.0885 & 0.0814 & 0.3801 & 0.4193 & \textbf{0.4486} \\
    WinoGrande    & 0.2770 & 0.3445 & 0.1786 & 0.1257 & 0.4410 & 0.3733 & \textbf{0.4665} \\
    \midrule
    Average       & 0.1874 & 0.3303 & 0.1517 & 0.1480 & 0.3791 & 0.3861 & \textbf{0.4267} \\
    \bottomrule
    \end{tabular}%
    }
\end{table*}

\subsection{Universal Pseudo-Gradient (PG)}\label{subsec:pseudo-gradient}
DMR balances selection, but it does not solve the non-differentiability of Top-K. In heterogeneous FL, low-resource clients ($K_c \ll M$) produce zero gradients for most experts, creating optimization blind spots. We propose a universal pseudo-gradient  (PG) mechanism to reconstruct meaningful update directions for non-activated experts.

\textbf{Pseudo-gradient Computation.} After aggregating updates at round $t+1$, the server computes a global pseudo-gradient vector $\tilde{G}^{(l)}$ that captures the parameter evolution:
\begin{equation}
	\tilde{G}_i^{(l), (t+1)} = \frac{\Theta_i^{(e), (l), (t)} - \Theta_i^{(e), (l), (t+1)}}{\eta \cdot \Gamma},
    \label{eq:PG-computation}
\end{equation}
where $\eta > 0$ is the local learning rate and $\Gamma$ is the number of local iterations. 

\textbf{Resource-aware Gradient Injection.} During local training, clients use this global signal to update experts they did not select. 
The modified gradient estimator $\hat{g}_{c,i}$ for client $c$ is:
\begin{equation}
	\hat{g}_{c,i}(\Theta; \mathbf{x}) =
    \begin{cases}
    \nabla_{\Theta_i^{(e)}} \mathcal{L}_c(\Theta; \mathbf{x}) & \text{if } i \in \mathcal{A}_c(\mathbf{x})  \\
    \rho_c \cdot \tilde{G}_i^{(l), (t)} & \text{if } i \notin \mathcal{A}_c(\mathbf{x}) 
\end{cases},
\end{equation}
where $\rho_c = \sqrt{\bar{K}/K_c}$ denotes the resource-adaptive scaling factor. This factor amplifies the pseudo-gradient signal for resource-constrained clients (small $K_c$). As shown in our theoretical analysis, the gradient bias is inversely proportional to $K_c$. By scaling up the pseudo-gradient, we compensate for the limited expert coverage of low-resource clients, ensuring their local models remain aligned with the global consensus.

\section{Experiments}
\textbf{Benchmarks.} We evaluate our method on Commonsense-15K benchmark \cite{hu2023llm} (contains 8 distinct commonsense reasoning datasets), TeleQuAD (in telecommunication domain) \cite{holm2021bidirectional}.

\textbf{Baselines.} We compare UB-SMoE with two groups of methods for handling system heterogeneity: (1) heterogeneous LoRA-rank methods, including HetLoRA \cite{cho2024heterogeneous}, FlexLoRA \cite{bai2024federated}, FLoRA \cite{wangflora}, FLoRIST \cite{ramesh2025florist}; and (2) Heterogeneous sparsity methods, including A$^3$SMoE \cite{tran2025revisiting} and the same MoE model enabled with local load-balancing (LLB-SMoE) using auxiliary loss \cite{zoph2022st}. 

\textbf{Implementation.} We use OLMoE-1B-7B \cite{muennighoff2024olmoe} and OLMo-1B \cite{groeneveld2024olmo} as the base models for all experiments. The maximum number of activated experts is $K_{\max} = 8$. For heterogeneous sparsity methods, we fix LoRA rank at $r = 20$ across all clients with varying activated experts $K_c \in \{1, 2, 4, 8\}$. Regarding heterogeneous LoRA-rank methods, client-specific ranks are set to $r_c \in \{6, 8, 12, 20\}$. Detailed experimental configurations are provided in Appendix \ref{sec:additional-ex-setup}.

\begin{table}[ht]
    \centering
    \caption{Performance comparison on Commonsense-15K benchmark using dense OLMo-1B. Heterogeneous LoRA-rank methods use client-specific ranks $r_c \in \{12, 16, 24, 40\}$ during training. UB-SMoE is evaluated at budget $\beta_4$ with matched trainable activated parameters. Higher values indicate better performance. Bold values highlight the best performance within each budget category.}
    \label{tab:dense-compare}
    \resizebox{\columnwidth}{!}{%
    \begin{tabular}{l cccc >{\columncolor{HummingBird}}c}
    \toprule
    \textbf{Dataset} & FlexLoRA & HetLoRA & FLoRA & FLoRIST & UB-SMoE \\
    \midrule
    ARC-Challenge & 0.2654 & 0.2159 & 0.1212 & 0.0734 & \textbf{0.5333} \\
    ARC-Easy      & 0.2740 & 0.2180 & 0.1077 & 0.0614 & \textbf{0.7184} \\
    BoolQ         & 0.5101 & \textbf{0.5523} & 0.2651 & 0.1917 & 0.4697 \\
    HellaSwag     & 0.1358 & 0.1104 & 0.1911 & 0.1294 & \textbf{0.3536} \\
    OpenBookQA    & 0.2660 & 0.2340 & 0.1000 & 0.0780 & \textbf{0.6320} \\
    PIQA          & 0.4777 & 0.5005 & 0.4918 & 0.4913 & \textbf{0.6931} \\
    Social IQa    & 0.3557 & 0.3362 & 0.1029 & 0.0583 & \textbf{0.6039} \\
    WinoGrande    & 0.4775 & 0.4854 & 0.2928 & 0.2092 & \textbf{0.5043} \\
    \midrule
    Average       & 0.3453 & 0.3316 & 0.2091 & 0.1616 & \textbf{0.5636} \\
    \bottomrule
    \end{tabular}%
    }
\end{table}

\definecolor{HummingBird}{rgb}{0.792,0.941,0.972}
\begin{table*}[ht]
    \centering
    \caption{\textbf{Performance comparison across different resource budgets on Commonsense-15K}. Higher values indicate better performance. Bold values highlight the best performance within each budget category.}
    \label{tab:commonsense-budget}
    \resizebox{\textwidth}{!}{%
    \begin{tabular}{l cccc cc >{\columncolor{HummingBird}}c cccc >{\columncolor{HummingBird}}c}
    \toprule
    \multirow{3}{*}{\textbf{Budget}} & \multicolumn{7}{c}{\textbf{Heterogeneous Setting}} & \multicolumn{5}{c}{\textbf{Homogeneous Setting}} \\
    \cmidrule(lr){2-8} \cmidrule(lr){9-13}
     & \multicolumn{4}{c}{\textbf{LoRA-rank Methods}} & \multicolumn{3}{c}{\textbf{Sparsity Methods}} & \multicolumn{5}{c}{\textbf{LoRA-rank Methods}} \\
    \cmidrule(lr){2-5} \cmidrule(lr){6-8} \cmidrule(lr){9-13}
     & HetLoRA & FlexLoRA & FLoRA & FLoRIST & SMoE-LLB & A$^3$SMoE & UB-SMoE & FedEx-LoRA & FFA-LoRA & FedIT & FedSB & UB-SMoE \\
    \midrule
    $\beta_1$ & 0.0079 & 0.0456 & 0.0094 & 0.0112 & 0.3531 & 0.3629 & \textbf{0.3936} & 0.3367 & 0.2432 & \textbf{0.3740} & 0.1354 & 0.3055 \\
    $\beta_2$ & 0.0699 & 0.2818 & 0.0480 & 0.0538 & 0.3847 & \textbf{0.4310} & 0.3359 & 0.3692 & 0.3512 & \textbf{0.4037} & 0.1622 & 0.3587 \\
    $\beta_3$ & 0.2137 & 0.5375 & 0.2497 & 0.2545 & 0.3961 & 0.4096 & \textbf{0.4716} & 0.3741 & 0.3733 & 0.4382 & 0.2576 & \textbf{0.5004} \\
    $\beta_4$ & 0.4580 & 0.4563 & 0.2996 & 0.2724 & 0.3824 & 0.3410 & \textbf{0.5240} & 0.4337 & 0.3851 & 0.4503 & 0.3318 & \textbf{0.5636} \\
    \midrule
    Average & 0.1874 & 0.3303 & 0.1517 & 0.1480 & 0.3791 & 0.3861 & \textbf{0.4313} & 0.3784 & 0.3382 & 0.4165 & 0.2217 & \textbf{0.4320} \\
    \bottomrule
    \end{tabular}%
    }
\end{table*}

\definecolor{HummingBird}{rgb}{0.792,0.941,0.972}
\begin{table*}[ht]
    \centering
    \caption{\textbf{Performance comparison across different resource budgets on TeleQuAD telecommunications question-answering benchmark}. Models are evaluated under both IID and non-IID client data distributions using BERTScore F1. Higher values indicate better performance, and bold values highlight the best result within each budget and data-distribution setting.}
    \label{tab:telequad-budget}
    \resizebox{\textwidth}{!}{%
    \begin{tabular}{l cccc >{\columncolor{HummingBird}}c cccc >{\columncolor{HummingBird}}c}
    \toprule
    \multirow{2}{*}{\textbf{Budget}} & \multicolumn{5}{c}{\textbf{IID}} & \multicolumn{5}{c}{\textbf{Non-IID}} \\
    \cmidrule(lr){2-6} \cmidrule(lr){7-11}
     & HetLoRA & FlexLoRA & SMoE-LLB & A$^3$SMoE & UB-SMoE & HetLoRA & FlexLoRA & SMoE-LLB & A$^3$SMoE & UB-SMoE \\
    \midrule
    $\beta_1$ & 39.35 & 39.02 & 48.31 & \textbf{50.76} & 46.25 & 35.84 & 37.73 & 44.56 & 37.24 & \textbf{47.66} \\
    $\beta_2$ & 44.87 & 47.58 & \textbf{61.15} & 61.04 & 58.82 & 40.76 & 44.13 & 59.68 & 41.71 & \textbf{60.04} \\
    $\beta_3$ & 54.93 & 58.68 & 65.09 & 65.06 & \textbf{65.83} & 44.63 & 42.87 & 65.98 & 41.33 & \textbf{66.31} \\
    $\beta_4$ & 58.05 & 62.65 & 65.41 & 65.63 & \textbf{67.60} & 46.26 & 42.10 & 68.23 & 42.52 & \textbf{68.29} \\
    \midrule
    Average & 49.30 & 51.98 & 59.99 & \textbf{60.62} & 59.63 & 41.87 & 41.71 & 59.61 & 40.70 & \textbf{60.58} \\
    \bottomrule
    \end{tabular}%
    }
\end{table*}

\subsection{Performance Across Benchmarks}
Tables~\ref{tab:smoe-backbone}-\ref{tab:dense-compare} report results on the Commonsense-15K benchmark, averaged over all computation budgets: Table \ref{tab:smoe-backbone} compares methods on the SMoE backbone while Table~\ref{tab:dense-compare} further compares methods on OLMo-1B backbone. For heterogeneous LoRA-rank baselines, we configure client-specific ranks as $r_c \in \{12, 16, 24, 40\}$ corresponding to budget levels $\beta_1$ through $\beta_4$ during training. Note that these methods produce dense models at inference time, as the heterogeneous LoRA adapters are aggregated into full-rank updates. To ensure fair comparison, we evaluate UB-SMoE at a budget $\beta_4$, such that the number of trainable activated parameters similarly matches across methods.

On OLMoE-1B-7B backbone, UB-SMoE achieves the highest average performance of 0.4267, outperforming the strongest heterogeneous sparsity baseline, A$^3$SMoE, by 10.5\%, and the strongest heterogeneous LoRA-rank baseline, FlexLoRA, by 29.1\%. On  OLMo-1B. backbone, UB-SMoE achieves an average accuracy of 0.5636, substantially outperforming all heterogeneous LoRA-rank baselines. For example, UB-SMoE surpasses FlexLoRA by $63.2\%$, HetLoRA by $70.0\%$, FLoRA by $169.5\%$, and FLoRIST by $248.6\%$. These results indicate that UB-SMoE is not only effective under sparse expert fine-tuning, but also provides a stronger accuracy-resource trade-off than dense-model LoRA-rank adaptation under matched trainable-parameter budgets. We attribute this improvement to two complementary effects: dynamic modulated routing improves expert selection under heterogeneous sparsity, while universal pseudo-gradients provide learning signals to experts that are not locally activated by resource-constrained clients.

\subsection{Performance Across Resource Budgets}
Tables \ref{tab:commonsense-budget} and \ref{tab:telequad-budget} evaluate performance across four computation budgets, $\beta_1$--$\beta_4$, on Commonsense-15K and TeleQuAD, respectively. The Commonsense-15K results report average accuracy over eight reasoning tasks, while TeleQuAD reports BERTScore F1 under IID and non-IID client data distributions.

On Commonsense-15K, Table~\ref{tab:commonsense-budget} shows that heterogeneous LoRA-rank methods are highly sensitive to tight resource budgets. At $\beta_1$, HetLoRA, FLoRA, and FLoRIST achieve only 0.0079, 0.0094, and 0.0112 average accuracy, respectively. In contrast, sparsity-based methods remain effective, with SMoE-LLB, A$^3$SMoE, and UB-SMoE achieving 0.3531, 0.3629, and 0.3936. UB-SMoE obtains the best performance among heterogeneous methods at both $\beta_1$ and $\beta_4$, demonstrating strong adaptation across resource budgets.

Table~\ref{tab:telequad-budget} further evaluates resource adaptation on TeleQuAD, a domain-specific telecommunications question-answering benchmark. Under IID data, UB-SMoE is competitive at low budgets and achieves the best performance at larger budgets, reaching 65.83 and 67.60 BERTScore F1 at $\beta_3$ and $\beta_4$. Under non-IID data, UB-SMoE achieves the best result at every budget, improving from 47.66 at $\beta_1$ to 68.29 at $\beta_4$.

Overall, these results show that UB-SMoE provides a strong accuracy--resource trade-off across benchmarks, computation budgets, and data distributions. Its advantage is most pronounced in constrained or non-IID settings, where balanced expert utilization is critical for stable federated adaptation.

\subsection{Analysis} \label{sec:analysis}
\subsubsection{Contribution of UB-SMoE Mechanisms} 
Tab. \ref{tab:contribution} quantifies the contributions of our two primary mechanisms: DMR (Sec. \ref{sec:learnable-routing}) and PG (Sec. \ref{subsec:pseudo-gradient}). The PG mechanism increases the FL system's performance from 0.1701 to 0.2659 by providing learning signals to both activated and non-activated experts. With additional $\phi$ regularization and utilization-aware update, the utilization imbalance problem has been addressed, and the performance increases to 0.4267. 

\begin{table}[ht]
    \centering
    \caption{Ablation study quantifying the contribution of each core mechanism of UB-SMoE. The results are averages across 8 commonsense reasoning datasets.}
    \label{tab:contribution}
    \resizebox{\columnwidth}{!}{%
    \begin{tabular}{c cc c}
    \toprule
    \multirow{2}{*}{\textbf{Pseudo-gradients}} & \multicolumn{2}{c}{\textbf{Dynamic Modulated Routing}} & \multirow{2}{*}{\textbf{Avg.}} \\
    \cmidrule(lr){2-3}
     & $\phi$ Regularization & Utilization-aware Update & \\
    \midrule
                 &              &              & 0.1701 \\
    \checkmark   &              &              & 0.2659 \\
    \checkmark   & \checkmark   &              & 0.2839 \\
    \checkmark   &              & \checkmark   & 0.3591 \\
                 & \checkmark   & \checkmark   & 0.4009 \\
    \checkmark   & \checkmark   & \checkmark   & \textbf{0.4267} \\
    \bottomrule
    \end{tabular}%
    }
\end{table}

\subsubsection{Scalability Analysis}
To evaluate scalability with larger client populations and diverse heterogeneity patterns, we conduct experiments with 32 clients under three configurations: (1) uniform distribution with 50\% participation rate, (2) long-tail distribution where 75\% of clients are low-resource, and (3) reverse long-tail where 75\% are high-resource. Table~\ref{tab:scalability-summary} shows that UB-SMoE maintains superior performance across all heterogeneity patterns, demonstrating robustness to both client scale and resource distribution. Full per-dataset results are in Appendix~\ref{ssec:scalability-32}.

\definecolor{HummingBird}{rgb}{0.792,0.941,0.972}
\begin{table}[ht]
    \centering
    \caption{Scalability to 32 clients under different heterogeneity patterns on Commonsense-15K benchmark. Performance averaged over all computation budgets.}
    \label{tab:scalability-summary}
    \resizebox{\columnwidth}{!}{%
    \begin{tabular}{l cccc >{\columncolor{HummingBird}}c}
    \toprule
    \textbf{Pattern} & HetLoRA & FlexLoRA & A$^3$SMoE & SMoE-LLB & Ours \\
    \midrule
    Uniform (50\%) & 0.1505 & 0.2070 & 0.3998 & 0.3928 & \textbf{0.4036} \\
    Long-tail & 0.1832 & 0.2585 & 0.2988 & 0.2961 & \textbf{0.3047} \\
    Rev. Long-tail & 0.2242 & 0.2601 & 0.4148 & 0.3928 & \textbf{0.4272} \\
    \bottomrule
    \end{tabular}%
    }
\end{table}

\subsubsection{Expert Utilization Balance} \label{ssec:utilization-entropy}
To quantify how effectively PG addresses expert utilization imbalance, we compute the global utilization entropy: $H_{\text{global}} = -\sum_{i=1}^{M} \tilde{u}_i \log(\tilde{u}_i)$. We also provide an analysis on Gini coefficients in Sec. \ref{ssec:utilization-gini} of Appendix.  

From Fig. \ref{fig:utilization-entropy}, we observe that: (i) UB-SMoE achieves the highest mean entropy ($H \approx 6.5$) with lower variance; (ii) UB-SMoE progressively improves utilization balance via communication rounds. Per-layer analysis in Appendix~\ref{ssec:per-layer-entropy} confirms that UB-SMoE maintains superior entropy across all 16 SMoE layers, while baselines exhibit severe imbalance.

\begin{figure}[ht]
	\centering
	\includegraphics[width=0.95\columnwidth]{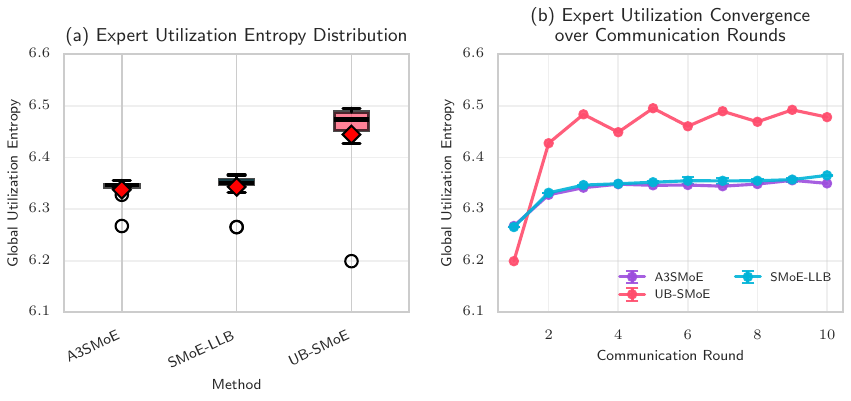}
	\caption{Global expert utilization entropy comparison. Higher entropy indicates more balanced utilization.}
	\label{fig:utilization-entropy}
\end{figure}

\subsubsection{Pseudo-gradient alignment for inactive experts}
\label{sec:pg_alignment}
To directly test whether PG provides a meaningful signal for experts that are inactive on low-resource clients, we measure cosine similarity between the server-side pseudo-gradient and the corresponding client-side true gradient on inactive experts.
\begin{wraptable}{r}{0.5\columnwidth}
    \centering
    \small
    \caption{Alignment between server pseudo-gradients and held-out client true gradients on inactive experts.}
    \label{tab:pg_alignment}
    \begin{tabular}{@{}lc@{}}
    \toprule
    Layer & Cos. sim. \\
    \midrule
    2  & 0.002 \\
    8  & 0.104 \\
    12 & 0.097 \\
    15 & 0.322 \\
    \midrule
    Concat. & 0.290 \\
    \bottomrule
    \end{tabular}
    \vspace{-0.5em}
\end{wraptable}
Table~\ref{tab:pg_alignment} shows that the alignment is weak in shallow layers but becomes substantially positive in deeper layers, with a concatenated cosine similarity of $0.290$. This trend is consistent with the observation that lower transformer FFN layers capture shallow or client-specific patterns, while upper layers encode more semantic, task-level information \citep{geva2021transformer}. Thus, PG is most reliable in the layers where cross-client task-level transfer is expected to be most useful, and it is strictly more informative than the zero-gradient update received by inactive experts in naive SMoE.

\section{Related Work}
Federated fine-tuning methods for FMs include homogeneous approaches assuming uniform client resources \cite{chen2024feddat, xiong2025pilot, guo2023pfedprompt} and heterogeneous methods addressing resource diversity through logit distillation \cite{ma2024fedhpl}, personalized prompt generation \cite{yang2023efficient}, or adaptive LoRA ranks \cite{cho2024heterogeneous, wangflora, bai2024federated, ramesh2025florist}. Recently, A$^3$SMoE \cite{tran2025revisiting} introduced heterogeneous sparsity by allowing client-specific expert activation, but suffers from expert utilization imbalance and gradient sparsity due to non-differentiable Top-K routing. Our work addresses these challenges directly.

\textbf{Distinction from Centralized Methods.} Loss-Free Balancing \cite{wang2024auxiliary} uses local batch statistics unsuitable for heterogeneous FL with non-IID data; our DMR uses global utilization. Default MoE \cite{panda2025dense} requires all experts available within a forward pass, which is infeasible in low-resource clients; our PG enables learning signals for experts clients never activate via cross-round information. More related works can be found in Appendix~\ref{sec:detailed-related}.

\section{Conclusion}
We presented UB-SMoE, a resource-adaptive federated fine-tuning method that addresses expert utilization imbalance and non-differentiable Top-K routing in heterogeneous SMoE systems. Our convergence analysis reveals that these discordances induce an irreducible bias error floor scaling inversely with client computational budgets. UB-SMoE mitigates this through Dynamic Modulated Routing (DMR) and Universal Pseudo-Gradient (PG), which together form a self-reinforcing cycle that maintains expert viability across heterogeneous clients.

\newpage
\section*{Impact Statement} 
This work enables resource-constrained devices to participate in collaborative fine-tuning of foundation models, potentially democratizing access to AI capabilities. By reducing computational requirements for low-resource clients without sacrificing model quality, UB-SMoE could enable edge devices and organizations with limited resources to benefit from federated learning. 
As with all federated learning systems, privacy considerations remain important. While raw data is not shared, gradient information may leak sensitive information, and practitioners should consider privacy-preserving techniques (such as Differential Privacy) mechanisms when deploying in extreme sensitive domains.

\section*{Acknowledgements}
This publication has emanated from research conducted with the financial support of Research Ireland under Grant number 18/CRT/6222. This work was also supported by 6G-XCEL Project, Horizon Europe SNS Grant. Project code 214670/18401.

We thank Dr. Viet Quoc Pham (Trinity College Dublin, Ireland), Dr. Minh-Duong Nguyen (VinUniversity, Vietnam), and Khiem Le (University of Notre Dame, US) for the meaningful discussions at the initial phase of this project. We specifically thank Prof. Dinh-Thuc Nguyen (Ho Chi Minh City University of Science, Vietnam) for his support on the theoretical parts of this work. 

\bibliography{main_references}

@inproceedings{hulora,
  title={LoRA: Low-Rank Adaptation of Large Language Models},
  author={Hu, Edward J and Wallis, Phillip and Allen-Zhu, Zeyuan and Li, Yuanzhi and Wang, Shean and Wang, Lu and Chen, Weizhu and others},
  booktitle={International Conference on Learning Representations},
  year={2022}
}

@inproceedings{sun2024improving,
  title={Improving LoRA in Privacy-preserving Federated Learning},
  author={Sun, Youbang and Li, Zitao and Li, Yaliang and Ding, Bolin},
  booktitle={The Twelfth International Conference on Learning Representations},
  year={2024}
}

@article{bian2025survey,
  title={A survey on parameter-efficient fine-tuning for foundation models in federated learning},
  author={Bian, Jieming and Peng, Yuanzhe and Wang, Lei and Huang, Yin and Xu, Jie},
  journal={arXiv preprint arXiv:2504.21099},
  year={2025}
}

@inproceedings{cho2024heterogeneous,
  title={Heterogeneous LoRA for Federated Fine-tuning of On-Device Foundation Models},
  author={Cho, Yae Jee and Liu, Luyang and Xu, Zheng and Fahrezi, Aldi and Joshi, Gauri},
  booktitle={Proceedings of the 2024 Conference on Empirical Methods in Natural Language Processing},
  pages={12903--12913},
  year={2024}
}

@article{bai2024federated,
  title={Federated fine-tuning of large language models under heterogeneous tasks and client resources},
  author={Bai, Jiamu and Chen, Daoyuan and Qian, Bingchen and Yao, Liuyi and Li, Yaliang},
  journal={arXiv preprint arXiv:2402.11505},
  year={2024}
}

@article{singhal2025fed,
  title={Fed-SB: A Silver Bullet for Extreme Communication Efficiency and Performance in (Private) Federated LoRA Fine-Tuning},
  author={Singhal, Raghav and Ponkshe, Kaustubh and Vartak, Rohit and Varshney, Lav R and Vepakomma, Praneeth},
  journal={arXiv preprint arXiv:2502.15436},
  year={2025}
}

@inproceedings{wangflora,
  title={FLoRA: Federated Fine-Tuning Large Language Models with Heterogeneous Low-Rank Adaptations},
  author={Wang, Ziyao and Shen, Zheyu and He, Yexiao and Sun, Guoheng and Wang, Hongyi and Lyu, Lingjuan and Li, Ang},
  booktitle={The Thirty-eighth Annual Conference on Neural Information Processing Systems},
  year={2024}
}

@inproceedings{chen2024feddat,
  title={Feddat: An approach for foundation model finetuning in multi-modal heterogeneous federated learning},
  author={Chen, Haokun and Zhang, Yao and Krompass, Denis and Gu, Jindong and Tresp, Volker},
  booktitle={Proceedings of the AAAI Conference on Artificial Intelligence},
  volume={38},
  number={10},
  pages={11285--11293},
  year={2024}
}

@article{ramesh2025florist,
  title={FLoRIST: Singular Value Thresholding for Efficient and Accurate Federated Fine-Tuning of Large Language Models},
  author={Ramesh, Hariharan and Dass, Jyotikrishna},
  journal={arXiv preprint arXiv:2506.09199},
  year={2025}
}

@inproceedings{tran2025revisiting,
    title={Revisiting Sparse Mixture of Experts for Resource-adaptive Federated Fine-tuning Foundation Models},
    author={Tran, Van-Tuan and Le, Huy Khiem and Pham, Viet Quoc},
    booktitle={ICLR 2025 Workshop on Modularity for Collaborative, Decentralized, and Continual Deep Learning},
    year={2025},
    url={https://openreview.net/forum?id=IwNOUYgtuz}
}

@inproceedings{liconvergence,
  title={On the Convergence of FedAvg on Non-IID Data},
  author={Li, Xiang and Huang, Kaixuan and Yang, Wenhao and Wang, Shusen and Zhang, Zhihua},
  booktitle={International Conference on Learning Representations},
  year={2019}
}

@article{zoph2022st,
  title={St-moe: Designing stable and transferable sparse expert models},
  author={Zoph, Barret and Bello, Irwan and Kumar, Sameer and Du, Nan and Huang, Yanping and Dean, Jeff and Shazeer, Noam and Fedus, William},
  journal={arXiv preprint arXiv:2202.08906},
  year={2022}
}

@article{qiu2025demons,
  title={Demons in the Detail: On Implementing Load Balancing Loss for Training Specialized Mixture-of-Expert Models},
  author={Qiu, Zihan and Huang, Zeyu and Zheng, Bo and Wen, Kaiyue and Wang, Zekun and Men, Rui and Titov, Ivan and Liu, Dayiheng and Zhou, Jingren and Lin, Junyang},
  journal={arXiv preprint arXiv:2501.11873},
  year={2025}
}

@inproceedings{liu2023sparse,
  title={Sparse Backpropagation for MoE Training},
  author={Liu, Liyuan and Gao, Jianfeng and Chen, Weizhu},
  booktitle={Workshop on Advancing Neural Network Training: Computational Efficiency, Scalability, and Resource Optimization (WANT@ NeurIPS 2023)},
  year={2023}
}

@inproceedings{panda2024dense,
  title={Dense Backpropagation Improves Routing for Sparsely-Gated Mixture-of-Experts},
  author={Panda, Ashwinee and Baherwani, Vatsal and Sarwar, Zain and Th{\'e}rien, Benjamin and Rawls, Stephen and Sahu, Sambit and Chakraborty, Supriyo and Goldstein, Tom},
  booktitle={Workshop on Machine Learning and Compression, NeurIPS 2024},
  year={2024}
}

@inproceedings{hu2023llm,
  title={LLM-Adapters: An Adapter Family for Parameter-Efficient Fine-Tuning of Large Language Models},
  author={Hu, Zhiqiang and Wang, Lei and Lan, Yihuai and Xu, Wanyu and Lim, Ee-Peng and Bing, Lidong and Xu, Xing and Poria, Soujanya and Lee, Roy},
  booktitle={Proceedings of the 2023 Conference on Empirical Methods in Natural Language Processing},
  pages={5254--5276},
  year={2023}
}

@article{xiong2025pilot,
  title={Pilot: Building the Federated Multimodal Instruction Tuning Framework},
  author={Xiong, Baochen and Yang, Xiaoshan and Song, Yaguang and Wang, Yaowei and Xu, Changsheng},
  journal={arXiv preprint arXiv:2501.13985},
  year={2025}
}

@article{pan2024federated,
  title={Federated learning from vision-language foundation models: Theoretical analysis and method},
  author={Pan, Bikang and Huang, Wei and Shi, Ye},
  journal={Advances in Neural Information Processing Systems},
  volume={37},
  pages={30590--30623},
  year={2024}
}

@inproceedings{guo2023pfedprompt,
  title={Pfedprompt: Learning personalized prompt for vision-language models in federated learning},
  author={Guo, Tao and Guo, Song and Wang, Junxiao},
  booktitle={Proceedings of the ACM Web Conference 2023},
  pages={1364--1374},
  year={2023}
}

@inproceedings{zhang2024towards,
title={Towards Building the {FederatedGPT}: Federated Instruction Tuning},
author={Zhang, Jianyi and Vahidian, Saeed and Kuo, Martin and Li, Chunyuan and Zhang, Ruiyi and Yu, Tong and Wang, Guoyin and Chen, Yiran},
booktitle={IEEE International Conference on Acoustics, Speech and Signal Processing (ICASSP)},
pages={6915--6919},
year={2024},
organization={IEEE}
}

@article{ma2024fedhpl,
  title={FedHPL: Efficient Heterogeneous Federated Learning with Prompt Tuning and Logit Distillation},
  author={Ma, Yuting and Cheng, Lechao and Wang, Yaxiong and Zhong, Zhun and Xu, Xiaohua and Wang, Meng},
  journal={arXiv preprint arXiv:2405.17267},
  year={2024}
}

@inproceedings{yang2023efficient,
  title={Efficient model personalization in federated learning via client-specific prompt generation},
  author={Yang, Fu-En and Wang, Chien-Yi and Wang, Yu-Chiang Frank},
  booktitle={Proceedings of the IEEE/CVF International Conference on Computer Vision},
  pages={19159--19168},
  year={2023}
}

@inproceedings{tamirisafedselect,
  title={FedSelect: Customized Selection of Parameters for Fine-Tuning during Personalized Federated Learning},
  author={Tamirisa, Rishub and Won, John and Lu, Chengjun and Arel, Ron and Zhou, Andy},
  booktitle={Federated Learning and Analytics in Practice: Algorithms, Systems, Applications, and Opportunities},
  year={2023}
}

@inproceedings{karimireddy2020scaffold,
  title={Scaffold: Stochastic controlled averaging for federated learning},
  author={Karimireddy, Sai Praneeth and Kale, Satyen and Mohri, Mehryar and Reddi, Sashank and Stich, Sebastian and Suresh, Ananda Theertha},
  booktitle={International conference on machine learning},
  pages={5132--5143},
  year={2020},
  organization={PMLR}
}

@article{li2019convergence,
  title={On the convergence of fedavg on non-iid data},
  author={Li, Xiang and Huang, Kaixuan and Yang, Wenhao and Wang, Shusen and Zhang, Zhihua},
  journal={International Conference on Learning Representations},
  year={2019}
}

@article{li2023convergence,
  title={Convergence analysis of sequential federated learning on heterogeneous data},
  author={Li, Yipeng and Lyu, Xinchen},
  journal={Advances in Neural Information Processing Systems},
  volume={36},
  pages={56700--56755},
  year={2023}
}

@inproceedings{koloskova2020unified,
  title={A unified theory of decentralized SGD with changing topology and local updates},
  author={Koloskova, Anastasia and Loizou, Nicolas and Boreiri, Sadra and Jaggi, Martin and Stich, Sebastian},
  booktitle={International conference on machine learning},
  pages={5381--5393},
  year={2020},
  organization={PMLR}
}

@article{ajalloeian2020convergence,
  title={On the convergence of SGD with biased gradients},
  author={Ajalloeian, Ahmad and Stich, Sebastian U},
  journal={arXiv preprint arXiv:2008.00051},
  year={2020}
}

@article{hu2020biased,
  title={Biased stochastic first-order methods for conditional stochastic optimization and applications in meta learning},
  author={Hu, Yifan and Zhang, Siqi and Chen, Xin and He, Niao},
  journal={Advances in Neural Information Processing Systems},
  volume={33},
  pages={2759--2770},
  year={2021}
}

@inproceedings{karimi2019non,
  title={Non-asymptotic analysis of biased stochastic approximation scheme},
  author={Karimi, Belhal and Miasojedow, Blazej and Moulines, Eric and Wai, Hoi-To},
  booktitle={Conference on Learning Theory},
  pages={1944--1974},
  year={2019},
  organization={PMLR}
}

@article{bottou2018optimization,
  title={Optimization methods for large-scale machine learning},
  author={Bottou, L{\'e}on and Curtis, Frank E and Nocedal, Jorge},
  journal={SIAM review},
  volume={60},
  number={2},
  pages={223--311},
  year={2018},
  publisher={SIAM}
}

@inproceedings{gower2019sgd,
  title={SGD: General analysis and improved rates},
  author={Gower, Robert Mansel and Loizou, Nicolas and Qian, Xun and Sailanbayev, Alibek and Shulgin, Egor and Richt{\'a}rik, Peter},
  booktitle={International conference on machine learning},
  pages={5200--5209},
  year={2019},
  organization={PMLR}
}

@article{demidovich2023guide,
  title={A guide through the zoo of biased SGD},
  author={Demidovich, Yury and Malinovsky, Grigory and Sokolov, Igor and Richt{\'a}rik, Peter},
  journal={Advances in Neural Information Processing Systems},
  volume={36},
  pages={23158--23171},
  year={2023}
}

@inproceedings{karimi2016linear,
  title={Linear convergence of gradient and proximal-gradient methods under the polyak-{\l}ojasiewicz condition},
  author={Karimi, Hamed and Nutini, Julie and Schmidt, Mark},
  booktitle={Joint European conference on machine learning and knowledge discovery in databases},
  pages={795--811},
  year={2016},
  organization={Springer}
}

@article{shazeer2017outrageously,
  title={Outrageously large neural networks: The sparsely-gated mixture-of-experts layer},
  author={Shazeer, Noam and Mirhoseini, Azalia and Maziarz, Krzysztof and Davis, Andy and Le, Quoc and Hinton, Geoffrey and Dean, Jeff},
  journal={arXiv preprint arXiv:1701.06538},
  year={2017}
}

@article{fedus2022switch,
  title={Switch transformers: Scaling to trillion parameter models with simple and efficient sparsity},
  author={Fedus, William and Zoph, Barret and Shazeer, Noam},
  journal={Journal of Machine Learning Research},
  volume={23},
  number={120},
  pages={1--39},
  year={2022}
}

@inproceedings{du2022glam,
  title={Glam: Efficient scaling of language models with mixture-of-experts},
  author={Du, Nan and Huang, Yanping and Dai, Andrew M and Tong, Simon and Lepikhin, Dmitry and Xu, Yuanzhong and Krikun, Maxim and Zhou, Yanqi and Yu, Adams Wei and Firat, Orhan and others},
  booktitle={International conference on machine learning},
  pages={5547--5569},
  year={2022},
  organization={PMLR}
}

@article{jiang2024mixtral,
  title={Mixtral of experts},
  author={Jiang, Albert Q and Sablayrolles, Alexandre and Roux, Antoine and Mensch, Arthur and Savary, Blanche and Bamford, Chris and Chaplot, Devendra Singh and Casas, Diego de las and Hanna, Emma Bou and Bressand, Florian and others},
  journal={arXiv preprint arXiv:2401.04088},
  year={2024}
}

@article{liu2024deepseek,
  title={Deepseek-v2: A strong, economical, and efficient mixture-of-experts language model},
  author={Liu, Aixin and Feng, Bei and Wang, Bin and Wang, Bingxuan and Liu, Bo and Zhao, Chenggang and Dengr, Chengqi and Ruan, Chong and Dai, Damai and Guo, Daya and others},
  journal={arXiv preprint arXiv:2405.04434},
  year={2024}
}

@inproceedings{lepikhin2021gshard,
title={{\{}GS{\}}hard: Scaling Giant Models with Conditional Computation and Automatic Sharding},
author={Dmitry Lepikhin and HyoukJoong Lee and Yuanzhong Xu and Dehao Chen and Orhan Firat and Yanping Huang and Maxim Krikun and Noam Shazeer and Zhifeng Chen},
booktitle={International Conference on Learning Representations},
year={2021},
url={https://openreview.net/forum?id=qrwe7XHTmYb}
}

@article{wang2024auxiliary,
  title={Auxiliary-loss-free load balancing strategy for mixture-of-experts},
  author={Wang, Lean and Gao, Huazuo and Zhao, Chenggang and Sun, Xu and Dai, Damai},
  journal={arXiv preprint arXiv:2408.15664},
  year={2024}
}

@inproceedings{han2025a,
title={A Theoretical Framework for Auxiliary-Loss-Free Load-Balancing of Sparse Mixture-of-Experts in Large-Scale {AI} Models},
author={X.Y. Han and Yuan Zhong},
booktitle={NeurIPS 2025 Workshop MLxOR: Mathematical Foundations and Operational Integration of Machine Learning for Uncertainty-Aware Decision-Making},
year={2025},
url={https://openreview.net/forum?id=sQHZ2w8UzP}
}

@article{liu2024grin,
  title={Grin: Gradient-informed moe},
  author={Liu, Liyuan and Kim, Young Jin and Wang, Shuohang and Liang, Chen and Shen, Yelong and Cheng, Hao and Liu, Xiaodong and Tanaka, Masahiro and Wu, Xiaoxia and Hu, Wenxiang and others},
  journal={arXiv preprint arXiv:2409.12136},
  year={2024}
}

@inproceedings{wang2025remoe,
    title={ReMoE: Fully Differentiable Mixture-of-Experts with Re{LU} Routing},
    author={Ziteng Wang and Jun Zhu and Jianfei Chen},
    booktitle={The Thirteenth International Conference on Learning Representations},
    year={2025},
    url={https://openreview.net/forum?id=4D0f16Vwc3}
}

@article{panda2025dense,
  title={Dense backpropagation improves training for sparse mixture-of-experts},
  author={Panda, Ashwinee and Baherwani, Vatsal and Sarwar, Zain and Therien, Benjamin and Sahu, Sambit and Goldstein, Tom and Chakraborty, Supriyo},
  journal={arXiv preprint arXiv:2504.12463},
  year={2025}
}

@inproceedings{liu2022theoretical,
 author = {Liu, Bo and Feng, Xidong and Ren, Jie and Mai, Luo and Zhu, Rui and Zhang, Haifeng and Wang, Jun and Yang, Yaodong},
 booktitle = {Advances in Neural Information Processing Systems},
 editor = {S. Koyejo and S. Mohamed and A. Agarwal and D. Belgrave and K. Cho and A. Oh},
 pages = {31059--31072},
 publisher = {Curran Associates, Inc.},
 title = {A Theoretical Understanding of Gradient Bias in Meta-Reinforcement Learning},
 volume = {35},
 year = {2022}
}

@inproceedings{bisk2020piqa,
  title={Piqa: Reasoning about physical commonsense in natural language},
  author={Bisk, Yonatan and Zellers, Rowan and Gao, Jianfeng and Choi, Yejin and others},
  booktitle={Proceedings of the AAAI conference on artificial intelligence},
  volume={34},
  number={05},
  pages={7432--7439},
  year={2020}
}

@inproceedings{mihaylov2018can,
  title={Can a Suit of Armor Conduct Electricity? A New Dataset for Open Book Question Answering},
  author={Mihaylov, Todor and Clark, Peter and Khot, Tushar and Sabharwal, Ashish},
  booktitle={Proceedings of the 2018 Conference on Empirical Methods in Natural Language Processing},
  pages={2381--2391},
  year={2018}
}

@inproceedings{zellers2019hellaswag,
  title={HellaSwag: Can a Machine Really Finish Your Sentence?},
  author={Zellers, Rowan and Holtzman, Ari and Bisk, Yonatan and Farhadi, Ali and Choi, Yejin},
  booktitle={Proceedings of the 57th Annual Meeting of the Association for Computational Linguistics},
  pages={4791--4800},
  year={2019}
}

@article{sap2019socialiqa,
  title={Socialiqa: Commonsense reasoning about social interactions},
  author={Sap, Maarten and Rashkin, Hannah and Chen, Derek and LeBras, Ronan and Choi, Yejin},
  journal={arXiv preprint arXiv:1904.09728},
  year={2019}
}

@article{clark2018think,
  title={Think you have solved question answering? try arc, the ai2 reasoning challenge},
  author={Clark, Peter and Cowhey, Isaac and Etzioni, Oren and Khot, Tushar and Sabharwal, Ashish and Schoenick, Carissa and Tafjord, Oyvind},
  journal={arXiv preprint arXiv:1803.05457},
  year={2018}
}

@article{clark2019boolq,
  title={Boolq: Exploring the surprising difficulty of natural yes/no questions},
  author={Clark, Christopher and Lee, Kenton and Chang, Ming-Wei and Kwiatkowski, Tom and Collins, Michael and Toutanova, Kristina},
  journal={arXiv preprint arXiv:1905.10044},
  year={2019}
}

@article{sakaguchi2021winogrande,
author = {Sakaguchi, Keisuke and Bras, Ronan Le and Bhagavatula, Chandra and Choi, Yejin},
title = {WinoGrande: an adversarial winograd schema challenge at scale},
year = {2021},
issue_date = {September 2021},
publisher = {Association for Computing Machinery},
address = {New York, NY, USA},
volume = {64},
number = {9},
issn = {0001-0782},
url = {https://doi.org/10.1145/3474381},
doi = {10.1145/3474381},
journal = {Commun. ACM},
month = aug,
pages = {99–106},
numpages = {8}
}

@inproceedings{singhalfedex,
  title={FedEx-LoRA: Exact Aggregation for Federated and Efficient Fine-Tuning of Foundation Models},
  author={Singhal, Raghav and Ponkshe, Kaustubh and Vepakomma, Praneeth},
  booktitle={First Workshop on Scalable Optimization for Efficient and Adaptive Foundation Models},
  year={2024}
}

@article{ponkshe2024initialization,
  title={Initialization using update approximation is a silver bullet for extremely efficient low-rank fine-tuning},
  author={Ponkshe, Kaustubh and Singhal, Raghav and Gorbunov, Eduard and Tumanov, Alexey and Horvath, Samuel and Vepakomma, Praneeth},
  journal={arXiv preprint arXiv:2411.19557},
  year={2024}
}

@article{muennighoff2024olmoe,
  title={Olmoe: Open mixture-of-experts language models},
  author={Muennighoff, Niklas and Soldaini, Luca and Groeneveld, Dirk and Lo, Kyle and Morrison, Jacob and Min, Sewon and Shi, Weijia and Walsh, Pete and Tafjord, Oyvind and Lambert, Nathan and others},
  journal={arXiv preprint arXiv:2409.02060},
  year={2024}
}

@inproceedings{groeneveld2024olmo,
  title={OLMo: Accelerating the science of language models},
  author={Groeneveld, Dirk and Beltagy, Iz and Walsh, Evan and Bhagia, Akshita and Kinney, Rodney and Tafjord, Oyvind and Jha, Ananya and Ivison, Hamish and Magnusson, Ian and Wang, Yizhong and others},
  booktitle={Proceedings of the 62nd annual meeting of the association for computational linguistics (volume 1: Long papers)},
  pages={15789--15809},
  year={2024}
}

@misc{holm2021bidirectional,
  title={Bidirectional encoder representations from transformers (bert) for question answering in the telecom domain.: Adapting a bert-like language model to the telecom domain using the electra pre-training approach},
  author={Holm, Henrik},
  year={2021}
}

@article{ying2026exact,
  title={Exact and linear convergence for federated learning under arbitrary client participation is attainable},
  author={Ying, Bicheng and Li, Zhe and Yang, Haibo},
  journal={Advances in Neural Information Processing Systems},
  volume={38},
  pages={40156--40201},
  year={2026}
}

@article{liu2025analysis,
  title={Analysis of regularized federated learning},
  author={Liu, Langming and Zhou, Ding-Xuan},
  journal={Neurocomputing},
  volume={611},
  pages={128579},
  year={2025},
  publisher={Elsevier}
}

@inproceedings{bao2025efskip,
  title={EFSkip: A new error feedback with linear speedup for compressed federated learning with arbitrary data heterogeneity},
  author={Bao, Hongyan and Chen, Pengwen and Sun, Ying and Li, Zhize},
  booktitle={Proceedings of the AAAI Conference on Artificial Intelligence},
  volume={39},
  number={15},
  pages={15489--15497},
  year={2025}
}

@article{zhou2023every,
  title={Every parameter matters: Ensuring the convergence of federated learning with dynamic heterogeneous models reduction},
  author={Zhou, Hanhan and Lan, Tian and Venkataramani, Guru Prasadh and Ding, Wenbo},
  journal={Advances in Neural Information Processing Systems},
  volume={36},
  pages={25991--26002},
  year={2023}
}

@article{han2024convergence,
  title={Convergence analysis of split federated learning on heterogeneous data},
  author={Han, Pengchao and Huang, Chao and Tian, Geng and Tang, Ming and Liu, Xin},
  journal={Advances in Neural Information Processing Systems},
  volume={37},
  pages={103476--103544},
  year={2024}
}

@article{lan2024asynchronous,
  title={Asynchronous federated reinforcement learning with policy gradient updates: Algorithm design and convergence analysis},
  author={Lan, Guangchen and Han, Dong-Jun and Hashemi, Abolfazl and Aggarwal, Vaneet and Brinton, Christopher G},
  journal={arXiv preprint arXiv:2404.08003},
  year={2024}
}

@article{zong2025convergence,
  title={Convergence-Guaranteed Federated Learning through Gradient Trajectory Smoothing with Triple-Objective Decomposition},
  author={Zong, Haoran and Zhang, Xiao and Li, Ruichen and Duan, Jianhui and Zou, Derun and Li, Wenzhong},
  journal={ACM Transactions on Knowledge Discovery from Data},
  volume={19},
  number={7},
  pages={1--31},
  year={2025},
  publisher={ACM New York, NY}
}

@inproceedings{geva2021transformer,
  title={Transformer feed-forward layers are key-value memories},
  author={Geva, Mor and Schuster, Roei and Berant, Jonathan and Levy, Omer},
  booktitle={Proceedings of the 2021 Conference on Empirical Methods in Natural Language Processing},
  pages={5484--5495},
  year={2021}
}

@article{hsu2019measuring,
  title={Measuring the effects of non-identical data distribution for federated visual classification},
  author={Hsu, Tzu-Ming Harry and Qi, Hang and Brown, Matthew},
  journal={arXiv preprint arXiv:1909.06335},
  year={2019}
}

@inproceedings{yurochkin2019bayesian,
  title={Bayesian nonparametric federated learning of neural networks},
  author={Yurochkin, Mikhail and Agarwal, Mayank and Ghosh, Soumya and Greenewald, Kristjan and Hoang, Nghia and Khazaeni, Yasaman},
  booktitle={International conference on machine learning},
  pages={7252--7261},
  year={2019},
  organization={PMLR}
}

@article{pfeiffer2023federated,
  title={Federated learning for computationally constrained heterogeneous devices: A survey},
  author={Pfeiffer, Kilian and Rapp, Martin and Khalili, Ramin and Henkel, J{\"o}rg},
  journal={ACM Computing Surveys},
  volume={55},
  number={14s},
  pages={1--27},
  year={2023},
  publisher={ACM New York, NY}
}

@article{kaplan2020scaling,
  title={Scaling laws for neural language models},
  author={Kaplan, Jared and McCandlish, Sam and Henighan, Tom and Brown, Tom B and Chess, Benjamin and Child, Rewon and Gray, Scott and Radford, Alec and Wu, Jeffrey and Amodei, Dario},
  journal={arXiv preprint arXiv:2001.08361},
  year={2020}
}
\bibliographystyle{icml2026}

\newpage
\appendix
\onecolumn

\section*{Appendix Overview}
\begin{itemize}[leftmargin=*,itemsep=0pt,topsep=0pt]
    \item \textbf{Appendix~\ref{sec:definition}}: Foundational Definitions and Notation
    \item \textbf{Appendix~\ref{sec:proof}}: Proofs (Key Lemmas, Theorem~\ref{theorem:1})
    \item \textbf{Appendix~\ref{sec:detailed-related}}: Detailed Related Work
    \item \textbf{Appendix~\ref{sec:pseudo-code}}: Algorithm Pseudo-code
    \item \textbf{Appendix~\ref{sec:additional-ex-setup}}: Additional Experimental Details
    \item \textbf{Appendix~\ref{sec:complexity}}: Complexity Analysis
    \item \textbf{Appendix~\ref{appendix:additional-result}}: Additional Experiments and Ablations
    \item \textbf{Appendix~\ref{sec:discussion}}: Discussion and Limitations
\end{itemize}

\section{Foundational Definitions and Notation} \label{sec:definition}

\subsection{Federated Fine-tuning with LoRA}
\begin{definition}(\textbf{Low-Rank Adaptation (LoRA)} \cite{hulora}) For a pre-trained weight matrix $\mathbf{W}_0 \in \mathbb{R}^{d \times l}$, LoRA adapts this weight via a low-rank update parameterized by two trainable matrices, $B \in \mathbb{R}^{d \times r}$ and $A \in \mathbb{R}^{r \times l}$. We denote the set of trainable parameters as $\Theta = \{B, A\}$. The resulting adapted weight matrix $W(\Theta)$ is defined as: $W(\Theta)=W_{0}+\Delta W(\Theta) = W_{0}+\frac{\alpha}{r}BA$, where $r \in \mathbb{Z}^+ \ll \min(d,l)$ is the rank and $\alpha \in \mathbb{R}^+$ is the scaling factor. During fine-tuning, $W_{0}$ remains frozen, and optimization occurs only over $\Theta$.
    \label{def:fine-tune-LoRA}
\end{definition}

\begin{definition} \textbf{(Federated Fine-tuning with Homogeneous LoRA)}
	Let $\mathcal{X} \times \mathcal{Y}$ denote the input-output space. Consider $C$ clients, with $c\in[C] = \{1, \dots, C\}$. Each client $c$ has local dataset $\mathcal{D}_{c} \subseteq \mathcal{X}\times \mathcal{Y}$. Assuming a homogeneous rank $r$ across all clients, we seek a shared set of global trainable parameters $\Theta$. Each client $c$'s local objective function is defined as: $F_{c}(\Theta)=\frac{1}{|\mathcal{D}_{c}|}\sum_{(x,y)\in\mathcal{D}_{c}}\mathcal{L}(W(\Theta);x,y)$, where $\mathcal{L}$ is the loss function and $W(\Theta)$ denotes the model weights adapted by $\Theta$ according to Definition~\ref{def:fine-tune-LoRA}. Federated fine-tuning with homogeneous LoRA solves the optimization problem:
    \begin{equation}
        \min_{\Theta} F(\Theta) = \sum_{c=1}^{C}p_{c}F_{c}(\Theta),
    \end{equation}
    where $p_c = |\mathcal{D}_c| / \sum_{c'} |\mathcal{D}_{c'}|$ is the statistical weight of client $c$.
\end{definition}

\subsection{Notation} \label{ssec:notation}
We analyze the convergence over the space of trainable LoRA parameters $\Theta$. Let $\Theta^{t}$ be the global parameters at communication round $t$. $\Theta_{c}^{t,k}$ denotes the local parameters of client $c$ at local iteration $k$ ($k=0, \dots, \Gamma-1$), initialized with $\Theta_{c}^{t,0} = \Theta^{t}$ and $\Gamma$ is the total number of local iterations per round. We utilize a decaying learning rate schedule, as specified in Theorem 4.1. Let $\eta_t$ be the effective global learning rate at round $t$, and $\eta_{l,t} = \eta_t / \Gamma$ be the corresponding local learning rate. $\Delta_{t}=F(\Theta^{t})-F(\Theta^{*})$ is the optimality gap, where $\Theta^*$ is a global optimum.

$U^t$ denotes the realized aggregated stochastic update used to update the global model: $\Theta^{t+1} = \Theta^t - \eta_t U^t$. $G^t = \mathbb{E}[U^t]$ is the expected aggregated update direction. $\mathbb{E}_t[\cdot]$ denotes the expectation conditioned on $\Theta^t$, while $g_c(\Theta)$ denotes the stochastic sparse gradient estimator used by client $c$. The expected sparse gradient evaluated at the global model is $\overline{g}(\Theta^{t})=\sum_{c}p_{c}\mathbb{E}[g_{c}(\Theta^{t})]$. Let $B(\Theta)$ represent the aggregate bias function, defined as $B(\Theta) = \nabla F(\Theta) - \overline{g}(\Theta)$ and $L_B$ is the Lipschitz constant of the bias (Assumption \ref{assum:bias}).

\begin{table}[ht]
\centering
\caption{Notation Summarization}
\begin{tblr}{
  width = \linewidth,
  colspec = {Q[60]Q[300]Q[65]Q[300]},
  row{1} = {c},
  cell{2}{1} = {c},
  cell{2}{3} = {c},
  cell{3}{1} = {c},
  cell{3}{3} = {c},
  cell{4}{1} = {c},
  cell{4}{3} = {c},
  cell{5}{1} = {c},
  cell{5}{3} = {c},
  cell{6}{1} = {c},
  cell{6}{3} = {c},
  cell{7}{1} = {c},
  cell{7}{3} = {c},
  cell{8}{1} = {c},
  cell{8}{3} = {c},
  cell{9}{1} = {c},
  cell{9}{3} = {c},
  cell{10}{1} = {c},
  cell{10}{3} = {c},
  cell{11}{1} = {c},
  cell{11}{3} = {c},
  cell{12}{1} = {c},
  cell{12}{3} = {c},
  vline{2-4} = {-}{},
  hline{1,13} = {-}{0.08em},
  hline{2} = {-}{},
  hline{2} = {2}{-}{},
}
\textbf{Notation}                       & \textbf{Description}                                                      & \textbf{Notation}                 & \textbf{Description}                                                           \\
$p_c$                                & Statistical weight of client $c$                                        & $W_0$                          & Pre-trained weight matrix.                                                     \\
$\beta_c$                            & Computational budget of client $c$, $\beta_c \in [0, 1]$.            & $B, A$                          & Trainable LoRA matrices.                                                       \\
$T, \Gamma$                           & Number of communication rounds and local iterations per round.            & $M$                             & Total number of experts in an SMoE layer.                                      \\
$\eta_t, \eta_{l,t}$              & Global and local learning rates at round $t$.                           & $W_i^{(e)}, W^{(r)}$ & Weights of the $i$-th expert and the router.                                 \\
$\Theta^*$                           & Global optimum.                                                           & $K_{max}$                    & Maximum allowable sparsity.                                                    \\
$\Theta^t, \Theta_c^{t,k}$       & Global parameters at round $t$ and local parameters at iteration $k$. & $K_c$                          & Client-specific sparsity level, $K_c = \lfloor K_{max}\beta_c \rfloor$. \\
$\Theta_i^{(e)}, \Theta^{(r)}$ & LoRA parameters for the $i$-th expert and the router.                   & $\mathcal{A}_c(x)$             & Activation set (indices of the Top-$K_c$ experts) for client $c$.         \\
$B(\Theta)$                           & Aggregate bias of the gradient estimator.                                 & $\gamma_i(s)$                  & Gating weight for expert $i$.                                                \\
$g_{c,i}$                          & Sparse stochastic gradient.                                               & $\mathbb{I}(\cdot)$             & Indicator function.                                                            \\
$\Delta_t$                           & Optimality gap, $F(\Theta^t) - F(\Theta^*)$.                          & $L_B$                          & Lipschitz constant of the bias function.                                       \\
$E_{drift}^{t}$                   & Client drift error at round $t$.                                        & $L_{eff}$                    & Effective Lipschitz constant ($L + L_B$).                                     
\end{tblr}
\end{table}

\section{Proofs} \label{sec:proof}
\subsection{Key Lemmas}
We introduce two lemmas that bound the errors introduced by local updates (client drift) and the SMoE bias.

\begin{lemma}\textbf{(Bounded Client Drift)}. Under Assumptions \ref{assum:smoothness}, \ref{assum:bounded-variance-gradient}, and \ref{assum:bias}, the expected aggregated update direction $G^t = \mathbb{E}[U^t]$ (where $U^t$ is the realized aggregated stochastic update) can be decomposed as:
    \begin{equation}
        G^t = \nabla F(\Theta^t) - B(\Theta^t) + E_{drift}^t.
    \end{equation}
The client drift error $E_{drift}^t$ is bounded by:
    \begin{equation}
        \mathbb{E}[\|E_{drift}^t\|^2] \leq C_D \eta^2_t G^2,
    \end{equation}
    where $C_{D}=L_{eff}^{2}\frac{(\Gamma-1)(2\Gamma-1)}{6\Gamma^{2}}$ is a constant depending on $L_{eff}=L+L_B$ (Assumption \ref{assum:bias}) and $\Gamma$.
    \label{lemma:1}
\end{lemma}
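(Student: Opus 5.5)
The plan is to write the aggregated update as an average of local sparse gradients along each client's local trajectory. I will then split each term into the same estimator evaluated at the global model plus a deviation, and control the deviation by how far the local iterate drifts within one round.

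\textbf{Decomposition.} Take the normalization implied by the notation: $U^t = \frac{1}{\Gamma}\sum_c p_c \sum_{k=0}^{\Gamma-1} g_c(\Theta_c^{t,k})$, with local steps $\Theta_c^{t,k+1} = \Theta_c^{t,k} - \eta_{l,t}\, g_c(\Theta_c^{t,k})$ and $\eta_{l,t} = \eta_t/\Gamma$. Taking expectations gives $G^t = \overline{g}(\Theta^t) + E^t_{drift}$, where
\[
E^t_{drift} = \frac{1}{\Gamma}\sum_c p_c \sum_{k} \mathbb{E}\bigl[\overline{g}_c(\Theta_c^{t,k}) - \overline{g}_c(\Theta^t)\bigr], \qquad \overline{g}_c = \mathbb{E}[g_c].
\]
Since $B(\Theta) = \nabla F(\Theta) - \overline{g}(\Theta)$ by definition, this is exactly $G^t = \nabla F(\Theta^t) - B(\Theta^t) + E^t_{drift}$, so the first claim is pure bookkeeping.

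\textbf{Lipschitz bound on the expected estimator.} We have $\overline{g} = \nabla F - B$. Assumption~\ref{assum:smoothness} makes $\nabla F$ $L$-Lipschitz, and Assumption~\ref{assum:bias} makes $B$ $L_B$-Lipschitz, so $\overline{g}$ is $L_{eff}$-Lipschitz. To apply this per client, I will assume, as the paper implicitly does, that the same constant holds for each $\overline{g}_c$; equivalently, the drift is analyzed through the aggregate map.

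\textbf{Per-step drift bound.} For each $k$, $\|\Theta_c^{t,k} - \Theta^t\| = \eta_{l,t}\bigl\|\sum_{j<k} g_c(\Theta_c^{t,j})\bigr\|$. By Cauchy--Schwarz and Assumption~\ref{assum:bounded-variance-gradient},
\[
\mathbb{E}\|\Theta_c^{t,k} - \Theta^t\|^2 \le \eta_{l,t}^2\, k \sum_{j<k} \mathbb{E}\|g_c\|^2 \le \eta_{l,t}^2 k^2 G^2 .
\]

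\textbf{Combining.} By Jensen's inequality over the convex combination in $c$ and the average over $k$,
\[
\mathbb{E}\|E^t_{drift}\|^2 \le \frac{1}{\Gamma}\sum_k L_{eff}^2\, \eta_t^2 \frac{k^2}{\Gamma^2} G^2 .
\]
Using $\sum_{k=0}^{\Gamma-1} k^2 = (\Gamma-1)\Gamma(2\Gamma-1)/6$, this equals $L_{eff}^2 \eta_t^2 G^2 \frac{(\Gamma-1)(2\Gamma-1)}{6\Gamma^2}$, which is exactly $C_D \eta_t^2 G^2$.

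\textbf{Main obstacle.} The difficulty is passing Lipschitzness from the aggregate bias to per-client expected estimators, and moving the expectation inside the norm. For the latter I will use Jensen, $\|\mathbb{E}X\|^2 \le \mathbb{E}\|X\|^2$. For the former I will either take the client-level analogue of Assumption~\ref{assum:bias}, or write the drift term as $\overline{g}$ evaluated at the client-averaged trajectory, whichever the intended assumption supports.
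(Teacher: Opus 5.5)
Your proposal is correct and follows the paper's proof essentially step for step: the same definition $E^t_{drift} = G^t - \overline{g}(\Theta^t)$, Jensen over the average in $(c,k)$, the $L_{eff}$-Lipschitz bound on $\mathbb{E}[g_c]$, the Cauchy--Schwarz bound $\eta_{l,t}^2 k^2 G^2$ on local divergence, and the sum-of-squares identity. The paper also silently applies Assumption~\ref{assum:bias} per client (it writes $\mathbb{E}[g_c] = \nabla F_c - B_c$), so your first option, a client-level version of that assumption, is exactly what it implicitly uses. Your second option would not work: the clients' iterates $\Theta_c^{t,k}$ differ, so $\sum_c p_c \overline{g}_c(\Theta_c^{t,k})$ is not $\overline{g}$ evaluated at any single point.
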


\begin{proof}
    The drift error is defined as the difference between the expected aggregated update $G^t$ and the expected sparse gradient evaluated at the global model $\overline{g}(\Theta^{t}) = \sum_c p_c \mathbb{E}[g_c(\Theta^t)]$:
    \begin{equation}
        E_{drift}^t = G^t - \bar{g}(\Theta^t) = \sum_c p_c \frac{1}{\Gamma} \sum_{k=0}^{\Gamma-1} \left( \mathbb{E}[g_c(\Theta_c^{t, k})] - \mathbb{E}[g_c(\Theta^t)] \right).
    \end{equation}

    We analyze the Lipschitz continuity of the expected sparse gradient $\mathbb{E}[g_c(\Theta)] = \nabla F_c(\Theta) - B_c(\Theta)$. By Assumption \ref{assum:smoothness} and \ref{assum:bias}:
    \begin{equation}
        \|\mathbb{E}[g_c(\Theta_1)] - \mathbb{E}[g_c(\Theta_2)]\| \leq (L+L_B)\|\Theta_1 - \Theta_2\| = L_{eff}\|\Theta_1 - \Theta_2\|.
    \end{equation}

We bound the squared norm, $\mathbb{E}[||E_{drift}^{t}||^{2}]$. By applying Jensen's inequality:
\begin{align}
    \begin{split}
        \mathbb{E}[||E_{drift}^{t}||^{2}] &\le \sum_{c}p_{c}\frac{1}{\Gamma}\sum_{k=0}^{\Gamma-1}\mathbb{E}[||\mathbb{E}[g_{c}(\Theta_{c}^{t,k})]-\mathbb{E}[g_{c}(\Theta^{t})]||^{2}] \\
        &\le L_{eff}^{2} \sum_{c}p_{c}\frac{1}{\Gamma}\sum_{k=0}^{\Gamma-1} \mathbb{E}[||\Theta_{c}^{t,k} - \Theta^{t}||^{2}]
    \end{split}
    \label{eq:25}
\end{align}
We bound the expected divergence of the local model $\Theta_{c}^{t,k}$ from the initial global model $\Theta^{t}.$ The local model at step $k$ is $\Theta_{c}^{t,k} = \Theta^t - \eta_{l,t} \sum_{j=0}^{k-1} g_c(\Theta_c^{t,j})$.
\begin{equation}
    \mathbb{E}[||\Theta_{c}^{t,k}-\Theta^{t}||^{2}]=\eta_{l,t}^{2}\mathbb{E}\left[ \left\|\sum_{j=0}^{k-1}g_{c}(\Theta_{c}^{t,j}) \right\|^{2}\right].
\end{equation}

Using Cauchy-Schwarz ($||\sum v_j||^2 \le k \sum ||v_j||^2$) and Assumption \ref{assum:bounded-variance-gradient}:
\begin{align}
    \begin{split}
        \mathbb{E}[||\Theta_{c}^{t,k}-\Theta^{t}||^{2}] &\le\eta_{l,t}^{2}k\sum_{j=0}^{k-1}\mathbb{E}[||g_{c}(\Theta_{c}^{t,j})||^{2}] \\
        &\le\eta_{l,t}^{2}k^{2}G^{2}. 
    \end{split}
\end{align}

Substituting this bound back into Eq.\ref{eq:25}:
\begin{equation}
    \mathbb{E}[||E_{drift}^{t}||^{2}]\le\frac{L_{eff}^{2}\eta_{l,t}^{2}G^{2}}{\Gamma}\sum_{k=0}^{\Gamma-1}k^{2}. 
\end{equation}

Using the formula $\sum_{k=0}^{\Gamma-1}k^{2}=\frac{(\Gamma-1)\Gamma(2\Gamma-1)}{6}$ and substituting $\eta_{l,t}=\eta_t/\Gamma$:
\begin{align}
    \begin{split}
        \mathbb{E}[||E_{drift}^{t}||^{2}]&\le L_{eff}^{2}(\frac{\eta_{t}}{\Gamma})^{2}G^{2}\frac{(\Gamma-1)(2\Gamma-1)}{6} \\
        &=\left(L_{eff}^{2}\frac{(\Gamma-1)(2\Gamma-1)}{6\Gamma^{2}}\right)\eta_{t}^{2}G^{2} = C_D \eta_t^2 G^2.
    \end{split}
\end{align}

\end{proof}

\begin{lemma} \textbf{(Bias Evolution Bound)}. Under Assumptions \ref{assum:LP} and \ref{assum:bias}, the squared norm of the aggregate bias at $W^t$ can be bounded as:
\begin{equation}
    \|B(\Theta^t)\|^2 \leq 2\|B(\Theta^*)\|^2 + \frac{4L_B^2}{\mu} \Delta_t.
\end{equation}
    \label{lemma:2}
\end{lemma}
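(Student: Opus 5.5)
The plan is to write $B(\Theta^t)$ as its value at the optimum plus a perturbation, and bound that perturbation with the Lipschitz property of the bias. Concretely, I start from
\[
B(\Theta^t) = B(\Theta^*) + \bigl(B(\Theta^t)-B(\Theta^*)\bigr)
\]
and apply the elementary inequality $\|a+b\|^2 \le 2\|a\|^2 + 2\|b\|^2$. This gives
\[
\|B(\Theta^t)\|^2 \le 2\|B(\Theta^*)\|^2 + 2\|B(\Theta^t)-B(\Theta^*)\|^2 .
\]
By Assumption \ref{assum:bias}, the second term is at most $2L_B^2\|\Theta^t-\Theta^*\|^2$. It remains to convert the squared parameter distance into the optimality gap $\Delta_t$.

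For that conversion I use the quadratic growth property implied by the PL condition (Assumption \ref{assum:LP}). It is a standard consequence of PL (Karimi et al., 2016) that
\[
F(\Theta)-F(\Theta^*) \ge \frac{\mu}{2}\,\|\Theta-\Theta_p\|^2,
\]
where $\Theta_p$ is the projection of $\Theta$ onto the set of global minimizers. Taking $\Theta^*$ to be the minimizer closest to $\Theta^t$, this becomes $\|\Theta^t-\Theta^*\|^2 \le \frac{2}{\mu}\Delta_t$. Substituting, the perturbation term is at most $2L_B^2\cdot\frac{2}{\mu}\Delta_t = \frac{4L_B^2}{\mu}\Delta_t$, which is exactly the claimed constant.

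The main obstacle is the quadratic growth step, for two reasons.

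\begin{itemize}
\item PL only controls the distance to the \emph{set} of minimizers, not to a fixed $\Theta^*$. I would either interpret $\Theta^*$ as the projection of $\Theta^t$, or assume the minimizer is unique. If $B$ differs across minimizers, the first option introduces a $t$-dependence in $B(\Theta^*)$, which I would handle by taking the supremum over the optimal set.
\item Quadratic growth has to be derived from PL together with $L$-smoothness. The standard route analyzes the gradient flow of $\sqrt{F-F^*}$ and shows its path length to the optimal set is at most $\sqrt{2\Delta/\mu}$. I would cite this result rather than reprove it.
\end{itemize}

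Everything else is the two-term splitting and the Lipschitz bound.
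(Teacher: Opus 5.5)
Your proposal is correct and follows the paper's proof essentially step for step: the same add-and-subtract of $B(\Theta^*)$ with $\|a+b\|^2\le 2\|a\|^2+2\|b\|^2$, the $L_B$-Lipschitz bound from Assumption~\ref{assum:bias}, and the PL$\Rightarrow$quadratic-growth step (Karimi et al.). As in the paper, $\Theta^*$ is reinterpreted as the projection of $\Theta^t$ onto the optimal set, giving $\|\Theta^t-\Theta^*\|^2\le \frac{2}{\mu}\Delta_t$. Your caveat is a point the paper silently glosses over: when minimizers are not unique, $B(\Theta^*)$ can depend on $t$, which you handle with a supremum over the optimal set. This matters for treating $\|B(\Theta^*)\|^2$ as a constant in Theorem~\ref{theorem:1}.
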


\begin{proof}
    We add and subtract the bias at the optimal point, $B(\Theta^*)$, inside the norm:
    \begin{align}
        \begin{split}
            ||B(\Theta^t)||^2 &= ||(B(\Theta^t) - B(\Theta^*)) + B(\Theta^*)||^2 \\
            & \le 2||B(\Theta^t) - B(\Theta^*)||^2 + 2||B(\Theta^*)||^2
        \end{split}
        \label{eq:37}
    \end{align}

    Using Assumption \ref{assum:bias}: $||B(\Theta^t) - B(\Theta^*)||^2 \le L_B^2 ||\Theta^t - \Theta^*||^2$, the Eq. \ref{eq:37} becomes:
    \begin{equation}
        ||B(\Theta^t)||^2 \le 2(L_B^2 ||\Theta^t - \Theta^*||^2) + 2||B(\Theta^*)||^2
        \label{eq:38}
    \end{equation}

    Now, we utilize Assumption \ref{assum:LP}, which is established in optimization theory \cite{karimi2016linear} that the PL condition implies the Quadratic Growth (QG) condition. Let $\mathcal{S}^*$ be the set of global optima and $F^*$ be the optimal value. The QG condition states:
    \begin{equation}
       F(\Theta^{t})-F^{*}\ge\frac{\mu}{2}||\Theta^{t}-\text{proj}_{\mathcal{S}^*}(\Theta^{t})||^{2},
    \end{equation}
    where $\text{proj}_{\mathcal{S}^*}(\Theta^{t})$ is the projection of $\Theta^{t}$ onto $\mathcal{S}^*$. For the remainder of this proof, we let $\Theta^*$ denote this specific projection. Rearranging the QG condition provides the required bound on the parameter distance:
    \begin{align}
        ||\Theta^t - \Theta^*||^2 &\le \frac{2}{\mu}(F(\Theta^t) - F(\Theta^*)) \\
        &\le \frac{2}{\mu}\Delta_t \quad \text{(with} \;\Delta_t = F(\Theta^t) - F(\Theta^*))
    \end{align}

    Finally, Equation \ref{eq:38} becomes:
    \begin{align}
        \begin{split}
            ||B(\Theta^t)||^2 &\le 2 L_B^2 \left( \frac{2}{\mu}\Delta_t \right) + 2||B(\Theta^*)||^2 \\
            &= \frac{4L_B^2}{\mu}\Delta_t + 2||B(\Theta^*)||^2
        \end{split}
    \end{align} 
\end{proof}

\begin{lemma} (\textbf{Square-Root Exponentially Weighted Sum Bound}). Let $c>0$, the following inequality holds for $T\ge 2$:
\begin{equation}
    \sum_{k=1}^{T} \frac{1}{k} \exp(-c(\sqrt{T} - \sqrt{k})) \le \frac{4}{c\sqrt{T}} + \mathcal{O}\left(\frac{1}{T}\right).
    \label{lemma:3}
\end{equation}
\end{lemma}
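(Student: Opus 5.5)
The plan is to split the sum at $k=\lfloor T/4\rfloor$. On the initial segment the exponential factor is tiny. On the tail we compare the sum with an integral, and the change of variables $s=\sqrt{x}$ turns that integral into an elementary one. Write $f(x)=\frac{1}{x}\exp(-c(\sqrt{T}-\sqrt{x}))$ for $x\in[1,T]$, so the sum is $\sum_{k=1}^T f(k)$.

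\emph{Initial segment $1\le k\le T/4$.} Here $\sqrt{T}-\sqrt{k}\ge \sqrt{T}/2$, so every term is at most $\frac{1}{k}e^{-c\sqrt{T}/2}$. The whole segment is therefore bounded by $(1+\ln T)\,e^{-c\sqrt{T}/2}$. This decays faster than any power of $T$, so it is $\mathcal{O}(1/T)$ for fixed $c>0$.

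\emph{Tail $T/4<k\le T$.} Compute $\frac{d}{dx}\ln f(x)=\frac{c}{2\sqrt{x}}-\frac{1}{x}$, which is positive once $\sqrt{x}>2/c$. So $f$ is increasing on $[T/4,T]$ as soon as $T>16/c^2$. Smaller $T$ form a finite set for fixed $c$, and those cases are absorbed into the $\mathcal{O}(1/T)$ constant. For an increasing function, $f(k)\le\int_k^{k+1}f$ for $k<T$, hence
\[
\sum_{T/4<k\le T} f(k)\;\le\; \int_{T/4}^{T} f(x)\,dx + f(T) \;=\; \int_{T/4}^{T} f(x)\,dx+\frac{1}{T}.
\]
For the integral, substitute $x=s^2$, $dx=2s\,ds$:
\[
\int_{T/4}^{T}\frac{1}{x}e^{-c(\sqrt{T}-\sqrt{x})}\,dx=\int_{\sqrt{T}/2}^{\sqrt{T}}\frac{2}{s}e^{-c(\sqrt{T}-s)}\,ds
\le \frac{4}{\sqrt{T}}\int_{-\infty}^{\sqrt{T}}e^{-c(\sqrt{T}-s)}\,ds=\frac{4}{c\sqrt{T}},
\]
where we used $2/s\le 4/\sqrt{T}$ for $s\ge\sqrt{T}/2$. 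Adding the two pieces gives $\frac{4}{c\sqrt{T}}+\mathcal{O}(1/T)$.

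\emph{Main obstacle.} The delicate step is the sum-to-integral comparison. The summand is not monotone on all of $[1,T]$: it decreases for $\sqrt{x}<2/c$. This is exactly why the split point $T/4$ is chosen. It places the non-monotone region inside the exponentially negligible initial segment, and on the tail only a single boundary term $f(T)=1/T$ must be added. The constant $4$ comes from the crude bound $1/s\le 2/\sqrt{T}$ on $[\sqrt{T}/2,\sqrt{T}]$, and it is all the statement requires.
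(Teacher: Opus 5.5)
Your proof is correct and follows essentially the same route as the paper: you split off an exponentially negligible head, compare the tail with the integral of an increasing function plus the boundary term $1/T$, and evaluate via the substitution $s=\sqrt{x}$. The only difference is bookkeeping. The paper splits at $T/2$ and first bounds $1/k\le 2/T$, so the remaining summand $e^{-c(\sqrt{T}-\sqrt{k})}$ is increasing for every $T$, and the non-monotonicity and small-$T$ case split ($T\le 16/c^2$) you identify as the main obstacle never arise. You instead keep $1/x$ inside the integrand and recover the same constant $4$ from $2/s\le 4/\sqrt{T}$ on $[\sqrt{T}/2,\sqrt{T}]$.
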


\begin{proof}
    Indeed, splitting the summation into two parts: $1 \le k \le T/2$ and $T/2 < k \le T$. Let $g(x) = \exp(-c(\sqrt{T}-\sqrt{x}))$. 

    In the range $1 \le k \le T/2$, we have $\sqrt{T}-\sqrt{k} \ge \sqrt{T} - \sqrt{T/2} = \sqrt{T}(1-1/\sqrt{2})$. Let $c' = c(1-1/\sqrt{2}) > 0$:
    \begin{equation}
        \sum_{k=1}^{\lfloor T/2 \rfloor} \frac{1}{k} g(k) \le \exp(-c'\sqrt{T}) \sum_{k=1}^{\lfloor T/2 \rfloor} \frac{1}{k} = \mathcal{O}(\exp(-c'\sqrt{T}) \log T).
    \end{equation}
    This term decays exponentially fast.

    In the range $T/2 < k \le T$, we use the bound $1/k \le 2/T$:
    \begin{equation}
        \sum_{k=\lfloor T/2\rfloor+1}^{T}\frac{1}{k}g(k) \le \frac{2}{T}\sum_{k=\lfloor T/2\rfloor+1}^{T}g(k).
    \end{equation}
    Since $g(x)$ is an increasing function, we bound the sum using an integral approximation:
    \begin{equation}
        \sum_{k=\lfloor T/2 \rfloor+1}^{T} g(k) \le \int_{\lfloor T/2 \rfloor+1}^{T} g(x) dx + g(T) \le \int_{T/2}^{T} \exp(-c(\sqrt{T}-\sqrt{x})) dx + 1.
        \label{eq:40}
    \end{equation}

    We analyze the integral $I = \int_{T/2}^{T} \exp(-c(\sqrt{T}-\sqrt{x})) dx$. Let $v = \sqrt{T}-\sqrt{x}$, then $x = (\sqrt{T}-v)^2$ and $dx = -2(\sqrt{T}-v)dv$. The limits change from $x=T/2$ (where $v=v_{max}=\sqrt{T}(1-1/\sqrt{2})$) to $x=T$ (where $v=0$).
    \begin{equation}
        I = \int_{0}^{v_{max}} \exp(-cv) 2(\sqrt{T}-v) dv = 2\sqrt{T} \int_{0}^{v_{max}} \exp(-cv) dv - 2 \int_{0}^{v_{max}} v \exp(-cv) dv,
    \end{equation}
    
    We evaluate the integrals:
    \begin{equation}
        \int_0^{v_{max}} \exp(-cv) dv = \frac{1}{c}(1- \exp(-cv_{max})), \quad \int_0^{v_{max}} v \exp(-cv) dv = \frac{1}{c^2}(1- \exp(-cv_{max})(1+cv_{max}))
    \end{equation}

    Since $v_{max}=\Theta(\sqrt{T})$, the terms involving $\exp(-cv_{max})$ decay exponentially:
    \begin{align}
        \begin{split}
            I &= \frac{2\sqrt{T}}{c}(1-e^{-cv_{max}}) - \frac{2}{c^2}(1-e^{-cv_{max}}(1+cv_{max})) \\
            &= \frac{2\sqrt{T}}{c} - \frac{2}{c^2} + \mathcal{O}(\sqrt{T} e^{-\Omega(\sqrt{T})}).
        \end{split}
    \end{align}

    Substituting this bound back into Eq. \ref{eq:40}:
    \begin{equation}
        \sum_{k=\lfloor T/2 \rfloor+1}^{T} \frac{1}{k} g(k) \le \frac{2}{T} (I + 1) = \frac{2}{T} \left(\frac{2\sqrt{T}}{c} - \frac{2}{c^2} + 1 + \dots\right) = \frac{4}{c\sqrt{T}} + \mathcal{O}\left(\frac{1}{T}\right).
    \end{equation}
\end{proof}

\subsection{Proof of Theorem \ref{theorem:1}}
\begin{proof}
We begin by analyzing the expected progress after one communication round. Let $\mathbb{E}_t[\cdot]$ denote the expectation conditioned on the parameters $\Theta^t$ at the start of round $t$. By Assumption \ref{assum:smoothness}:
\begin{equation}
    \mathbb{E}[F(\Theta^{t+1})] \le \mathbb{E}[F(\Theta^t) + \langle \nabla F(\Theta^t), \Theta^{t+1} - \Theta^t \rangle + \frac{L}{2} \|\Theta^{t+1} - \Theta^t\|^2] 
\end{equation}

The global update is $\Theta^{t+1} - \Theta^{t} = -\eta_t U^t$. Substituting this and subtracting the optimal value $F(\Theta^*)$ from both sides yields the recurrence for the optimality gap $\Delta_{t}$:
\begin{equation}
    \Delta_{t+1} \le \Delta_t - \eta_t \langle\nabla F(\Theta^{t}), U^t\rangle + \frac{L\eta_t^2}{2}||U^t||^2
\end{equation}

By Assumption \ref{assum:bounded-variance-gradient}, we assume $\mathbb{E}_t[||U^{t}||^{2}] \le G^2$. Since $\nabla F(\Theta^t)$ is fixed given $\Theta^t$, we have $\mathbb{E}_t[\langle\nabla F(\Theta^{t}),U^{t}\rangle] = \langle\nabla F(\Theta^{t}),\mathbb{E}_t[U^{t}]\rangle$. Let $G^t = \mathbb{E}_t[U^t]$:
\begin{equation}
    \mathbb{E}_t[\Delta_{t+1}]\le\Delta_{t}-\eta_{t}\langle\nabla F(\Theta^{t}),G^{t}\rangle+\frac{L\eta_{t}^{2}G^{2}}{2}.
\end{equation}

We substitute the decomposition of $G^t$ from Lemma \ref{lemma:1}, $G^{t}=\nabla F(\Theta^{t})-B(\Theta^{t})+E_{drift}^{t}$, we have:
\begin{align}
    \begin{split}
    \mathbb{E}_t[\Delta_{t+1}] &\le \Delta_{t}-\eta_{t}\langle\nabla F(\Theta^{t}),\nabla F(\Theta^{t})-B(\Theta^{t})+E_{drift}^{t}\rangle+\frac{L\eta_{t}^{2}G^{2}}{2} \\
    &= \Delta_{t}-\eta_{t}||\nabla F(\Theta^{t})||^{2}+\eta_{t}\langle\nabla F(\Theta^{t}),B(\Theta^{t})\rangle-\eta_{t}\langle\nabla F(\Theta^{t}),E_{drift}^{t}\rangle+\frac{L\eta_{t}^{2}G^{2}}{2}.
    \end{split}
    \label{eq:44}
\end{align}

We utilize Young's inequality, which states that for any vectors $u, v$ and scalar $\alpha > 0$, we have $|\langle u, v \rangle| \le \frac{\alpha}{2}||u||^2 + \frac{1}{2\alpha}||v||^2$. We choose $\alpha = 1/2$. First, we bound the bias term:
\begin{equation*}
    \langle\nabla F(\Theta^{t}),B(\Theta^{t})\rangle \le \frac{1}{4}||\nabla F(\Theta^{t})||^{2} + ||B(\Theta^{t})||^{2}  
\end{equation*}

Next, we bound the drift term:
\begin{equation*}
    -\langle\nabla F(\Theta^{t}),E_{drift}^{t}\rangle \le |\langle\nabla F(\Theta^{t}),E_{drift}^{t}\rangle| \le  \frac{1}{4}||\nabla F(\Theta^{t})||^{2} + ||E_{drift}^{t}||^{2}
\end{equation*}

Therefore Eq.\ref{eq:44} becomes:
\begin{align}
    \begin{split}
       \mathbb{E}[\Delta_{t+1}] &\le \mathbb{E}[\Delta_t] - \eta_t \mathbb{E}[||\nabla F(\Theta^{t})||^2] + \eta_t \mathbb{E}[\frac{1}{4}||\nabla F(\Theta^{t})||^2 + ||B(\Theta^t)||^2] \\ 
       &+ \eta_t \mathbb{E}[\frac{1}{4}||\nabla F(\Theta^{t})||^2 + ||E_{drift}^t||^2] + \frac{L\eta_t^2 G^2}{2} \\
       &= \mathbb{E}[\Delta_t] - \frac{\eta_t}{2}\mathbb{E}[||\nabla F(\Theta^{t})||^2] + \eta_t \mathbb{E}[||B(\Theta^{t})||^2] + \eta_t \mathbb{E}[||E_{drift}^{t}||^2] + \frac{L\eta_t^2 G^2}{2} \\
       &= \mathbb{E}[\Delta_t] -\eta_t \mu \mathbb{E}[\Delta_t] + \eta_t \mathbb{E}[||B(\Theta^{t})||^2] + \eta_t \mathbb{E}[||E_{drift}^{t}||^2] + \frac{L\eta_t^2 G^2}{2} \quad \text{(By Assumption \ref{assum:LP})}\\
       &= \mathbb{E}[\Delta_t] -\eta_t \mu \mathbb{E}[\Delta_t] + \frac{4\eta_t L_B^2}{\mu}\mathbb{E}[\Delta_t] + 2\eta_t||B(\Theta^*)||^2 + \eta_t \mathbb{E}[||E_{drift}^{t}||^2] + \frac{L\eta_t^2 G^2}{2} \quad \text{(By Lemma \ref{lemma:2})} \\
       &= \mathbb{E}[\Delta_t] -\eta_t \mu \mathbb{E}[\Delta_t] + \frac{4\eta_t L_B^2}{\mu}\mathbb{E}[\Delta_t] + 2\eta_t||B(\Theta^*)||^2 + C_D \eta_t^3 G^2 + \frac{L\eta_t^2 G^2}{2} \quad \text{(By Lemma \ref{lemma:1})} \\
       &= \left(1 - \eta_t \mu + \frac{4\eta_t L_B^2}{\mu}\right)\mathbb{E}[\Delta_t] + 2\eta_t||B(\Theta^*)||^2 + C_D \eta_t^3 G^2 + \frac{L\eta_t^2 G^2}{2}
    \end{split}
\end{align}

Let $C_B = 2\|B(\Theta^{*})\|^{2}$, $\mu' = \mu - \frac{4L_{B}^{2}}{\mu}$ denotes the bias-penalized curvature, and $V_t = C_{D}G^{2}\eta_{t}^{3}+\frac{LG^{2}}{2}\eta_{t}^{2}$ presents the variance and drift term. The recurrence is:
\begin{equation}
    \mathbb{E}[\Delta_{t+1}] \le (1-\eta_{t}\mu')\mathbb{E}[\Delta_{t}] + \eta_{t}C_B + V_t.
\end{equation}

We use the learning rate $\eta_{t}=\frac{1}{\sqrt{\Gamma(t+1)}}$ and let $\gamma=\sqrt{\Gamma}.$ Assuming $\Gamma$ is large enough such that $\eta_{t}\mu^{\prime}\le1$. Unrolling the recursion from $t=0$ to $T-1$:
\begin{align}
    \begin{split}
        \mathbb{E}[\Delta_{T}] &\le \Phi_{T-1, 0} \mathbb{E}[\Delta_{0}] + \sum_{t=0}^{T-1} \Phi_{T-1, t+1} (\eta_t C_B + V_t) \\
        &= \underbrace{\Phi_{T-1, 0} \mathbb{E}[\Delta_{0}]}_{(A1)} + \underbrace{C_B\sum_{t=0}^{T-1} \Phi_{T-1, t+1} \eta_t}_{(A2)} + \underbrace{\sum_{t=0}^{T-1} \Phi_{T-1, t+1} V_t}_{(A3)},
    \end{split}
    \label{eq:54}
\end{align}
where $\Phi_{k, j} = \prod_{i=j}^{k} (1-\eta_i\mu')$ is the contraction factor.

We analyze each term:
\begin{itemize}
    \item \textbf{Term (A1)}: Using $1-x \le \exp(-x)$, $\Phi_{T-1, 0} \le \exp(-\mu' \sum_{t=0}^{T-1} \eta_t)$. Since: 
    \begin{equation*}
        \sum_{t=0}^{T-1} \eta_t = \frac{1}{\gamma}\sum_{k=1}^{T} \frac{1}{\sqrt{k}} \ge \frac{1}{\gamma}\int_1^{T+1} \frac{1}{\sqrt{x}} dx = \frac{2}{\gamma}(\sqrt{T+1}-1),
    \end{equation*}
    the term (A1) decays exponentially fast with $\sqrt{T}$.

    \item \textbf{Term (A2)}: We utilize the identity $\sum_{t=0}^{T-1} \Phi_{T-1, t+1} (\eta_t\mu') = 1 - \Phi_{T-1, 0}$: 
    \begin{equation*}
        S_B = \frac{C_B}{\mu'} (1 - \Phi_{T-1, 0}).
    \end{equation*}
    As $T \rightarrow \infty$, this converges to the bias error floor $B_{SMoE} = \frac{C_B}{\mu'} = \frac{2\|B(\Theta^{*})\|^{2}}{\mu'}$.

    \item \textbf{Term (A3)}: We need to bound $V_t$: 
    \begin{equation}
        V_t = \frac{G^2}{\gamma^2(t+1)} \left(\frac{C_D}{\gamma\sqrt{t+1}} + \frac{L}{2}\right) \le \frac{C_V}{t+1},
    \end{equation}
    where $C_V = \frac{G^2}{\gamma^2} (\frac{C_D}{\gamma} + \frac{L}{2})$ (with $\gamma\ge 1, t\ge 0$).

We need a lower bound on the sum of learning rates to bound the contraction factor. Using integral approximation for decreasing functions:
\begin{equation}
    \sum_{j=t+1}^{T-1} \eta_j = \frac{1}{\gamma}\sum_{k=t+2}^{T} \frac{1}{\sqrt{k}} \ge \frac{1}{\gamma}\int_{t+2}^{T+1} \frac{1}{\sqrt{x}} dx = \frac{2}{\gamma}(\sqrt{T+1} - \sqrt{t+2}).
\end{equation}

Let $c' = \frac{2\mu'}{\gamma}$, then $\Phi_{T-1, t+1} \le \exp(-c'(\sqrt{T+1} - \sqrt{t+2}))$. So:
\begin{align}
    \begin{split}
        (A3) &\le C_V \sum_{t=0}^{T-1} \frac{1}{t+1} e^{-c'(\sqrt{T+1} - \sqrt{t+2})} \\
        &= C_V \sum_{k=2}^{T+1} \frac{1}{k-1} e^{-c'(\sqrt{T'} - \sqrt{k})}. \quad \text{(Let} \; T'=T+1 \; \text{and} k=t+1)
    \end{split}
\end{align}
Since $\frac{1}{k-1} \le \frac{2}{k}$ for $k\ge 2$, we have:

\begin{equation}
    (A3) \le 2 C_V \sum_{k=2}^{T+1} \frac{1}{k} e^{-c'(\sqrt{T'} - \sqrt{k})}.
\end{equation}

We apply Lemma 3 to this summation:
\begin{equation}
    (A3) \le 2 C_V \left(\frac{4}{c'\sqrt{T+1}} + \mathcal{O}\left(\frac{1}{T}\right)\right) = \frac{8 C_V}{c'\sqrt{T+1}} + \mathcal{O}\left(\frac{1}{T}\right).
    \label{eq:59}
\end{equation}
Substituting the constants $c^{\prime}=\frac{2\mu^{\prime}}{\gamma}$, $\gamma = \sqrt{\Gamma}$ and $C_{V}=\frac{G^{2}}{\gamma^{2}}\left(\frac{C_{D}}{\gamma}+\frac{L}{2}\right)$, the Eq. \ref{eq:59} becomes:
\begin{align}
    \begin{split}
        (A3) &\le \frac{8C_{V}}{c^{\prime}\sqrt{T+1}} = \frac{8 \cdot \frac{G^{2}}{\gamma^{2}}\left(\frac{C_{D}}{\gamma}+\frac{L}{2}\right)}{\frac{2\mu^{\prime}}{\gamma}\sqrt{T+1}} = \frac{4G^{2}}{\gamma\mu^{\prime}\sqrt{T+1}}\left(\frac{C_{D}}{\gamma}+\frac{L}{2}\right) \\
        &= \frac{G^{2}}{\mu^{\prime}\sqrt{\Gamma(T+1)}}\left(\frac{4C_{D}}{\sqrt{\Gamma}}+2L\right) \\
        &= \mathcal{O}\left(\frac{G^{2}(C_{D}+L)}{\mu^{\prime}\sqrt{\Gamma T}}\right)
    \end{split}
\end{align}
\end{itemize}

Combining all terms, Eq. \ref{eq:54} becomes:
\begin{equation}
    \mathbb{E}[F(\Theta^{T})]-F(\Theta^{*}) \le \mathcal{O}\left(\frac{G^{2}(C_{D}+L)}{\mu^{\prime}\sqrt{\Gamma T}}\right) + \mathcal{O}\left(\frac{||B(\Theta^{*})||^{2}}{\mu^{\prime}}\right). 
\end{equation}

From Lemma \ref{lemma:1}, we have: $C_{D}=L_{eff}^{2}\frac{(\Gamma-1)(2\Gamma-1)}{6\Gamma^{2}}.$ If we treat the number of local steps $\Gamma$ as a constant, then $C_D$ is bounded by a constant proportional to $L_{eff}^2$, we get:
\begin{equation}
    \mathbb{E}[F(\Theta^{T})]-F(\Theta^{*}) \le \underbrace{\mathcal{O}\left(\frac{G^{2}(L+L_{eff}^{2})}{\mu^{\prime}\sqrt{\Gamma T}}\right)}_{\text{Optimization Error}} + \underbrace{B_{SMoE}}_{\text{Bias Error}}.
\end{equation}
\end{proof}

\section{Detailed Related Work} \label{sec:detailed-related}
\subsection{Parameter-Efficient Federated Fine-Tuning for Foundation Models}
Federated fine-tuning approaches for FMs can be classified into two primary categories: (i) Homogeneous federated fine-tuning and (ii) Heterogeneous federated fine-tuning. 

Homogeneous methods assume uniform client resources, which means clients have similar computational resources for fine-tuning FMs. FedDAT \cite{chen2024feddat} introduces a dual-adapter approach, which consists of a frozen copy of the global adapter to retain general knowledge and a local adapter to learn client-specific features. To fine-tune on different tasks across clients, Pilot \cite{xiong2025pilot} proposes a two-stage adapter framework that first uses task-specific and client-specific adapters to extract distinct features and then builds a cross-task adapter to learn general knowledge. Prompt tuning is another fine-tuning technique in which a small set of soft prompts is prepended to input embeddings, effectively reducing communication between clients. PromptFL \cite{pan2024federated} trains shared soft prompts on FMs while pFedPrompt \cite{guo2023pfedprompt} personalizes soft prompts for each client. FedIT \cite{zhang2024towards} enables each client to fine-tune LoRA modules using their local data while freezing almost all parameters. 

Heterogeneous methods focus on addressing the system heterogeneity challenge where clients have different computational capabilities. FedHPL \cite{ma2024fedhpl} employs a global logit distillation scheme that allows the server to aggregate model-agnostic output logits from clients. pFedPG \cite{yang2023efficient} learns a prompt generator on the server via clients' optimization directions that produces personalized prompts for each client. FedSelect \cite{tamirisafedselect} proposes a gradient-based lottery ticket method that selects the most impactful parameters for local fine-tuning; thus, each client can personalize an optimal subnetwork. Instead of allocating fixed ranks to clients like FedIT, HetLoRA \cite{cho2024heterogeneous} employs a rank self-pruning mechanism that allows different clients to use different ranks. Other works propose a novel aggregation that inherently supports the use of heterogeneous LoRA-ranks across different clients by designing a stacking-based aggregation \cite{wangflora} or SVD-based weight redistribution \cite{bai2024federated}. FLoRIST \cite{ramesh2025florist} improves upon FlexLoRA's aggregation by performing a fast SVD directly on the clients' stacked low-rank adapters to operate within a more efficient, compact space. 

Recently, A$^3$SMoE \cite{tran2025revisiting} extends the conditional computation property of SMoE which allows each client to activate a variable number of experts based on its available resources. However, this method faces two limitations: (i) expert utilization imbalance, and (ii) non-differentiability of Top-K routing, failing to achieve optimal performance. In this work, we propose the UB-SMoE method to address these two challenges via Dynamic Modulated Routing and Universal Pseudo-Gradient mechanisms.

\subsection{Sparse Mixture-of-Experts (SMoE)}

The concept of SMoE is introduced by \citeauthor{shazeer2017outrageously} \cite{shazeer2017outrageously}, which enables scaling neural networks by partitioning model capacity across multiple specialized experts while activating only a subset for each input. Unlike dense MoE, SMoE employs a Top-K routing mechanism that selects a small subset of experts per token, thereby decoupling model capacity from inference cost. This sparse activation pattern has enabled models such as Switch Transformer \cite{fedus2022switch}, GLaM \cite{du2022glam}, Mixtral \cite{jiang2024mixtral}, and DeepSeek-MoE \cite{liu2024deepseek} to scale to hundreds of billions of parameters at manageable computational budgets.

Despite their computational advantages, training SMoE models face several challenges. The most prominent issue is \textit{load imbalance}, where the router learns to preferentially route tokens to a small subset of experts, causing routing collapse \cite{shazeer2017outrageously} or increased computational overhead \cite{fedus2022switch, shazeer2017outrageously}. To mitigate load imbalance, \citeauthor{shazeer2017outrageously} \cite{shazeer2017outrageously} introduces an auxiliary load balancing loss that penalize uneven token distribution across experts. This approach is simplified by \citeauthor{fedus2022switch} \cite{fedus2022switch} to encourage the product of the fraction of tokens routed to each expert and the average router probability to be uniform. GShard \cite{lepikhin2021gshard} develops a novel gating function to limit the maximum number of tokens each expert can process, with overflow tokens either dropped or passed to subsequent layers. ST-MoE \cite{zoph2022st} contributes the router z-loss, which penalizes large logits in the gating mechanism to improve training stability without degrading model quality. \citeauthor{qiu2025demons} \cite{qiu2025demons} addresses the load imbalance issue at the global-batch level by synchronizing expert selection frequencies across parallel groups to calculate the load-balancing loss over the entire global batch.

However, one crucial limitation of these auxiliary losses is that they introduce unexpected gradients that conflict with the model objective. Recent work has explored auxiliary-loss-free approaches to load balancing. \citeauthor{wang2024auxiliary} \cite{wang2024auxiliary} proposes Loss-Free Balancing, which applies expert-wise biases to routing scores that are dynamically updated based on recent expert load statistics. This approach avoids the interference gradients introduced by auxiliary losses while maintaining balanced token distribution. A theoretical framework is introduced by \citeauthor{han2025a} \cite{han2025a} that provides convergence guarantees for this auxiliary-loss-free paradigm. 

A second fundamental challenge of SMoE arises from the \textit{non-differentiability of Top-K routing}. To address this issue, several gradient estimation techniques have been developed. SparseMixer \cite{liu2023sparse} utilizes numerical ordinary differential equation solvers to approximate the gradient contribution from non-activated experts. SparseMixer-v2 \cite{liu2024grin} improves this idea by utilizing Heun's third-order method for more accurate gradient approximation. ReMoE \cite{wang2025remoe} replaces the standard TopK+Softmax routing with a continuous ReLU-based routing function. The continuous nature of ReLU enables the training pipeline to become fully differentiable, which means that the router can learn exactly when to activate or deactivate an expert via standard backpropagation. Recently, Default MoE \cite{panda2025dense} has introduced a lightweight approximation that allows the router to receive a dense gradient update (signals from all experts) while maintaining the computational efficiency of sparse activation. In this method, missing activations for each expert are computed as an exponential moving average of that expert's outputs from previous tokens that did activate it. 

\subsection{Biased Optimization}
Standard SGD typically assumes access to unbiased gradient estimators, where the expectation of the stochastic gradient equals the true gradient of the objective function \cite{bottou2018optimization}. However, real-world scenarios introduce biases into the gradient estimators, including gradient compression and quantization for communication efficiency \cite{li2019convergence}, conditional stochastic optimization with nested expectations \cite{hu2020biased}, and reinforcement learning with function approximation \cite{liu2022theoretical}. 

The theoretical foundations for SGD with biased gradients have been established. \citeauthor{karimi2019non} \cite{karimi2019non} establishes non-asymptotic convergence guarantees for stochastic approximation on non-convex smooth objectives. This framework characterizes the bias through an alignment condition between the update direction and the true gradient, proving convergence to a neighborhood of the stationary point determined by the bias magnitude. \citeauthor{ajalloeian2020convergence} \cite{ajalloeian2020convergence} analyzes biased SGD under strongly convex, convex, and weakly convex objectives. The authors demonstrate that biased gradient methods can achieve near-optimal rates when the bias decays appropriately with increased sampling. \citeauthor{hu2020biased} \cite{hu2020biased} addresses conditional stochastic optimization by proposing biased stochastic gradient descent to handle gradient bias arising from estimating nested expectations. The authors also establish optimal sample complexity bounds for convex and non-convex objectives by explicitly controlling the bias-variance tradeoff through inner mini-batch sampling. \citeauthor{demidovich2023guide} \cite{demidovich2023guide} unifies the fragmented theoretical landscape by introducing a novel framework, which is proven to be the weakest and most general assumption encompassing prior models. Utilizing this framework, the authors demonstrate that biased SGD can maintain optimal convergence rates even under practical bias sources such as subsampling and compression. 

In our convergence analysis, we leverage these theoretical frameworks to characterize the convergence behavior of federated SMoE fine-tuning.

\section{Pseudo-code} \label{sec:pseudo-code}
This section provides detailed pseudo-code for our proposed UB-SMoE method. We first present the overall federated fine-tuning framework (Algorithm~\ref{alg:ubsmoe_overall}), followed by the detailed procedures for Dynamic Modulated Routing (Algorithm~\ref{alg:dmr}) and Universal Pseudo-Gradient (Algorithm~\ref{alg:pg}).

\subsection{Overall UB-SMoE Framework}

Algorithm \ref{alg:ubsmoe_overall} presents the complete UB-SMoE procedure. Each round, the server broadcasts model parameters, modulation vectors $\boldsymbol{\phi}$, and pseudo-gradient buffers $\tilde{\mathbf{G}}$ to clients. Clients perform local training using DMR for expert selection (with client-specific sparsity $K_c$) and inject pseudo-gradients for non-activated experts. Upon completion, clients upload updated parameters and activation statistics. The server aggregates via FedAvg, computes pseudo-gradients from parameter changes, and updates modulation parameters based on global utilization. 

\begin{algorithm}[H]
\caption{UB-SMoE: Federated Fine-tuning with Dynamic Modulated Routing and Pseudo-Gradients}
\label{alg:ubsmoe_overall}
\begin{algorithmic}[1]
\REQUIRE Clients $C$, rounds $T$, local iterations $\Gamma$, budgets $\{\beta_c\}_{c=1}^{C}$, max sparsity $K_{\max}$, experts $M$, learning rate $\eta$, candidate size $N_p$, momentum $\zeta$
\ENSURE Global model parameters $\Theta^{(T)}$

\STATE \textbf{Server Initialization:}
\STATE Initialize global LoRA parameters $\Theta^{(0)} = \{\Theta_i^{(e)}\}_{i=1}^{M} \cup \Theta^{(r)}$
\STATE Initialize modulation vectors $\boldsymbol{\phi}^{(l)} \gets \mathbf{0} \in \mathbb{R}^M$ for each layer $l \in [L]$
\STATE Initialize pseudo-gradient buffers $\tilde{\mathbf{G}}^{(l)} \gets \mathbf{0}$ for each layer $l \in [L]$
\STATE Compute average sparsity $\bar{K} \gets \sum_{c=1}^{C} p_c \lfloor K_{\max} \cdot \beta_c \rfloor$

\FOR{round $t = 0, 1, \ldots, T-1$}
    \STATE \textbf{Server broadcasts:} $\Theta^{(t)}$, $\{\boldsymbol{\phi}^{(l,t)}\}_{l=1}^{L}$, $\{\tilde{\mathbf{G}}^{(l,t)}\}_{l=1}^{L}$

    \FOR{each client $c \in [C]$ \textbf{in parallel}}
        \STATE $K_c \gets \lfloor K_{\max} \cdot \beta_c \rfloor$; \quad $\rho_c \gets \sqrt{\bar{K} / K_c}$ \hfill // Client sparsity and scaling
        \STATE $\Theta_c^{(t,0)} \gets \Theta^{(t)}$; \quad $\mathbf{a}_c^{(l)} \gets \mathbf{0}$, $n_c^{(l)} \gets 0$ for all $l$

        \FOR{local iteration $k = 0, \ldots, \Gamma - 1$}
            \STATE Sample mini-batch $\mathcal{B}_c \sim \mathcal{D}_c$
            \FOR{each token $x \in \mathcal{B}_c$, each layer $l \in [L]$}
                \STATE $\mathcal{A}_c(x), \boldsymbol{\gamma} \gets \textsc{DMR}(x, W^{(r,l)}, \boldsymbol{\phi}^{(l,t)}, K_c, N_p)$ \hfill // Alg.~\ref{alg:dmr}
                \STATE $\mathbf{a}_c^{(l)} \gets \mathbf{a}_c^{(l)} + \mathbf{1}_{\mathcal{A}_c(x)}$; \quad $n_c^{(l)} \gets n_c^{(l)} + 1$
                \STATE $y^{(l)} \gets \sum_{i \in \mathcal{A}_c(x)} \gamma_i \cdot \mathbb{E}_i(x; \Theta_i^{(e,l)})$
            \ENDFOR
            \STATE $\mathcal{L}_c \gets \mathcal{L}_{\text{task}}(\Theta_c^{(t,k)}; \mathcal{B}_c) + \lambda \sum_{l} \|\boldsymbol{\phi}^{(l)}\|_2^2$ \hfill // Eq.~\ref{eq:regularization-loss}
            \STATE $\hat{\mathbf{g}}_c \gets \textsc{PseudoGrad}(\nabla_\Theta \mathcal{L}_c, \mathcal{A}_c, \tilde{\mathbf{G}}^{(t)}, \rho_c)$ \hfill // Alg.~\ref{alg:pg}
            \STATE $\Theta_c^{(t,k+1)} \gets \Theta_c^{(t,k)} - \eta \cdot \hat{\mathbf{g}}_c$
        \ENDFOR
        \STATE \textbf{Client uploads:} $\Theta_c^{(t,\Gamma)}$, $\{(\mathbf{a}_c^{(l)}, n_c^{(l)})\}_{l=1}^{L}$
    \ENDFOR

    \STATE \textbf{Server Aggregation:}
    \STATE $\Theta^{(t+1)} \gets \sum_{c=1}^{C} p_c \cdot \Theta_c^{(t,\Gamma)}$ \hfill // FedAvg
    \FOR{each layer $l \in [L]$}
        \STATE $\tilde{G}_i^{(l,t+1)} \gets (\Theta_i^{(e,l,t)} - \Theta_i^{(e,l,t+1)}) / (\eta \Gamma)$ for all $i \in [M]$ \hfill // Eq.~\ref{eq:PG-computation}
        \STATE $\tilde{u}_i^{(l)} \gets \sum_{c=1}^{C} p_c \cdot a_{c,i}^{(l)} / n_c^{(l)}$ for all $i \in [M]$ \hfill // Eq.~\ref{eq:global-utilization}
        \STATE $\tilde{\phi}_i^{(l)} \gets \tanh\bigl((\bar{K}/M) / (\tilde{u}_i^{(l)} + \epsilon) - 1\bigr)$ \hfill // Eq.~\ref{eq:over-utilized}
        \STATE $\phi_i^{(l,t+1)} \gets (1 - \zeta) \tilde{\phi}_i^{(l)} + \zeta \phi_i^{(l,t)}$ \hfill // Momentum update
    \ENDFOR
\ENDFOR
\STATE \textbf{Return:} $\Theta^{(T)}$
\end{algorithmic}
\end{algorithm}

\subsection{Dynamic Modulated Routing (DMR)}

Algorithm \ref{alg:dmr} details Dynamic Modulated Routing. Given token $x$, the router computes raw scores and selects top-$N_p$ experts as candidates $\mathcal{T}$. Modulation parameters $\boldsymbol{\phi}$ are applied only within $\mathcal{T}$, preserving semantic relevance while rebalancing utilization. The final activation set $\mathcal{A}_c(x)$ contains the top-$K_c$ experts from modulated scores, with gating weights normalized via softmax.

\begin{algorithm}[H]
    \caption{Dynamic Modulated Routing (DMR)}
    \label{alg:dmr}
    \begin{algorithmic}[1]
        \REQUIRE Token $x \in \mathbb{R}^d$, router $W^{(r)} \in \mathbb{R}^{M \times d}$, modulation $\boldsymbol{\phi} \in \mathbb{R}^M$, sparsity $K_c$, candidate size $N_p$
        \ENSURE Activation set $\mathcal{A}_c(x)$, gating weights $\boldsymbol{\gamma} \in \mathbb{R}^M$

        \STATE $\mathbf{s} \gets W^{(r)} x$ \hfill // Raw router scores, $\mathbf{s} \in \mathbb{R}^M$
        \STATE $\mathcal{T} \gets \textsc{Top-}N_p(\mathbf{s})$ \hfill // Candidate set: top-$N_p$ by semantic relevance
        \STATE $m_i \gets s_i + \phi_i \cdot \mathbf{1}[i \in \mathcal{T}]$ for all $i \in [M]$ \hfill // Modulate candidates only
        \STATE $\mathcal{A}_c(x) \gets \textsc{Top-}K_c(\mathbf{m})$ \hfill // Select top-$K_c$ experts
        \STATE $\gamma_i \gets \frac{\exp(m_i)}{\sum_{j \in \mathcal{A}_c(x)} \exp(m_j)} \cdot \mathbf{1}[i \in \mathcal{A}_c(x)]$ for all $i$ \hfill // Sparse gating
        \STATE \textbf{Return:} $\mathcal{A}_c(x)$, $\boldsymbol{\gamma}$
    \end{algorithmic}
\end{algorithm}

\subsection{Universal Pseudo-Gradient (PG)}

Algorithm \ref{alg:pg} describes client-side pseudo-gradient injection. Activated experts receive their actual gradients, while non-activated experts receive scaled pseudo-gradients from the server's buffer $\tilde{\mathbf{G}}^{(t)}$. The scaling factor $\rho_c = \sqrt{\bar{K}/K_c}$ amplifies signals for low-resource clients. 

\begin{algorithm}[H]
\caption{Universal Pseudo-Gradient (PG) Injection}
\label{alg:pg}
\begin{algorithmic}[1]
\REQUIRE Gradients $\{\nabla_{\Theta_i^{(e)}} \mathcal{L}_c\}_{i=1}^{M}$, activation set $\mathcal{A}_c$, pseudo-gradient buffer $\tilde{\mathbf{G}}^{(t)}$, scaling $\rho_c = \sqrt{\bar{K}/K_c}$
\ENSURE Modified gradient $\hat{\mathbf{g}}_c = \{\hat{g}_{c,i}\}_{i=1}^{M}$

\FOR{$i = 1, \ldots, M$}
    \IF{$i \in \mathcal{A}_c$}
        \STATE $\hat{g}_{c,i} \gets \nabla_{\Theta_i^{(e)}} \mathcal{L}_c$ \hfill // Activated: use actual gradient
    \ELSE
        \STATE $\hat{g}_{c,i} \gets \rho_c \cdot \tilde{G}_i^{(t)}$ \hfill // Non-activated: scaled pseudo-gradient
    \ENDIF
\ENDFOR
\STATE \textbf{Return:} $\hat{\mathbf{g}}_c$
\end{algorithmic}
\end{algorithm}

\section{Additional Experimental Details} \label{sec:additional-ex-setup}
\subsection{Benchmarks}
We evaluate our method on 2 benchmarks: Commonsense-15K and TeleQuAD (in telecommunication domain). 

\textbf{Commonsense-15K.} We fine-tune on the Commonsense-15K dataset \citep{hu2023llm}, a curated set of 15,119 instruction-tuning samples. We evaluate on the test sets of 8 commonsense reasoning datasets: ARC-Challenge, ARC-Easy, BoolQ, HellaSwag, OpenBookQA, PIQA, Social IQa, and WinoGrande. Table~\ref{tab:dataset_stats} summarizes the evaluation benchmark statistics.

\begin{table}[H]
\centering
\caption{Evaluation benchmark statistics for commonsense reasoning. Training uses Commonsense-15K (15,119 samples).}
\label{tab:dataset_stats}
\begin{tblr}{
  width = \linewidth,
  colspec = {Q[300]Q[150]Q[300]},
  cells = {c},
  vline{2-3} = {-}{},
  hline{1,10} = {-}{0.08em},
  hline{2} = {-}{},
}
Dataset       & Test Size & Task Type              \\
ARC-Challenge \cite{clark2018think} & 1,172  & Multiple Choice        \\
ARC-Easy \cite{clark2018think}     & 2,376  & Multiple Choice        \\
BoolQ \cite{clark2019boolq}        & 3,270  & Yes/No QA              \\
HellaSwag \cite{zellers2019hellaswag}    & 10,042 & Sentence Completion    \\
OpenBookQA \cite{mihaylov2018can}   & 500    & Multiple Choice        \\
PIQA \cite{bisk2020piqa}         & 1,838  & Physical Reasoning     \\
Social IQa \cite{sap2019socialiqa}    & 1,954  & Social Reasoning       \\
WinoGrande \cite{sakaguchi2021winogrande}   & 1,267  & Coreference Resolution
\end{tblr}
\end{table}

\begin{wraptable}{r}{0.5\textwidth}
    \centering
    \caption{TeleQuAD dataset statistics for telecommunications domain evaluation. The dataset follows SQuAD format with extractive question-answer pairs derived from 3GPP technical specifications.}
    \label{tab:telequad-stats}
    \begin{tabular}{@{}lc@{}}
    \toprule
    \textbf{Statistic} & \textbf{Value} \\
    \midrule
    Domain & Telecommunications (3GPP) \\
    Task Type & Extractive QA \\
    Data Format & SQuAD-compatible \\
    Source Documents & 3GPP Specifications \\
    Training Samples & 1,018 question-answer pairs \\
    Test Samples & 1,003 question-answer pairs. \\
    Evaluation Metric & BERTScore F1 \\
    \bottomrule
    \end{tabular}
\end{wraptable}

\textbf{TeleQuAD.} We additionally evaluate on TeleQuAD \cite{holm2021bidirectional}, a question-answering dataset in the telecommunications domain derived from 3GPP technical specifications. The dataset contains 4,262 answerable QA pairs across 536 documents, following a SQuAD-like extractive QA format. This domain-specific benchmark tests the method's effectiveness on specialized technical content and serves to evaluate robustness under both IID and non-IID data distributions. Table~\ref{tab:telequad-stats} summarizes the TeleQuAD dataset statistics used for domain-specific evaluation.

\paragraph{Non-IID TeleQuAD partitioning.}
For the non-IID TeleQuAD setting, we use document-level Dirichlet partitioning with concentration parameter $\alpha=0.1$. Specifically, question-answer pairs are grouped by their source 3GPP document, and each client's mixture over document groups is sampled from a Dirichlet distribution. This creates strong topic/domain skew while preserving realistic document-level heterogeneity. Dirichlet partitioning is a standard protocol for controlled non-IID FL evaluation \citep{hsu2019measuring, yurochkin2019bayesian}. Commonsense-15K is a curated instruction-tuning mixture without consistent class IDs across all tasks, so class-Dirichlet partitioning is not well-defined; we therefore report its multi-domain benchmark performance and use TeleQuAD for controlled IID/non-IID comparisons.

\subsection{Training Prompts} \label{ssec:training-prompts}

We use the Alpaca-style instruction format for fine-tuning. Table~\ref{tab:prompt-templates} summarizes the prompt templates for each task. The placeholder \texttt{\{instruction\}} is replaced with the task-specific question, and \texttt{\{context\}} (for TeleQuAD) contains the relevant paragraph from 3GPP telecommunications documentation.

\begin{table}[ht]
\centering
\caption{Prompt templates for instruction fine-tuning.}
\label{tab:prompt-templates}
\small
\begin{tabular}{@{}p{2.2cm} p{11.5cm}@{}}
\toprule
\textbf{Task} & \textbf{Prompt Template} \\
\midrule
Commonsense Reasoning &
\begin{minipage}[t]{11.5cm}
\vspace{1pt}
\texttt{Below is an instruction that describes a task. Write a response that appropriately completes the request.}\\[4pt]
\texttt{\#\#\# Instruction:}\\
\texttt{\{instruction\}}\\[4pt]
\texttt{\#\#\# Response:}
\vspace{4pt}
\end{minipage} \\
\midrule
TeleQuAD (Extractive QA) &
\begin{minipage}[t]{11.5cm}
\vspace{1pt}
\texttt{Below is an instruction that describes a task, paired with context that provides further information. Write a response that appropriately completes the request.}\\[4pt]
\texttt{\#\#\# Instruction:}\\
\texttt{\{instruction\}}\\[4pt]
\texttt{\#\#\# Context:}\\
\texttt{\{context\}}\\[4pt]
\texttt{\#\#\# Response:}
\vspace{4pt}
\end{minipage} \\
\bottomrule
\end{tabular}
\end{table}

\subsection{Baseline Methods}
We compare UB-SMoE against two categories of federated fine-tuning methods designed for system heterogeneity.

\textbf{Heterogeneous LoRA-rank Methods:}
\begin{itemize}
    \item \textbf{HetLoRA} \citep{cho2024heterogeneous}: introduces a heterogeneous rank allocation strategy combined with rank self-pruning. This method acts as a regularizer, enabling the aggregation of diverse client updates via a sparsity-weighted mechanism to achieve superior convergence. 
    
    \item \textbf{FlexLoRA} \citep{bai2024federated}: aggregates heterogeneous LoRA adapters by constructing a full-size global weight matrix from weighted local contributions.
    
    \item \textbf{FLoRA} \citep{wangflora}: utilizes a stacking-based aggregation mechanism to seamlessly integrate heterogeneous LoRA adapters across clients. This method avoids aggregation noise by stacking local low-rank matrices ($A_k$ and $B_k$) to construct a precise global update, thereby enhancing convergence and model performance.
    
    \item \textbf{FLoRIST} \citep{ramesh2025florist}: performs SVD directly on stacked local adapters. This method identifies and retains only the most informative components to construct compact global adapters.
\end{itemize}

\textbf{Heterogeneous Sparsity Methods:}
\begin{itemize}
    \item \textbf{SMoE-LLB} \citep{zoph2022st}: addresses training instabilities in sparse models through a novel router z-loss and architectural heuristics. This approach also optimizes the trade-off between parameter sparsity and computational cost by utilizing top-2 routing and specific capacity factor adjustments.
    
    \item \textbf{A$^3$SMoE} \citep{tran2025revisiting}: extends Sparse Mixture-of-Experts by allowing clients to activate a variable number of experts proportional to their computational budgets. This method employs an activation-aware aggregation strategy that weights global updates based on client-specific expert utilization frequencies.
\end{itemize}

\textbf{Homogeneous LoRA-rank Methods}
\begin{itemize}
    \item \textbf{FedEx-LoRA} \citep{singhalfedex}: achieves exact aggregation in federated fine-tuning by correcting the global model with a residual error term derived from the deviation between ideal and actual LoRA updates.
    
    \item \textbf{FFA-LoRA} \cite{sun2024improving}: addresses the discordance between joint local optimization and separate global aggregation in federated LoRA by freezing the randomly initialized matrix $A$ and only fine-tuning the zero-initialized matrix $B$. 
    
    \item \textbf{FedIT} \cite{zhang2024towards}: employs LoRA to freeze pre-trained weights and train only low-rank matrices on local devices, significantly reducing computational and communication costs.
    
    \item \textbf{FedSB} \cite{singhal2025fed}: adapts LoRA-SB framework \cite{ponkshe2024initialization} by training a small $r \times r$ matrix between frozen, optimally initialized low-rank adapters. This formulation enables exact aggregation via direct averaging of the central matrix, rendering communication costs independent of the client count while preserving high model expressivity.
\end{itemize}

All baseline methods use the same base model architecture and are trained for the same number of communication rounds to ensure fair comparison.

\subsection{System and Resource}

We conduct all experiments on a high-performance computing (HPC) node equipped with two NVIDIA H100 96GB GPUs. Our implementation is based on PyTorch 2.4.0 with CUDA 12.4.1.  

\begin{table}[h]
    \centering
    \caption{Hyperparameter configurations for heterogeneous federated fine-tuning experiments. For each setting and method, we specify the client-specific sparsity level $K_c$ (number of activated experts) and LoRA rank. UB-SMoE varies $K_c$ while fixing rank $r$; baseline methods either vary rank $r_c$ with fixed sparsity or operate on dense models. Configurations are calibrated to match trainable parameter counts across comparable budget levels.}
    \label{tab:config}
    \resizebox{\textwidth}{!}{%
    \begin{tabular}{lllclcc}
        \toprule
        \textbf{Settings} & \textbf{Applied Methods} & \textbf{Model} & \textbf{Resource Level} & \textbf{Detailed Configuration} & \textbf{\# Trainable Params} & \textbf{FLOPs} \\
        \midrule
        \multirow{12}{*}{\shortstack[l]{System\\heterogeneity}} 
        & \multirow{4}{*}{Ours, A$^3$SMoE, SMoE-LLB} & \multirow{4}{*}{\shortstack[l]{MoE\\(OLMoE-1B-7B)}} 
            & $\beta_1$ & TopK=1, LoRA\_r=20 & $8.19\text{E}+06$ & $4.02\text{E}+11$ \\
        & & & $\beta_2$ & TopK=2, LoRA\_r=20 & $1.11\text{E}+07$ & $4.72\text{E}+11$ \\
        & & & $\beta_3$ & TopK=4, LoRA\_r=20 & $1.70\text{E}+07$ & $5.95\text{E}+11$ \\
        & & & $\beta_4$ & TopK=8, LoRA\_r=20 & $2.57\text{E}+07$ & $8.25\text{E}+11$ \\
        \cmidrule{2-7}
        & \multirow{4}{*}{\shortstack[l]{FlexLoRA, HetLoRA, FLoRA,\\FLoRIST}} & \multirow{4}{*}{\shortstack[l]{MoE\\(OLMoE-1B-7B)}} 
            & $\beta_1$ & TopK=8, LoRA\_r=6 & $8.65\text{E}+06$ & $6.94\text{E}+11$ \\
        & & & $\beta_2$ & TopK=8, LoRA\_r=8 & $1.15\text{E}+07$ & $7.13\text{E}+11$ \\
        & & & $\beta_3$ & TopK=8, LoRA\_r=12 & $1.63\text{E}+07$ & $7.33\text{E}+11$ \\
        & & & $\beta_4$ & TopK=8, LoRA\_r=20 & $2.57\text{E}+07$ & $7.56\text{E}+11$ \\
        \cmidrule{2-7}
        & \multirow{4}{*}{\shortstack[l]{FlexLoRA, HetLoRA, FLoRA,\\FLoRIST}} & \multirow{4}{*}{\shortstack[l]{Dense\\(OLMo-1B)}} 
            & $\beta_1$ & LoRA\_r=12 & $9.04\text{E}+06$ & $8.31\text{E}+11$ \\
        & & & $\beta_2$ & LoRA\_r=16 & $1.21\text{E}+07$ & $8.49\text{E}+11$ \\
        & & & $\beta_3$ & LoRA\_r=24 & $1.81\text{E}+07$ & $8.70\text{E}+11$ \\
        & & & $\beta_4$ & LoRA\_r=40 & $3.01\text{E}+07$ & $8.93\text{E}+11$ \\
        \midrule
        \multirow{4}{*}{\shortstack[l]{System\\homogeneity}} 
        & \multirow{4}{*}{\shortstack[l]{FedEx, FFA-LoRA, FedIT,\\Fed-SB, Ours (UB-SMoE)}} & \multirow{4}{*}{\shortstack[l]{MoE\\(OLMoE-1B-7B)}} 
            & $\beta_1$ & TopK=1, LoRA\_r=20 & $8.19\text{E}+06$ & $4.02\text{E}+11$ \\
        & & & $\beta_2$ & TopK=2, LoRA\_r=20 & $1.11\text{E}+07$ & $4.72\text{E}+11$ \\
        & & & $\beta_3$ & TopK=4, LoRA\_r=20 & $1.70\text{E}+07$ & $5.95\text{E}+11$ \\
        & & & $\beta_4$ & TopK=8, LoRA\_r=20 & $2.57\text{E}+07$ & $8.25\text{E}+11$ \\
        \bottomrule
    \end{tabular}%
    }
\end{table}

\subsection{Implementation Details}
\textbf{Base Model.} We use OLMoE-1B-7B-0924 \cite{muennighoff2024olmoe} and OLMo-1B \cite{groeneveld2024olmo} as the base models for all experiments. The base architecture consists of vocabulary size of $50,304$ tokens, hidden dimension $d = 2,048$, intermediate dimension $l = 2,048$, 16 transformer decoder layers, 16 attention heads with 16 key-value heads (standard multi-head attention, not grouped-query attention), maximum sequence length of 4,096 positions, RoPE positional embeddings with $\theta = 10,000$, SiLU activation function, RMSNorm with $\epsilon = 10^{-5}$, and no attention bias or dropout. The model weights remain frozen during federated fine-tuning, with only LoRA adapters being trained.

\textbf{SMoE Architecture.} The base model natively employs SMoE layers in place of standard feed-forward networks. The architecture contains $L = 16$ SMoE layers, each with $M = 64$ experts. The default number of experts activated per token is $K_{\max} = 8$. For Dynamic Modulated Routing, the candidate set size is $N_p = 2$, representing the top experts considered for modulation. Each expert follows a gated feed-forward network architecture with \texttt{gate\_proj}, \texttt{up\_proj}, and \texttt{down\_proj} components, using intermediate dimension $2 \times 2048$. The router is implemented as a linear projection from hidden dimension to number of experts ($2048 \rightarrow 64$).

\textbf{LoRA and Top-K Configuration.} We apply LoRA adaptation to both the attention layers and the expert networks. The target modules include \texttt{q\_proj}, \texttt{k\_proj}, \texttt{v\_proj}, \texttt{o\_proj} for attention, and \texttt{gate\_proj}, \texttt{up\_proj}, \texttt{down\_proj}, \texttt{gate} for the SMoE components. We use standard LoRA with scaling parameter $\alpha = 20$ and dropout rate of $0.0$.

\textbf{Client Computation Budgets.} We assign heterogeneous computation budgets $\beta_c \in \{\beta_1, \beta_2, \beta_3, \beta_4\}$ corresponding to resource ratios $\{0.125, 0.25, 0.5, 1.0\}$. The distribution is uniform across all clients. Detailed Top-K and LoRA ranks configurations are provided in Tab. \ref{tab:config}.

\textbf{Heterogeneous Sparsity (OLMoE-1B-7B).} For UB-SMoE and sparsity-based baselines, we fix $r=20$ across all clients and vary the number of activated experts as $K_c \in \{1, 2, 4, 8\}$ for $\beta_1$ through $\beta_4$, yielding trainable activated parameters ranging from $8.19 \times 10^6$ ($K_c=1$) to $2.57 \times 10^7$ ($K_c=8$) as detailed in Tab.~\ref{tab:config}.

\textbf{Heterogeneous LoRA Ranks (OLMoE-1B-7B).} For rank-based baselines on the SMoE model, we fix $K=8$ and vary client-specific ranks as $r_c \in \{6, 8, 12, 20\}$ for $\beta_1$ through $\beta_4$, so that the number of trainable activated parameters is approximately the same as the corresponding heterogeneous sparsity configurations.

\textbf{Heterogeneous LoRA Ranks (Dense OLMo-1B).} For fair comparison on the dense baseline, we set $r_c \in \{12, 16, 24, 40\}$ for $\beta_1$ through $\beta_4$, calibrated to match the trainable parameter counts of the corresponding SMoE configurations.

\textbf{Federated Learning Configuration.} The federated system consists of $C = 8$ clients communicating over $T = 10$ rounds by default, with extended experiments running up to $T = 20$ rounds. Each client performs $\Gamma = 0.4$ local epochs per round via gradient accumulation. Data is distributed across clients using either sequential or random splitting strategies. We employ a merged layers communication pattern where only modified layers are transmitted, comprising approximately 2--10\% of the total model parameters. Aggregation weights follow the standard formulation $p_c = |\mathcal{D}_c| / \sum_{c'} |\mathcal{D}_{c'}|$.

\textbf{Optimization Hyperparameters.} We use the AdamW optimizer with learning rate $\eta = 2 \times 10^{-4}$, momentum parameters $(\beta_1, \beta_2) = (0.9, 0.95)$, epsilon $10^{-5}$, and weight decay $0.01$. The per-device batch size is $B = 8$ with 2 gradient accumulation steps, yielding an effective batch size of 16. We use a constant learning rate scheduler with no warmup or decay. 

For Dynamic Modulated Routing (DMR), the learnable modulation parameters are constrained to the range $[\phi_{\min}, \phi_{\max}] = [-1.0, 1.0]$. The mechanism preserves the top-2 experts from the original router selection during modulation. The server updates modulation vectors using the momentum factor $\zeta = 0.9$.

For Universal Pseudo-Gradient (PG), we apply gradient clipping with a threshold of $2.0$. The K-aware scaling factor follows $\rho_c = \sqrt{8/K_c}$, providing stronger scaling for clients with smaller $k$ values.

\textbf{Reproducibility.} Random seeds are fixed at $42$ for all experiments. We will release our code and pretrained checkpoints upon acceptance.

\section{Complexity Analysis} \label{sec:complexity}
We provide a comprehensive analysis of the computational and communication complexity of UB-SMoE. We denote the number of experts per SMoE layer as $M$, the number of layers as $L$, and the LoRA rank as $r$. The model hidden dimension is $d$, while $l$ denotes the intermediate dimension of the feed-forward network within each expert. The number of clients is $C$, local iterations per round is $\Gamma$, and batch size is $\mathcal{B}$.

\subsection{Client-side Computation Complexity}
The client-side computation consists of three components: Dynamic Modulated Routing (DMR),  expert forward/backward passes, and pseudo-gradient injection.

\textbf{Dynamic Modulated Routing (DMR) Computation .}  For each input $x \in \mathbb{R}^d$ at each SMoE layer, DMR (Algorithm \ref{alg:dmr}) performs the following operations:
\begin{itemize}
    \item Computing $s = W^{(r)}x$ requires $\mathcal{O}(M \cdot d)$ operations.
    \item Identifying the top-$N_p$ experts from $M$ scores requires $\mathcal{O}(M)$
    \item Applying modulation to $N_p$ candidates and computing softmax over $M$ experts requires $\mathcal{O}(M)$ operations. 
    \item Selecting top-$K_c$ experts from modulated probabilities requires $\mathcal{O}(M + K_c)$
\end{itemize}

The total DMR overhead per input per layer is $\mathcal{O}(M \cdot d)$, which is dominated by the router score computation. 

\textbf{Expert Computation.} For $K_c$ activated expert with LoRA adaptation, the forward pass requires $\mathcal{O}(K_c \cdot d \cdot l)$. The backward pass also requires $\mathcal{O}(K_c \cdot d \cdot l)$ for $K_c$ activated experts.

\textbf{Universal Pseudo-Gradient (PG) Injection.} During the backward pass, the pseudo-gradient mechanism (Algorithm \ref{alg:pg}) injects scaled gradients for non-activated experts. LoRA parameters consist of matrices $B \in \mathbb{R}^{d \times r}$ and $A \in \mathbb{R}^{r \times l}$, so the parameter count per expert is proportional to $r(d+l)$. For each of the $M-K_c$ non-activated experts, PG injection requires copying and scaling $(M-K_c)$ vectors, costing $\mathcal{O}((M-K_c) \cdot d \cdot r)$. The PG overhead per local iteration is $\mathcal{O}(L \cdot (M - K_c) \cdot r \cdot (d + l))$.

\textbf{Total Client-side Complexity.} The total computational cost per communication round for client $c$ is:
\begin{equation}
    \mathcal{C}_{\text{client}} = \mathcal{O}\left(\Gamma \cdot \mathcal{B} \cdot L \cdot \left(Md + K_c \cdot d \cdot l \right) + \Gamma \cdot L \cdot M \cdot r \cdot (d + l)\right).
\end{equation}

This computation complexity shows the advantage of UB-SMoE. The client-side computation scles with the client's computational budget $\beta_c$ through the sparsity level $K_c = \lfloor K_{\max} \cdot \beta_c \rfloor$. In standard dense fine-tuning (e.g., homogeneous LoRA), each client must compute the full feed-forward network for every token, incurring a cost of $\mathcal{O}(d \cdot l)$ per token per layer. In contrast, UB-SMoE clients only compute $K_c$ experts, reducing this to $\mathcal{O}(K_c \cdot d \cdot l)$. Table \ref{tab:illustrate} illustrates the computation scaling across different client budget levels in UB-SMoE. 

\begin{table}[ht]
\centering
\caption{Illustration of client-side computation reduction in UB-SMoE. }
\label{tab:illustrate}
\begin{tblr}{
  width = \linewidth,
  colspec = {Q[223]Q[146]Q[100]Q[306]Q[156]},
  cells = {c},
  hline{1,6} = {-}{0.08em},
  hline{2} = {-}{},
  hline{2} = {2}{-}{},
}
\textbf{Budget Level} & \textbf{$\beta_c$} & \textbf{$K_c$} & \textbf{Expert FLOPs}         & \textbf{Reduction} \\
$\beta_4$ (High)    & 1.0                   & 8                 & $8 \cdot d \cdot l$ & $0\%$            \\
$\beta_3$    & 0.5                   & 4                 & $4 \cdot d \cdot l$ & $50\%$           \\
$\beta_2$    & 0.25                  & 2                 & $2 \cdot d \cdot l$ & $75\%$           \\
$\beta_1$ (Low)    & 0.125                 & 1                 & $1 \cdot d \cdot l$ & $87.5\%$         
\end{tblr}
\end{table}

When compared with heterogeneous LoRA-rank methods (e.g., HetLoRA, FlexLoRA), our UB-SMoE also shows better effective client-side compution. Although these methods adapt to client resources by varying the rank $r_c$, the computation reduction is limited because the dominant cost lies in the dense FFN layers, not the LoRA adapters. As shown in Figure \ref{fig:SMoE_flops}, the reduction of these methods is only $\sim 5\%$. In constrat, with less activated $K_c$ experts, low-resource clients in our UB-SMoE achieve substantial reduction ($\sim 45$ for $\beta_1$). However, UB-SMoE introduces additional overhead which arises from PG injection. This also explain why for high-resource clients, the computational cost is slightly higher.

\subsection{Server-side Computation Complexity}
The server performs aggregation and updates modulation parameters at each communication round. 

\textbf{Aggregation.} The server aggregates LoRA parameters from $C$ clients via weighted averaging. The total number of trainable parameters is $|\Theta| = \mathcal{O}(L \cdot M \cdot r \cdot (d + l))$. The aggregation complexity is: $\mathcal{O} = (C \cdot L \cdot M \cdot r \cdot (d +l))$. 

\textbf{PG Buffer Update.} Computing PG requires element-wise substraction and scaling; thus this costs $\mathcal{O}(L \cdot M \cdot r \cdot (d +l))$ operations. 

\textbf{Parameter Update.} The server computes global utilization statistics and updates modulation vectors. For utilization aggregation, the server aggregates activation counts from $C$ clients for $M$ experts across $L$ layers, which costs $\mathcal{O}(C \cdot L \cdot M)$. To update modulation, the server computes $\tanh$ and momentum updates for $L \cdot M$ parameters, which requires $\mathcal{O}(L \cdot M)$ operations. 

\textbf{Total Server-side Complexity.} The total server computation per round is:
\begin{equation}
    \mathcal{C}_{\text{server}} = \mathcal{O}(C \cdot L \cdot M \cdot r \cdot (d +l) + L \cdot M).
\end{equation}

\subsection{Communication Complexity}
\subsubsection{Client-to-Server Communication}
At each communication round, each client $c$ transmits updated local parameters and utilization statistics to the server.

\textbf{Model Parameters $\Theta_c^{(t, \Gamma)}$.}  Each clinet in UB-SMoE transmits updates  not only for $K_c$ activated experts but also the PGs for non-activated experts (Algorithm \ref{alg:pg}). For each expert $i$, the LoRA adaptation of model parameters consists of matrices $B_i^{(e)} \in \mathbb{R}^{d \times r}$ and $A_i^{(e)} \in \mathbb{R}^{r \times l}$. The total parameter count for each client's updates is: $L \cdot M \cdot r \cdot (d + l)$.

\textbf{Utilization Statistics.} Clients must also transmit the activation count vector $a_c^{(l)} \in \mathbb{R}^M$ and token counts $n_c^{(l)}$ to the server. This requires $L \cdot (M+1)$ scalar values per client.

\textbf{Total client-to-server communication per round is:}
\begin{equation}
    \mathcal{C}_{C \rightarrow S} = \mathcal{O}(L \cdot M \cdot r \cdot (d + l) + L \cdot (M+1))
\end{equation}

The client-to-server communication is \emph{independent} of the client's computational budget $\beta_c$ and sparsity level $K_c$. To ensure the exper utilization, UB-SMoE introduces additional communication costs per client ($L \cdot (M+1)$). 

\subsubsection{Server-to-Client Communication}
The server broadcasts the global model parameters, PG buffer, and modulation parameters.

\textbf{Global Model Parameters ($\Theta^{(t)}$).} The server broadcasts the aggregated LoRA parameters contains router and all $M$ experts. This requires $L \cdot M \cdot r \cdot (d+l)$ parameters.

\textbf{PG Buffer ($\tilde{G}^{(t)}$).} To enable the PG injection locally, the server must broadcast the global pseudo-gradient buffer. Each buffet contain gradients for $M$ experts, with each expert gradient having dimensions matching the LoRA parameters: $r \cdot (d+l)$. The total PF buffer size is: $L \cdot M \cdot r \cdot (d+l)$.

\textbf{Modulation Parameters ($\phi^{(l)}$).} The server transmits the updated modulation vectors, which requires $L \cdot M$ scalar values. 

\textbf{Total server-to-client communication per round is:}
\begin{equation}
    \mathcal{C}_{S \rightarrow C} = \mathcal{O}(2 \cdot L \cdot M \cdot r \cdot (d+l) + L \cdot M). 
\end{equation}

Regarding the server-to-client communication, UB-SMoE introduces additional communication overhead due to the PG buffer ($L \cdot M \cdot r \cdot (d +l)$) and modulation parameters ($L \cdot M$). 

\subsubsection{Numerical communication-computation trade-off}
\label{app:numerical_comm_tradeoff}

Table~\ref{tab:numerical_comm_tradeoff} complements the asymptotic communication analysis with numerical upload and download costs. UB-SMoE has the same upload as A$^3$SMoE but a larger download due to the pseudo-gradient buffer. This is the main communication trade-off of UB-SMoE. However, for the low-resource budget $\beta_1$, UB-SMoE reduces FLOPs by $42\%$ relative to LoRA-rank baselines and improves average accuracy by $8.6\times$ over the strongest LoRA-rank baseline, FlexLoRA. FLOPs are a hardware-agnostic proxy for compute budget commonly used in scaling and resource-constrained learning analyses \citep{pfeiffer2023federated, kaplan2020scaling}; actual latency and energy depend on the device, kernel implementation, memory bandwidth, and communication stack.

\begin{table}[ht]
    \centering
    \caption{Numerical communication, computation, and low-resource accuracy trade-off. FLOPs and accuracy correspond to the lowest budget $\beta_1$ on Commonsense-15K.}
    \label{tab:numerical_comm_tradeoff}
    \begin{tabular}{lcccc}
    \toprule
    Method & Avg. upload & Avg. download & FLOPs ($\beta_1$) & Accuracy ($\beta_1$) \\
    \midrule
    UB-SMoE  & 370 MB    & 730 MB    & $4.02\times 10^{11}$ & 0.3936 \\
    A$^3$SMoE & 370 MB    & 370 MB    & $4.02\times 10^{11}$ & 0.3629 \\
    FLoRA    & 212.75 MB & 212.75 MB & $6.94\times 10^{11}$ & 0.0094 \\
    HetLoRA  & 212.75 MB & 212.75 MB & $6.94\times 10^{11}$ & 0.0079 \\
    FlexLoRA & 212.75 MB & 212.75 MB & $6.94\times 10^{11}$ & 0.0456 \\
    FLoRIST  & 212.75 MB & 370 MB    & $6.94\times 10^{11}$ & 0.0112 \\
    \bottomrule
    \end{tabular}
\end{table}

\subsection{Comparison with other Heterogeneous Methods} \label{ssec:complexity-compare}
\subsubsection{Comparison with Heterogeneous LoRA-rank Methods}

Heterogeneous LoRA-rank methods (FloRA \citep{wangflora}, HetLoRA \citep{cho2024heterogeneous}, FlexLoRA \citep{bai2024federated}, and FLoRIST \citep{ramesh2025florist}) address system heterogeneity by assigning client-specific LoRA ranks $r_c$ based on computational capabilities.

\textbf{Client-side Computation.} The client-specific computation scales with the rank value $r_c$. For most heterogenerous LoRA-rank methods (FloRA, FlexLoRA, and FLoRIST), the client-side computation is:
\begin{equation}
    \mathcal{C}^{\text{LoRA}}_{\text{client}} = \mathcal{O}(\Gamma \cdot \mathcal{B} \cdot L \cdot (d \cdot l + r_c(d+l)).
\end{equation}

The term $d \cdot l$ corresponds to the dense feed-forward network (FFN) computation, which remains unchanged regardless of the client's rank $r_c$. The term $r_c(d+l)$ is the LoRA adaptation  specific to $r_c$ of each client. Since $r_c \ll \min(d,l)$, the LoRA-specific computation is negligible compared to FFN cost. This explains why these methods achieve only $\sim5\%$ computation reduction even at the lowest budget $B_1$ (Figure \ref{fig:SMoE_flops}). Regarding HetLoRA, the computation complexity for each client is: $\mathcal{C}^{\text{HetLoRA}}_{\text{client}} = \mathcal{O}(\Gamma \cdot \mathcal{B} \cdot L \cdot (d \cdot l + r_c(d+l)) + L  \cdot r_c(d + l))$. The rank self-pruning mechanism introduces additional overhead, which requires computing singular value decomposition or gradient-based importance scores for the LoRA matrices. 

In contrast, our UB-SMoE alters the computation structure, in which the expensive FFN computation now scales with $K_c$. For a low-resource client (e.g., $K_c = 1$), this represents a significant reduction in the computational cost. 

\textbf{Server-side Computation.} To aggregate different rank size, heterogeneous LoRA methods propsoe various aggregation strategies, which lead tosignicant variation in server-side complexity.  

\begin{itemize}
    \item \textbf{FLoRA} employs a stacking-based aggregation that directly stacks the heterogeneous LoRA matrices from all clients. This requires simple weighted averaging with complexity $\mathcal{C}_{\text{server}}^{\text{FLoRA}} = \mathcal{O}(C \cdot L \cdot r_{\text{max}}(d + l))$.
    \item \textbf{HetLoRA} performs sparsity-weighted aggregation that accounts for the pruned dimensions from each client. The server must reconstruct the full delta model for each client to compute its singular values, requiring $\mathcal{C}_{\text{server}}^{\text{HetLoRA}} = \mathcal{O}(C \cdot L \cdot r_{\text{max}} \cdot d \cdot l)$. 
    \item \textbf{FlexLoRA} needs to reconstruct full-size matrices and performs SVD to redistribute this aggregated update back to client-specific ranks. The SVD decomposition of the $d \cdot l$ matrix dominates the cost at $\mathcal{C}_{\text{server}}^{\text{FlexLoRA}} = \mathcal{O}(C \cdot L(d^2 \cdot l + r_{\max} \cdot d \cdot l))$, which scales quadratically with dimension $d$. 
    \item \textbf{FLoRIST} performs SVD directly on the stacked adapter matrices rather than the full weight matrix. Given stacked matrices with total rank $R = \sum_c r_c$, the server computes a truncated SVD to extract the top-$p$ singular components. This requires $\mathcal{C}_{\text{server}}^{\text{FLoRIST}} = \mathcal{O}(C \cdot L \cdot R^2(d + l + R) + K \cdot p \cdot R(d + l)))$. 
\end{itemize}

UB-SMoE requires $\mathcal{O}(C \cdot L \cdot M \cdot r(d + l) + L \cdot M)$ for aggregation and modulation updates. While this exceeds FLoRA's simple averaging, it avoids the expensive SVD operations required by FlexLoRA and FLoRIST. 

\textbf{Communication Complexity.} Heterogeneous LoRA methods achieve communication efficiency through rank reduction at $\mathcal{O}(L \cdot r_c(d + l))$ per client. UB-SMoE incurs higher communication costs: $\mathcal{C}^{\text{UB-SMoE}}_{C \rightarrow S} = \mathcal{O}(L \cdot M \cdot r(d+l))$ and $\mathcal{C}^{\text{UB-SMoE}}_{S \rightarrow C} = \mathcal{O}(2 \cdot L \cdot M \cdot r(d+l))$. 

A critical limitation of heterogeneous LoRA methods is that the merged weights $W_0 + \frac{\alpha}{r}BA$ \emph{remain dense during inference}, leaving deployment latency unchanged. UB-SMoE preserves sparsity at inference time, enabling true resource-adaptive deployment.

\subsubsection{Comparison with Heterogeneous Sparsity Methods}
\textbf{Computation Complexity.} A$^3$SMoE and UB-SMoE share the same clide-side complexity for forward and backward passes: $\mathcal{O}\big(\Gamma \cdot \mathcal{B} \cdot L (M \cdot d + K_c \cdot d \cdot l)\big)$. UB-SMoE introduces additional overhead from the pseudo-gradient mechanism at $\mathcal{O}(\Gamma \cdot L \cdot M \cdot r(d + l))$. Note that $B \cdot K_c \cdot d \cdot l \gg M \cdot r(d+l)$. For server-side computation, both methods require $\mathcal{O}\big(\Gamma \cdot \mathcal{B} \cdot L (d \cdot l + r_c(d + l))\big)$ for parameter aggregation. UB-SMoE adds $\mathcal{O}( L \cdot M) \cdot r(d+l))$ for pseudo-gradient buffer computation and $\mathcal{O}(L \cdot M)$ for modulation updates. 

\textbf{Communication Complexity.} Both methods have similar upload complexity. UB-SMoE requires $2 \times$ the download bandwidth due to transmitting the pseudo-gradient buffer alongside model parameters, enabling continuous learning signals for all experts.

\subsubsection{Summary}
Table \ref{tab:complexity} provides a comparison across all methods. The key insights from this complexity analysis are: (i) SMoE-based methods provide significant computaional reduction through conditional computation, unlike LoRA-rank methods where the dense FFN dominates; (ii) UB-SMoE trades increased communication for dramatically reduced computation at low-resource clients; and (iii) UB-SMoE maintains sparsity at inference, enabling deployment on resource-constrained devices.

Note that in this analysis, heterogeneous sparsity methods operate on SMoE architectures, while heterogeneous LoRA-rank methods are analyzed on dense transformer models. If LoRA-rank methods were applied to SMoE architectures, an additional factor of $M$ (the number of experts) would be introduced in all LoRA-related complexity terms, as each expert requires separate LoRA adaptation. This would further \emph{increase the complexity} of LoRA-rank methods.

\section{Additional Experiments} \label{appendix:additional-result}

\subsection{Analysis of Computational Cost (FLOPs))} \label{ssec:compution-cost}

This analysis empirically validates our central argument that heterogeneous LoRA-rank methods fail to provide meaningful computational savings for resource-constrained clients. We measure the total FLOPs required per forward-backward pass across four client capability levels ($\beta_1 - \beta_4$), using a fixed sequence length of 256 tokens. Detailed FLOPs measurements for each configuration are provided in Tab.~\ref{tab:config}.

As shown in Fig. \ref{fig:SMoE_flops}, heterogeneous LoRA-rank methods achieve only a marginal reduction: even at the lowest budget $\beta_1$, computation decreases only $\sim 5 \%$ (from $3.44 \times 10^{11}$ to $3.26 \times 10^{11}$ FLOPs). This confirms our analysis in Sec. \ref{sec:complexity}: LoRA-rank methods provide minimal computational savings since LoRA adaptation cost $\mathcal{O}(r_c(d+l))$ is negligible compared to dense FFN computation $\mathcal{O}(d \cdot l)$. In contrast, heterogeneous sparsity methods achieve substantial reduction scaling with client budget: $69.4\%$ at $\beta_3$, $53.5\%$ at $\beta_2$, and $45.0\%$ at $\beta_1$. This reduction directly stems from activating fewer experts, where computational cost scales as $\mathcal{O}(K_c \cdot d \cdot l)$. 

\begin{figure}[ht]
	\centering
	\includegraphics[width=0.75\linewidth]{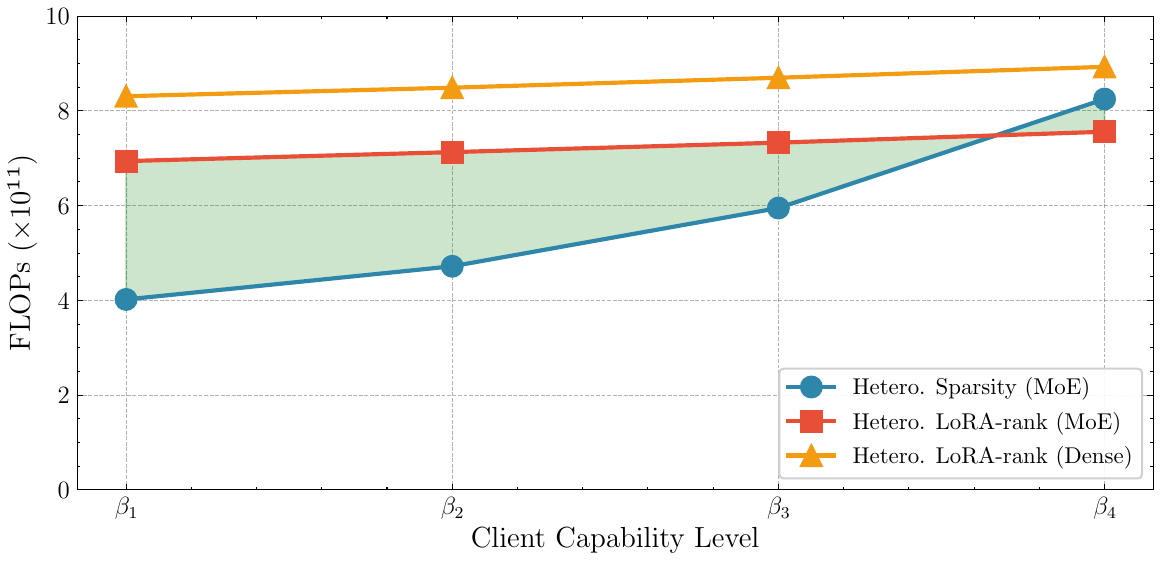}
	\caption{Computational requirements (in FLOPS) for two strategies aimed at adapting FMs to clients with varying resource levels, from high-capability ($\beta_4$) to low-capability ($\beta_1$). The heterogeneous LoRA-rank method reduces the rank of LoRA matrices. The Heterogeneous Sparsity method reduces the number of activated model components (experts).}
	\label{fig:SMoE_flops}
\end{figure}

These results demonstrate that sparsity methods are fundamentally better suited for resource-adaptive federated fine-tuning. 

\subsection{Scalability Analysis with 32 Clients} \label{ssec:scalability-32}
\definecolor{HummingBird}{rgb}{0.792,0.941,0.972}
\begin{wraptable}{r}{0.5\textwidth}
    \vspace{-30pt}
    \centering
    \caption{32-client results with uniform distribution and 50\% participation rate. Performance averaged over all computation budgets.}
    \label{tab:32-uniform}
    \resizebox{0.5\textwidth}{!}{%
    \begin{tabular}{l cccc >{\columncolor{HummingBird}}c}
    \toprule
    \textbf{Dataset} & HetLoRA & FlexLoRA & A$^3$SMoE & SMoE-LLB & Ours \\
    \midrule
    ARC-Challenge & 0.0960 & 0.1817 & \textbf{0.3174} & 0.3165 & 0.3129 \\
    ARC-Easy & 0.1105 & 0.2127 & \textbf{0.4145} & 0.4052 & 0.4071 \\
    BoolQ & 0.3448 & 0.3419 & 0.5785 & 0.5639 & \textbf{0.5814} \\
    HellaSwag & 0.0635 & 0.0854 & 0.2245 & 0.2277 & \textbf{0.2355} \\
    OpenBookQA & 0.1295 & 0.1655 & \textbf{0.3165} & 0.2925 & 0.3115 \\
    PIQA & 0.2364 & 0.2493 & 0.4905 & 0.4853 & \textbf{0.5122} \\
    Social IQa & 0.0861 & 0.1827 & \textbf{0.3912} & 0.3826 & 0.3888 \\
    WinoGrande & 0.1375 & 0.2364 & 0.4649 & 0.4684 & \textbf{0.4795} \\
    \midrule
    Average & 0.1505 & 0.2070 & 0.3998 & 0.3928 & \textbf{0.4036} \\
    \bottomrule
    \end{tabular}%
    }
    \vspace{-30pt}
\end{wraptable}

To evaluate scalability beyond the standard 16-client setting, we conduct experiments with 32 clients under three system heterogeneity patterns. All experiments use Commonsense-15K training data and evaluate on 8 reasoning datasets.

\subsubsection{Uniform Distribution with Partial Participation}
Table~\ref{tab:32-uniform} presents results with 32 clients using uniform resource distribution and 50\% participation rate per round. This setting tests the method's robustness when only half the clients participate in each federated round.

\subsubsection{Long-Tail Heterogeneity Pattern}
\definecolor{HummingBird}{rgb}{0.792,0.941,0.972}
\begin{wraptable}{r}{0.5\textwidth}
    \vspace{-20pt}
    \centering
    \caption{32-client results under long-tail heterogeneity (75\% low-resource clients).}
    \label{tab:32-longtail}
    \resizebox{0.5\textwidth}{!}{%
    \begin{tabular}{l cccc >{\columncolor{HummingBird}}c}
    \toprule
    \textbf{Budget} & HetLoRA & FlexLoRA & A$^3$SMoE & SMoE-LLB & Ours \\
    \midrule
    $\beta_1$ & 0.0087 & 0.0239 & 0.3554 & \textbf{0.3567} & 0.2190 \\
    $\beta_2$ & 0.0505 & 0.0619 & \textbf{0.2007} & 0.1948 & 0.1854 \\
    $\beta_3$ & 0.2807 & 0.4359 & 0.3010 & 0.2928 & \textbf{0.3715} \\
    $\beta_4$ & 0.3928 & 0.5121 & 0.3381 & 0.3399 & \textbf{0.4430} \\
    \midrule
    Average & 0.1832 & 0.2585 & 0.2988 & 0.2961 & \textbf{0.3047} \\
    \bottomrule
    \end{tabular}%
    }
    \vspace{-30pt}
\end{wraptable}

Table~\ref{tab:32-longtail} shows results under a long-tail distribution where 75\% of clients have minimal resources ($\beta_1$), 12.5\% have medium-low ($\beta_2$), 6.25\% have medium-high ($\beta_3$), and 6.25\% have high resources ($\beta_4$). This pattern represents realistic scenarios where most edge devices are resource-constrained.

\subsubsection{Reverse Long-Tail Heterogeneity Pattern}
\definecolor{HummingBird}{rgb}{0.792,0.941,0.972}
\begin{wraptable}{r}{0.5\textwidth}
    \vspace{-20pt}
    \centering
    \caption{32-client results under reverse long-tail heterogeneity (75\% high-resource clients).}
    \label{tab:32-reverse}
    \resizebox{0.5\textwidth}{!}{%
    \begin{tabular}{l cccc >{\columncolor{HummingBird}}c}
    \toprule
    \textbf{Budget} & HetLoRA & FlexLoRA & A$^3$SMoE & SMoE-LLB & Ours \\
    \midrule
    $\beta_1$ & 0.0197 & 0.0243 & 0.2979 & 0.3340 & \textbf{0.3423} \\
    $\beta_2$ & 0.0771 & 0.1234 & 0.3550 & 0.2825 & \textbf{0.3624} \\
    $\beta_3$ & 0.3174 & 0.4028 & 0.4684 & 0.4603 & \textbf{0.4721} \\
    $\beta_4$ & 0.4826 & 0.4899 & 0.5380 & 0.4943 & \textbf{0.5319} \\
    \midrule
    Average & 0.2242 & 0.2601 & 0.4148 & 0.3928 & \textbf{0.4272} \\
    \bottomrule
    \end{tabular}%
    }
    \vspace{-30pt}
\end{wraptable}

Table~\ref{tab:32-reverse} presents results under reverse long-tail distribution where 75\% of clients have high resources ($\beta_4$), 12.5\% have medium-high ($\beta_3$), 6.25\% have medium-low ($\beta_2$), and 6.25\% have low resources ($\beta_1$). This pattern tests performance when most clients can utilize full model capacity.

\paragraph{Key Findings.}
\begin{itemize}[leftmargin=*,itemsep=2pt]
    \item UB-SMoE achieves the best average performance across all three heterogeneity patterns.
    \item Under the challenging long-tail pattern, UB-SMoE outperforms the strongest baseline (A$^3$SMoE) by 2.0\% despite 75\% of clients being severely resource-constrained.
    \item Under reverse long-tail, UB-SMoE achieves 0.4272 average accuracy, outperforming A$^3$SMoE by 3.0\%.
    \item With 50\% participation rate, UB-SMoE maintains strong performance (0.4036), demonstrating robustness to partial client availability.
\end{itemize}

\subsection{How to determine the optimal pseudo-gradient scaling factor $\rho_c$?}
\begin{wraptable}{r}{0.5\textwidth}
    \centering
    \caption{Ablation study on techniques of the scaling factor $\rho$. The results are averages across 8 commonsense reasoning datasets.}
    \label{tab:scaling-factor}
    \resizebox{0.5\textwidth}{!}{%
    \begin{tabular}{ccccc}
    \toprule
    None & root square & linear & inverse root square & log \\
    \midrule
    0.2953 & \textbf{0.4267} & 0.3270 & 0.3181 & 0.3549 \\
    \bottomrule
    \end{tabular}%
    }
    \vspace{-10pt}
\end{wraptable}

The pseudo-gradient scaling factor $\rho_c$ is a critical component of our Universal Pseudo-Gradient (PG) mechanism, designed to compensate for the limited gradient coverage experienced by resource-constrained clients. This section systematically evaluates different scaling functions to determine the optimal formulation. 

Let $\bar{K} = \sum_c p_c K_c$ denote the weighted average system sparsity across all clients. We investigate five scaling strategies:

\begin{enumerate}
    \item \textbf{None} (No scaling): $\rho_c = 1$
    
    \item \textbf{Root Square} (Proposed): $\rho_c = \sqrt{\frac{\bar{K}}{K_c}}$
    
    \item \textbf{Linear}: $\rho_c = \frac{\bar{K}}{K_c}$
    
    \item \textbf{Inverse Root Square}: $\rho_c = \sqrt{\frac{K_c}{\bar{K}}}$
    
    \item \textbf{Logarithmic}: $\rho_c = \log\left(\frac{\bar{K}}{K_c} + 1\right)$
\end{enumerate}

Tab. \ref{tab:scaling-factor}, we can observe that: (i) Without any scaling, the system achieves the lowest performance (0.2953), confirming that scaling is essential for heterogeneous federated SMoE; (ii) The root square scaling significantly outperforms all alternatives with 0.4267 average accuracy. These results confirm that the root square scaling prevents excessive noise injection while still providing meaningful gradient signals to low-resource clients.

\subsection{What is the optimal $\phi$ range?} \label{ssec:phi_range}
\begin{wraptable}{r}{0.5\textwidth}
    \centering
    \caption{Ablation study on the impact of the modulation parameter range [$\phi_{\min}, \phi_{\max}$]. Performance is averaged across all 8 datasets.}
    \label{tab:alpha}
    \resizebox{0.5\textwidth}{!}{%
    \begin{tabular}{cccc}
    \toprule
    $\phi \in [-0.5, 0.5]$ & $\phi \in [-1.0, 1.0]$ & $\phi \in [-2.0, 2.0]$ & $\phi \in [-0.5, 1.5]$ \\
    \midrule
    0.1114 & \textbf{0.4267} & 0.1303 & 0.2937 \\
    \bottomrule
    \end{tabular}%
    }
\end{wraptable}

The parameter $\phi$ in our DMR mechanism directly influences expert selection by adjusting router logits. We investigate how this range affects the balance between two competing objectives: (i) correcting utilization imbalance by promoting under-utilized experts, and (ii) preserving semantic relevance by not forcing the selection of irrelevant experts. Tab. \ref{tab:alpha} presents the results across four range configurations. The optimal average performance (0.4267) is achieved with a range of $[-1.0, 1.0]$. This range provides the necessary flexibility to both promote under-utilized experts and penalize over-utilized ones without destabilizing the heterogeneous federated environment. 

\subsection{Optimal number of preserved experts}
\begin{wraptable}{r}{0.5\textwidth}
    \fontsize{5pt}{5pt}\selectfont
    \centering
    \caption{Ablation on the number of preserved experts}
    \label{tab:preserved_experts}
    \resizebox{0.5\textwidth}{!}{%
    \begin{tabular}{cccc}
    \toprule
    0 & 1 & 2 & 4 \\
    \midrule
    0.2684 & 0.2449 & \textbf{0.4267} & 0.2905 \\
    \bottomrule
    \end{tabular}%
    }
\end{wraptable}

In our DMR mechanism, we introduce a design choice: the number of experts to preserve from the original router's top selections before applying utilization-based modulation. This hyperparameter controls the trade-off between maintaining semantic relevance (based on the input token) and promoting utilization balance (based on client-specific computational budget). As shown in Tab. \ref{tab:preserved_experts}, preserving 2 experts yields optimal performance (0.4267). No preservation (0.2684) risks selecting irrelevant experts, while excessive preservation (4 experts, 0.2905) limits DMR's ability to promote under-utilized experts.

\subsection{Analysis of Expert Utilization Balance Using Gini Coefficients}  \label{ssec:utilization-gini}

\begin{figure}[ht]
	\centering
	\includegraphics[width=1\linewidth]{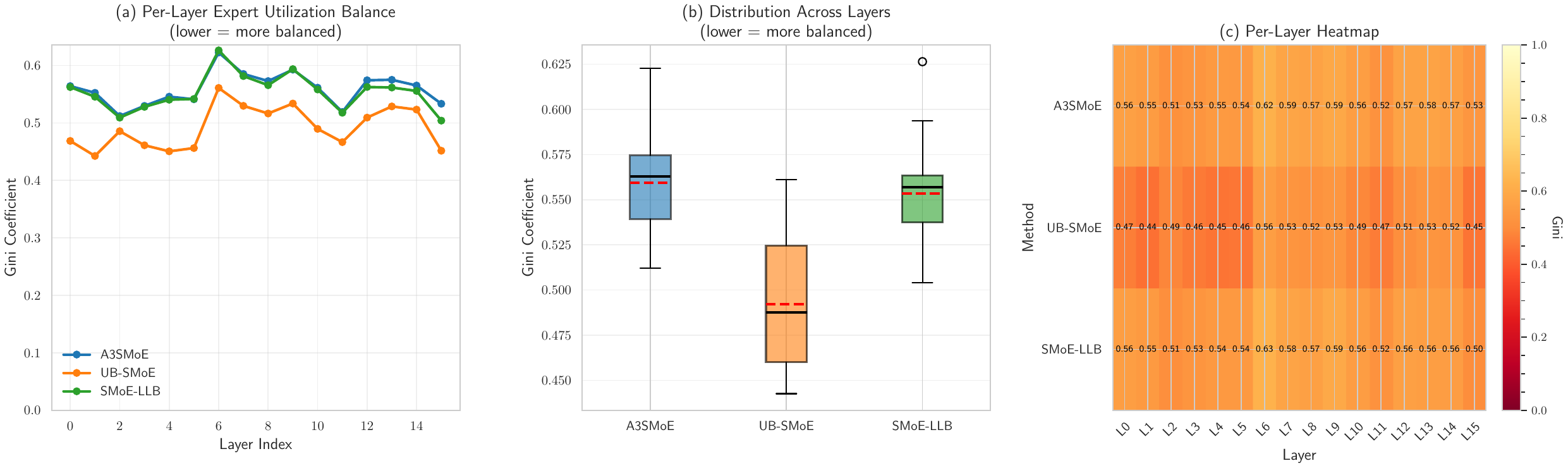}
	\caption{Expert Utilization Gini Analysis.}
	\label{fig:gini}
\end{figure}

To complement our entropy-based analysis in Sec. \ref{ssec:utilization-entropy}, we employ the Gini coefficient as an alternative metric for evaluating expert utilization balance. Given an SMoE layer with $M$ experts and global utilization rates $\{\tilde{u}_i\}^M_i$, the Gini coefficient is computed as:
\begin{equation}
    G = \frac{\sum_{i=1}^{M} \sum_{j=1}^{M} |\tilde{u}_i - \tilde{u}_j|}{2M \sum_{i=1}^{M} \tilde{u}_i}
\end{equation}
$G$ is bounded in $[0,1]$, where $0$ indicates perfect equality (all experts equally utilized) and $1$ represents maximal inequality (a single expert dominates all activations). From Fig. \ref{fig:gini}, several key observations can be made:
\begin{itemize}
    \item UB-SMoE achieves consistently lower Gini coefficients across all layers: The average Gini coefficient for UB-SMoE is 0.49, representing a $12.5\%$ relative improvement over A$^3$SMoE ($0.56$) and $13.2\%$ over SMoE-LLB ($0.56$).
    \item UB-SMoE exhibits lower variance in balance across layers: Fig. \ref{fig:gini}(b) reveals that UB-SMoE not only achieves a lower median Gini coefficient but also demonstrates reduced variance across the 16 SMoE layers. The heatmap in Fig. \ref{fig:gini}(c) provides granular visibility into per-layer performance. This analysis indicates that our DMR mechanism provides consistent load balancing benefits. 
    \item Middle layers exhibit higher utilization imbalance: Across all methods, the Gini coefficients are high across layers L6-L9. This is because early layers capture low-level patterns that are relatively uniform across inputs, while late layers perform task-specific processing. However, middle layers occupy a critical transition zone where surface-level features transform into abstract semantic representations. This abstraction process promotes expert specialization since routers learn to associate specific experts with semantic categories. 
\end{itemize}

\subsection{Per-Layer Entropy Analysis Using Entropy} \label{ssec:per-layer-entropy}

\begin{figure}[ht]
	\centering
	\includegraphics[width=0.7\linewidth]{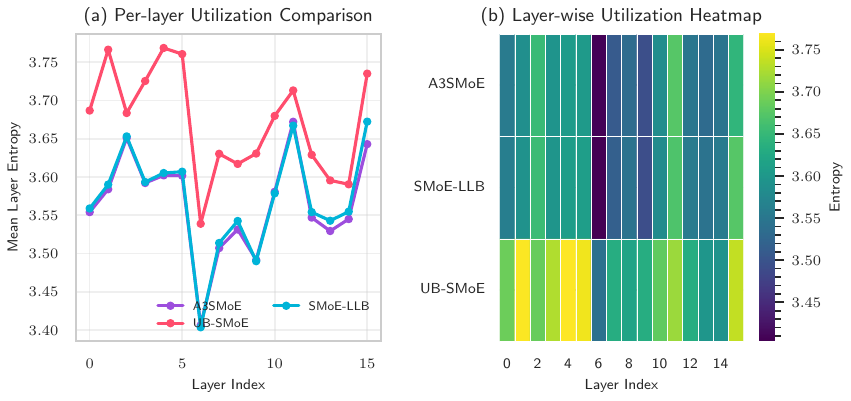}
	\caption{Layer-wise expert utilization entropy comparison. Higher entropy indicates more balanced utilization. UB-SMoE maintains superior entropy across all 16 SMoE layers.}
	\label{fig:layer_utilization_entropy}
\end{figure}

While Sec. \ref{ssec:utilization-entropy} presents aggregate entropy statistics, understanding layer-wise behavior is essential for validating that DMR provides consistent balancing across the model depth.

Fig.\ref{fig:layer_utilization_entropy} presents per-layer expert utilization entropy across all 16 SMoE layers. UB-SMoE achieves consistently higher entropy across all layers, with an average of 3.72 compared to 3.52 for A³SMoE and 3.54 for SMoE-LLB, representing a 5.7\% and 5.1\% relative improvement, respectively. The improvement is most pronounced in middle layers (L6–L10), where utilization imbalance tends to be most severe due to semantic abstraction inducing expert specialization. As discussed in Section G.\ref{ssec:utilization-gini}, these layers occupy a critical transition zone where semantic abstraction promotes expert specialization. The DMR mechanism effectively counteracts this tendency by promoting under-utilized experts within semantically relevant candidate sets. Furthermore, the heatmap reveals that UB-SMoE maintains lower variance in entropy across layers (standard deviation of 0.08) compared to A³SMoE (0.12) and SMoE-LLB (0.11).

\subsection{Analysis of $\phi$-utilization correlation}
\begin{wrapfigure}{r}{0.5\textwidth}
	\centering
	\includegraphics[width=1\linewidth]{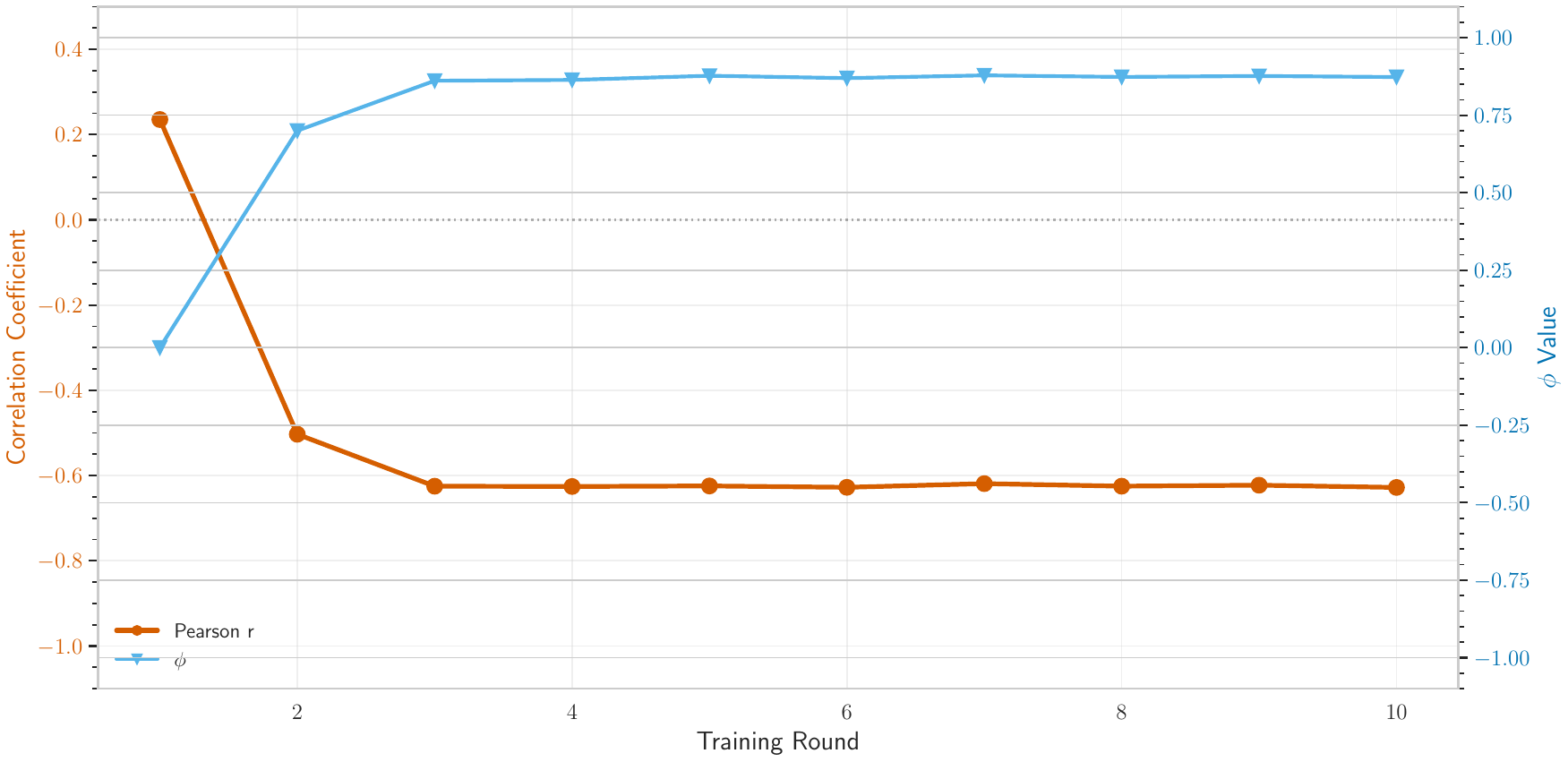}
	\caption{Dynamics of Dynamic Modulated Routing (DMR) across training rounds. We track the correlation between learned modulation parameters $\phi$ and global expert utilization rates, alongside the Pearson $r$ and Spearman $\rho$ correlation coefficients.}
	\label{fig:evolution-phi}
    \vspace{-40pt}
\end{wrapfigure}

To validate the effectiveness of the $\phi$ values in modulating expert utilization, we analyze the correlation between the learned $\phi$ values and global expert utilization rates across training rounds. Particularly, For each expert $i$ at layer $l$, we record the global utilization rate $\tilde{u}_i^{(l)}$ (computed via Eq. \ref{eq:global-utilization}) and the corresponding learned modulation parameter $\phi_i^{(l)}$. 

We employ two correlation coefficients to characterize the $\phi$-utilization relationship. The Pearson correlation coefficient measures linear relationships:
\begin{equation}
    r = \frac{\sum_{i=1}^{n}(\phi_i - \bar{\phi})(\tilde{u}_i - \bar{\tilde{u}})}{\sqrt{\sum_{i=1}^{n}(\phi_i - \bar{\phi})^2} \sqrt{\sum_{i=1}^{n}(\tilde{u}_i - \bar{\tilde{u}})^2}},
\end{equation}
where $n = L \cdot M$ is the total number of expert-layer pairs, and $\bar{\phi}$ and $\bar{\tilde{u}}$ denote sample means. This coefficient ranges from $-1$ (perfect negative correlation) to $+1$ (perfect positive correlation).

From Fig. \ref{fig:evolution-phi} and Fig. \ref{fig:phi_training}, we observe: 
\begin{itemize}
    \item (i) The correlation coefficients stabilize at approximately $r \approx -0.62$, indicating that DMR has converged to a consistent balancing policy. The moderate-to-strong negative correlation confirms that over-utilized experts receive negative modulation (reducing their selection probability) while under-utilized experts receive positive modulation (increasing their selection probability). 
    \item (ii) The regression slope becomes increasingly negative over rounds, stabilizing around $-1.5$. This indicates that DMR applies proportionally stronger corrections as training progresses. 
\end{itemize}

\begin{figure}[ht]
    \centering
    \includegraphics[width=0.19\textwidth]{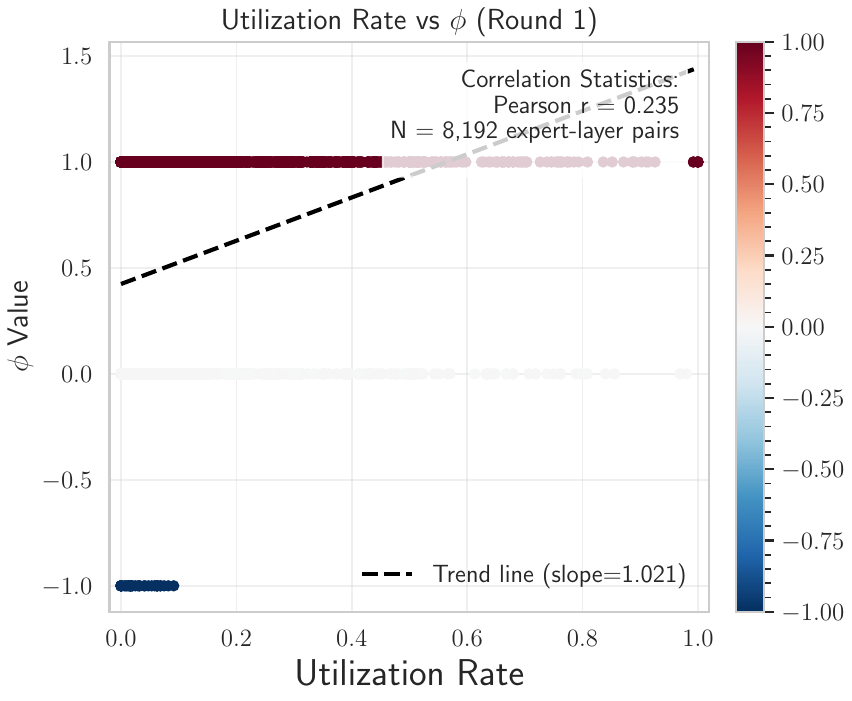}
    \includegraphics[width=0.19\textwidth]{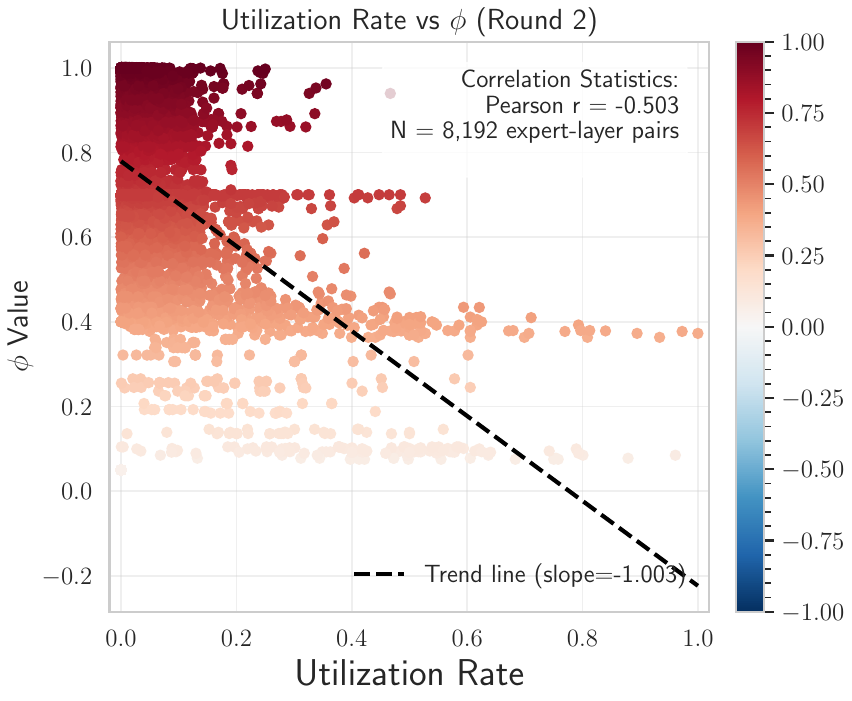}
    \includegraphics[width=0.19\textwidth]{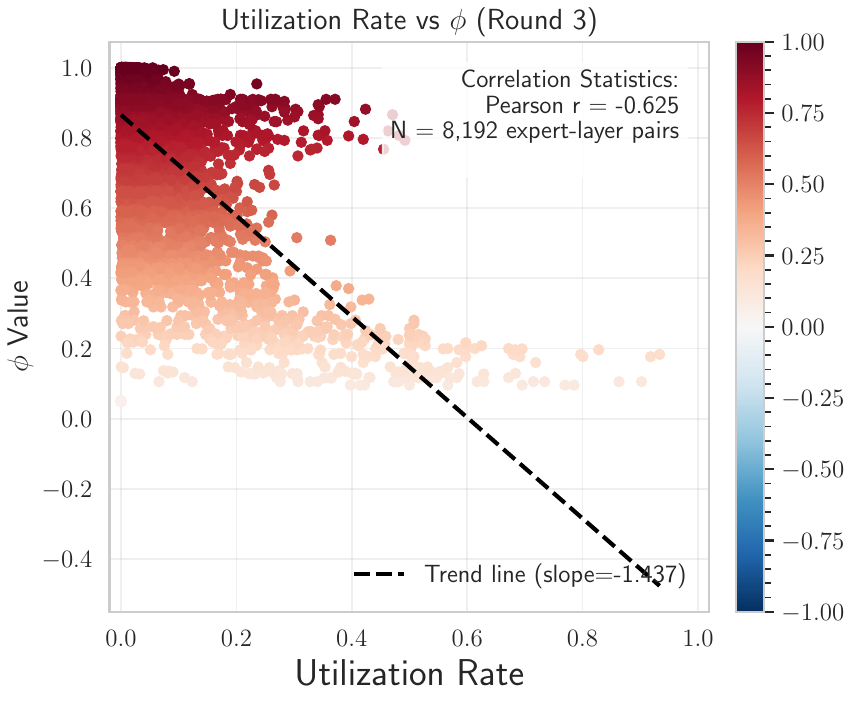}
    \includegraphics[width=0.19\textwidth]{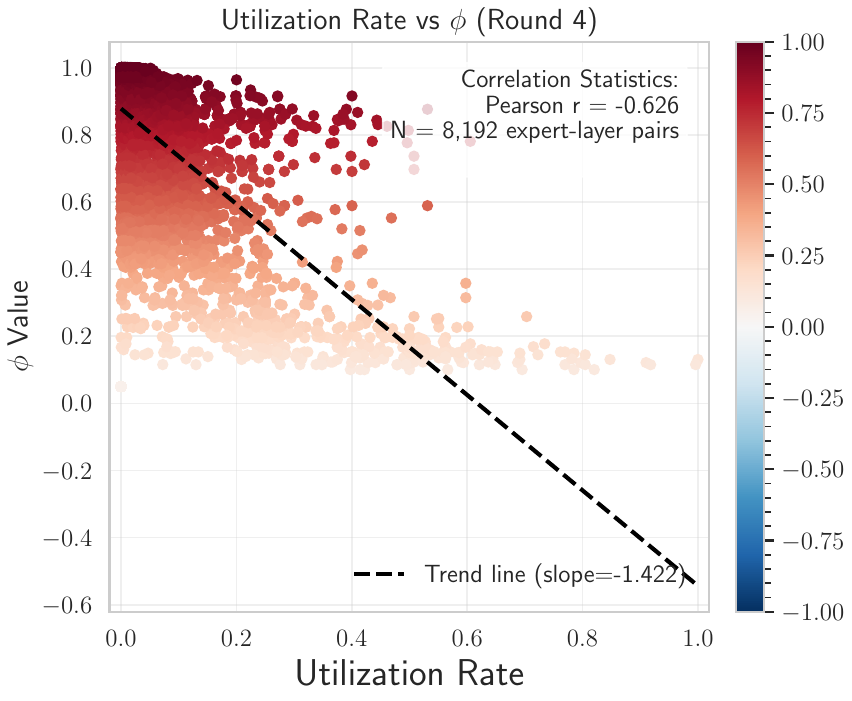}
    \includegraphics[width=0.19\textwidth]{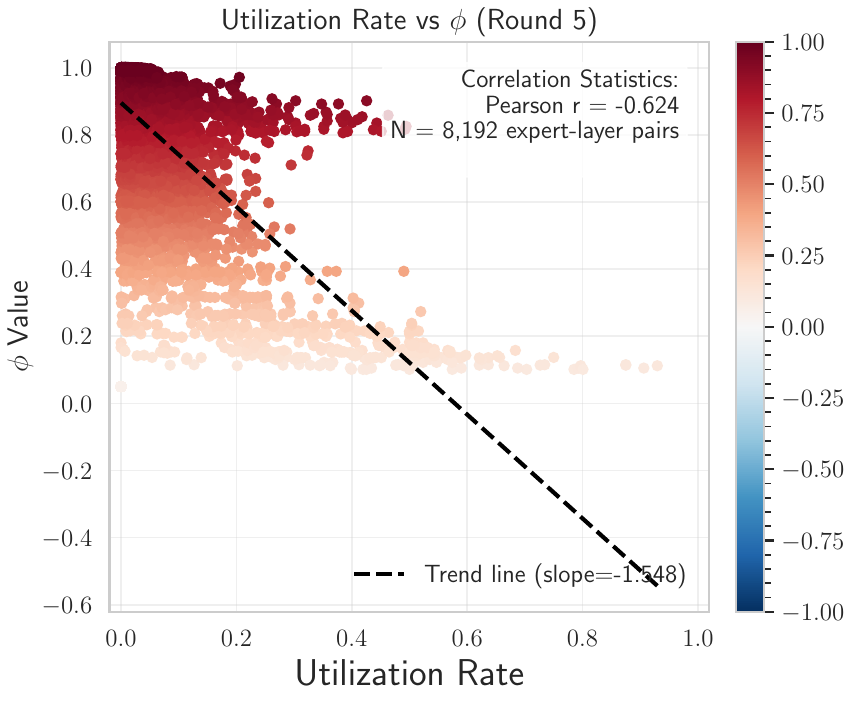}
    
    \includegraphics[width=0.19\textwidth]{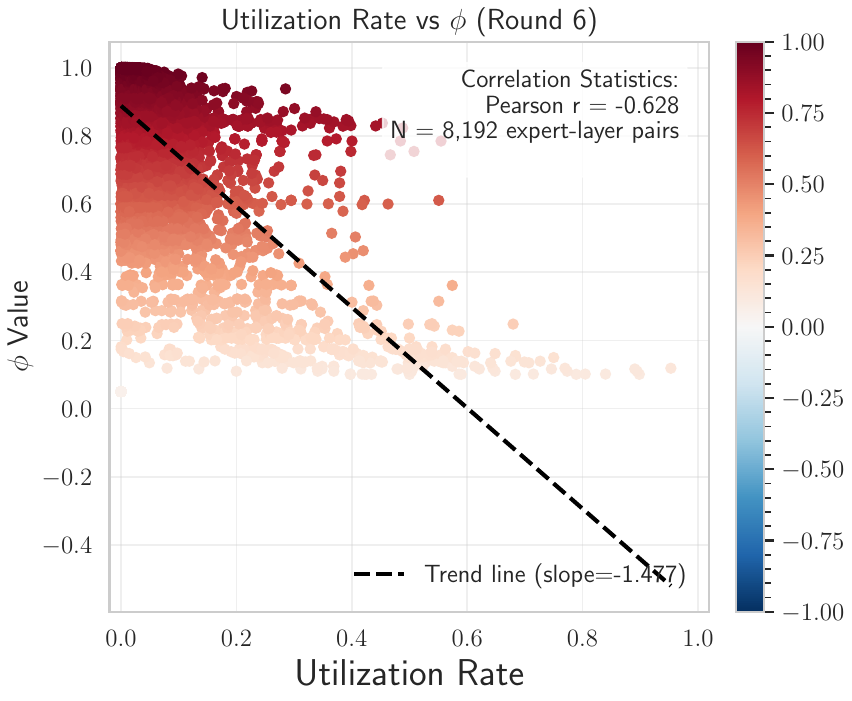}
    \includegraphics[width=0.19\textwidth]{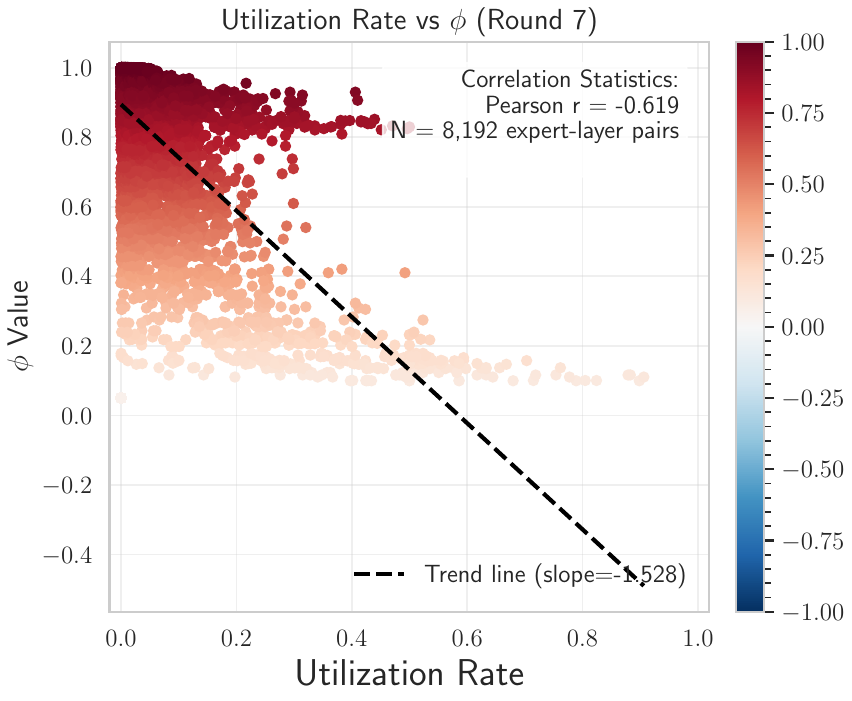}
    \includegraphics[width=0.19\textwidth]{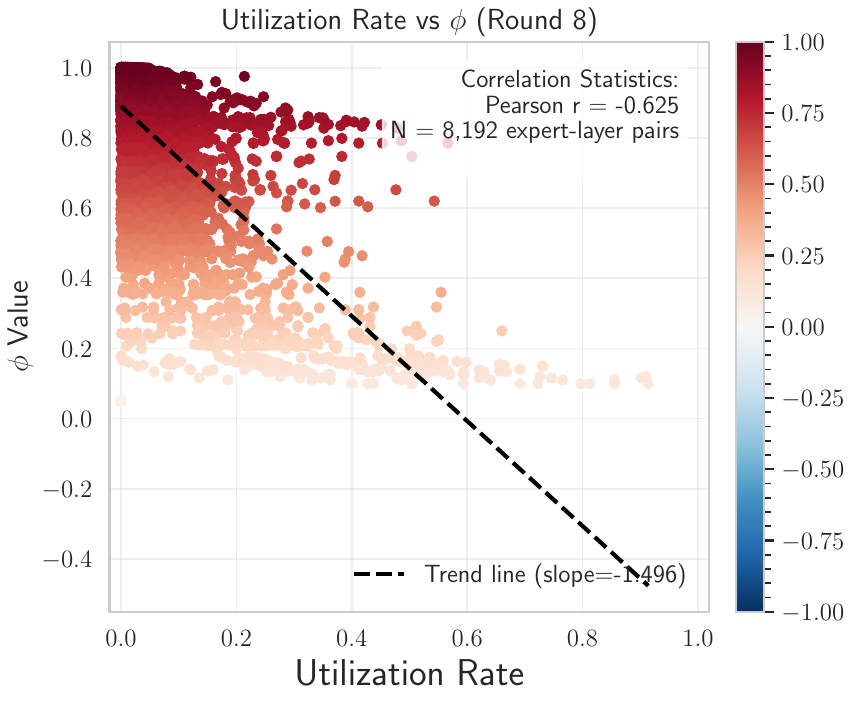}
    \includegraphics[width=0.19\textwidth]{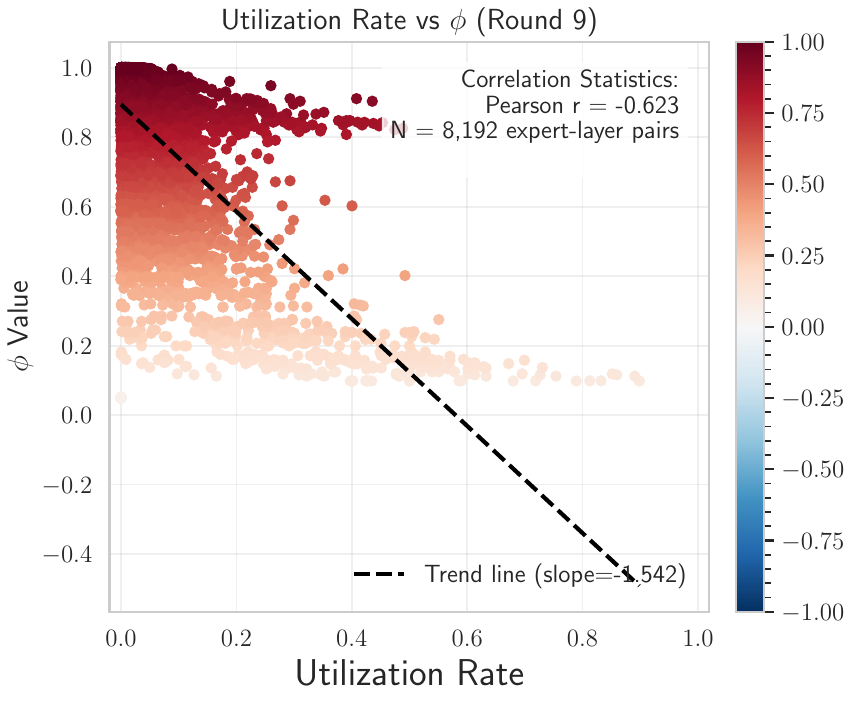}
    \includegraphics[width=0.19\textwidth]{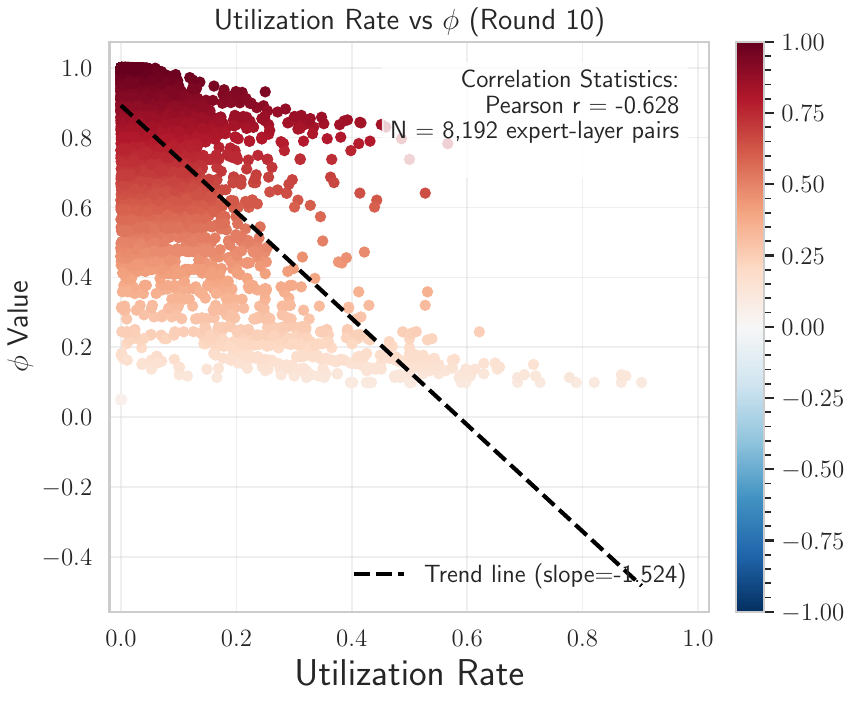}
    
    \caption{Correlation analysis between expert utilization rates and DMR modulation parameters $\phi$ over training. Each scatter plot visualizes 8,192 expert-layer pairs with corresponding Pearson ($r$) correlation coefficients. Trend lines (dashed) indicate the linear regression slope.}
    \label{fig:phi_training}
\end{figure}

\section{Discussion and Limitations} \label{sec:discussion}

Our work reveals that the fundamental bottleneck in heterogeneous federated fine-tuning lies not in adapter rank but in the dense feed-forward computation. While heterogeneous LoRA-rank methods achieve only $\sim$5\% computation reduction even at the lowest budget level, UB-SMoE provides up to 45.0\% reduction on low-resource clients by directly addressing this bottleneck through conditional computation. Furthermore, our work indicates that local load-balancing methods in standard SMoE are insufficient for heterogeneous federated fine-tuning. Therefore, the global utilization imbalance should be focused on to completely address the expert utilization imbalance problem. 

Our method has two primary limitations. First, the effectiveness of DMR diminishes as the number of preserved experts approaches the client's activation budget $K_c$. For clients where $K_c \leq N_{p}$, the router entirely determines expert selection based on input token semantics, and the utilization-aware modulation provides no additional benefit. Moreover, the focus of this work is on addressing expert utilization imbalance and gradient sparsity in heterogeneous federated SMoE. While UB-SMoE requires $2\times$ download bandwidth for pseudo-gradient buffers (see Tab.~\ref{tab:complexity}), exploring communication-efficient variants (e.g., compression, quantization, selective transmission) is deferred to future work.

\end{document}